%% file: main_arXiv.tex
\documentclass[twoside,11pt]{article}

\usepackage{blindtext}

\usepackage[preprint]{jmlr2e} 

\input{header}
\usepackage{macros}

\usepackage{enumitem}

\usepackage{lastpage}
\jmlrheading{23}{2026}{1-\pageref{LastPage}}{1/21; Revised 5/22}{9/22}{21-0000}{Daichi Kuroda, Maximilien Dreveton, Matthias Grossglauser, and Patrick Thiran}

\ShortHeadings{Axiomatic Hierarchical Clustering}{Kuroda, Dreveton, Grossglauser, and Thiran}
\firstpageno{1}

\begin{document}

\title{
Hierarchical Clustering Can Jointly Satisfy Richness, Consistency, and Scale Invariance
}

\author{\name Daichi Kuroda \email daichi.kuroda@epfl.ch \\
\addr School of Computer and Communication Sciences\\
École Polytechnique Fédérale de Lausanne (EPFL)\\
Lausanne, 1015, Switzerland
\AND
\name Maximilien Dreveton \email maximilien.dreveton@univ-eiffel.fr \\
\addr LAMA, UMR-CNRS 8050, \\
Université Gustave Eiﬀel \\
5 Bd Descartes, 77454 Marne-la-Vallée, France
\AND
\name Matthias Grossglauser \email matthias.grossglauser@epfl.ch \\
\addr School of Computer and Communication Sciences\\
École Polytechnique Fédérale de Lausanne (EPFL)\\
Lausanne, 1015, Switzerland 
\AND
\name Patrick Thiran \email patrick.thiran@epfl.ch \\
\addr School of Computer and Communication Sciences\\
École Polytechnique Fédérale de Lausanne (EPFL)\\
Lausanne, 1015, Switzerland
}
\editor{My editor}

\maketitle

\begin{abstract}
\input{abstract}
\end{abstract}

\begin{keywords}
 clustering; hierarchical clustering; axiomatic clustering; unsupervised learning; ultrametrics. 
\end{keywords}


\input{text}

\appendix
\input{appendix}
\vskip 0.2in
\bibliography{biblio}

\end{document}

%% file: abstract.tex
Despite its ubiquity, clustering lacks a universally accepted definition of what is a cluster. 
Kleinberg's Impossibility Theorem formalizes this difficulty by showing that no flat clustering method can simultaneously satisfy three natural axioms: scale invariance, richness, and consistency. 
In this paper, we ask whether 
this impossibility persists when the output is a hierarchy rather than a single partition.
We show that, in contrast to the flat clustering setting, the hierarchical analog of these axioms are jointly satisfiable. 
In fact, there exist uncountably many hierarchical clustering methods satisfying these
axioms, which we call admissible. We explicitly construct several admissible methods, including methods based on well-separated clusters and a non-binary version of single linkage. 
For certain pairs of admissible methods, the hierarchy produced by one always refines that produced by the other. This refinement relation defines a partial order on the class of admissible methods. 
This partially ordered set has no greatest element and contains uncountably many pairwise incompatible maximal elements, revealing substantial diversity among admissible methods. Nevertheless, this diversity is constrained: every admissible method contains a hierarchy of sufficiently well-separated clusters, and every finite collection of admissible methods shares such a nontrivial common backbone. 


%% file: text.tex
\section{Introduction}
\label{sec:intro}
Clustering is one of the most fundamental tasks in unsupervised learning. 
Given pairwise dissimilarities between data points, clustering seeks to uncover meaningful group structure without access to labels or ground truth. Yet this task is intrinsically underdetermined: there is no universally accepted definition of a cluster. Axiomatic frameworks therefore provide a principled way to state and compare desirable properties of clustering methods. 

A central result in this direction is Kleinberg’s Impossibility Theorem \citep{kleinberg2002impossibility}, that shows that no flat clustering method mapping dissimilarities to partitions can simultaneously satisfy three natural axioms: scale invariance, richness, and consistency. 
This theorem has had a profound influence on the theory of clustering: It implies that some trade-off among the axioms, the problem formulation, and/or the clustering output, is unavoidable. 
A large body of subsequent work explored ways to circumvent this impossibility by weakening or modifying the axioms, 
or by modifying the input or output  space~\citep{ben2008measures,zadeh2012uniqueness,strazzeri2022possibility,willson2024axioms}. 
In contrast, in this paper, we ask a different question:
\begin{quote}
\emph{Does Kleinberg's impossibility persist when the output is a hierarchy rather than a flat partition?}
\end{quote}

Out of the three Kleinberg axioms, \emph{consistency} is arguably the most contentious for the flat clustering setting. It requires that if all within-cluster dissimilarities (with respect to the output partition) are decreased and all cross-cluster dissimilarities are increased, then the output clustering must remain unchanged. This formalizes the intuition that strengthening the evidence for a clustering should not overturn it. However, such transformations can also create an arbitrarily strong \emph{substructure} within a cluster, thus revealing finer distinctions that a flat partition is unable to express. This tension lies at the heart of Kleinberg’s impossibility result. Figure~\ref{fig:axiom-motivation} illustrates this phenomenon: Starting from a single cluster $C_1 \cup C_2$, a permissible strengthening can make each subcluster, $C_1$ and $C_2$, much tighter than their union, thereby revealing two new clusters that a flat clustering method that respects the consistency axiom would be forced to ignore. 

\begin{figure}[!ht]
  \centering
    \centering
    \includegraphics[width=0.25\textwidth]{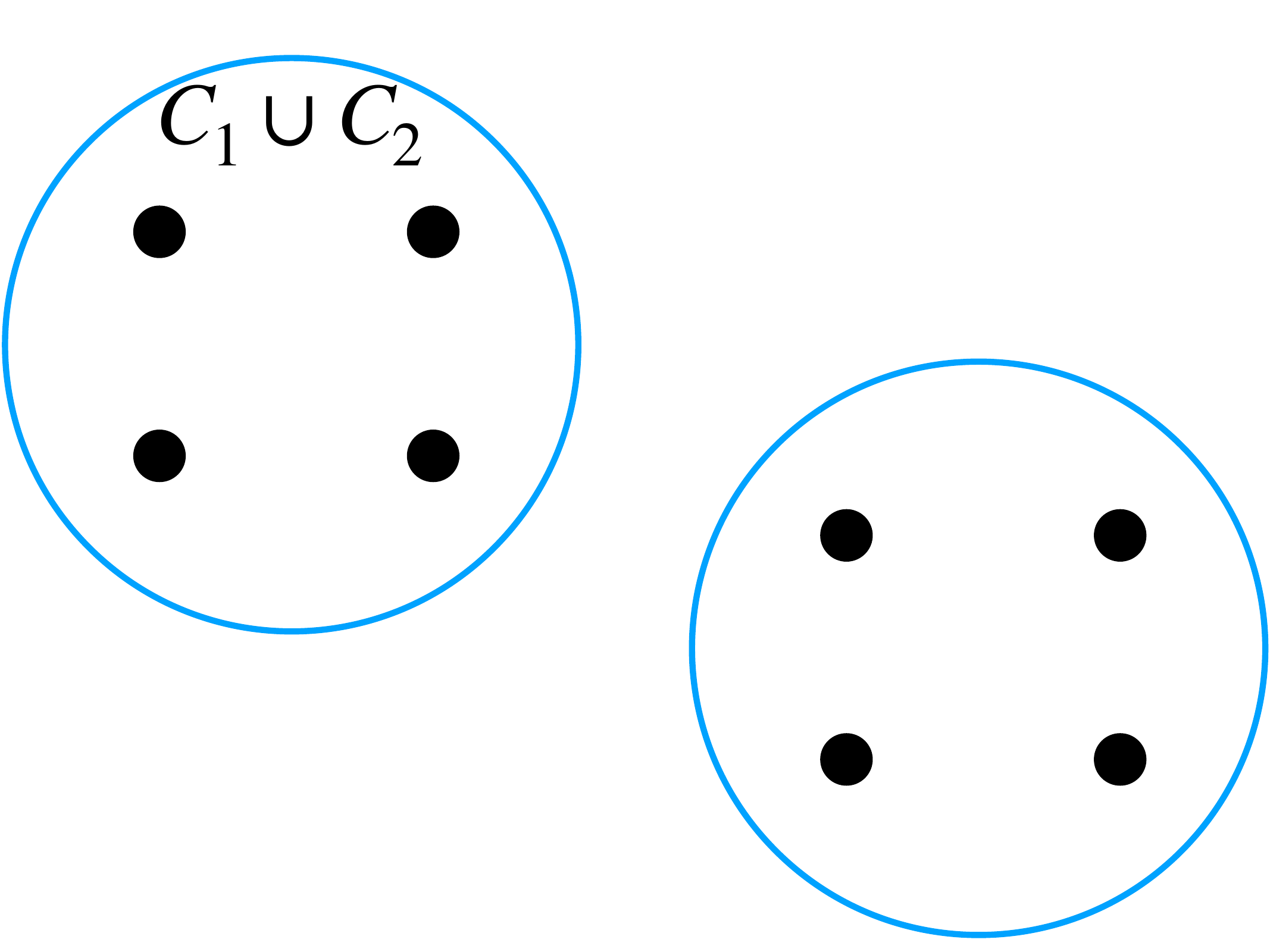}
    \hspace{90pt}
    \centering
    \includegraphics[width=0.25\textwidth]{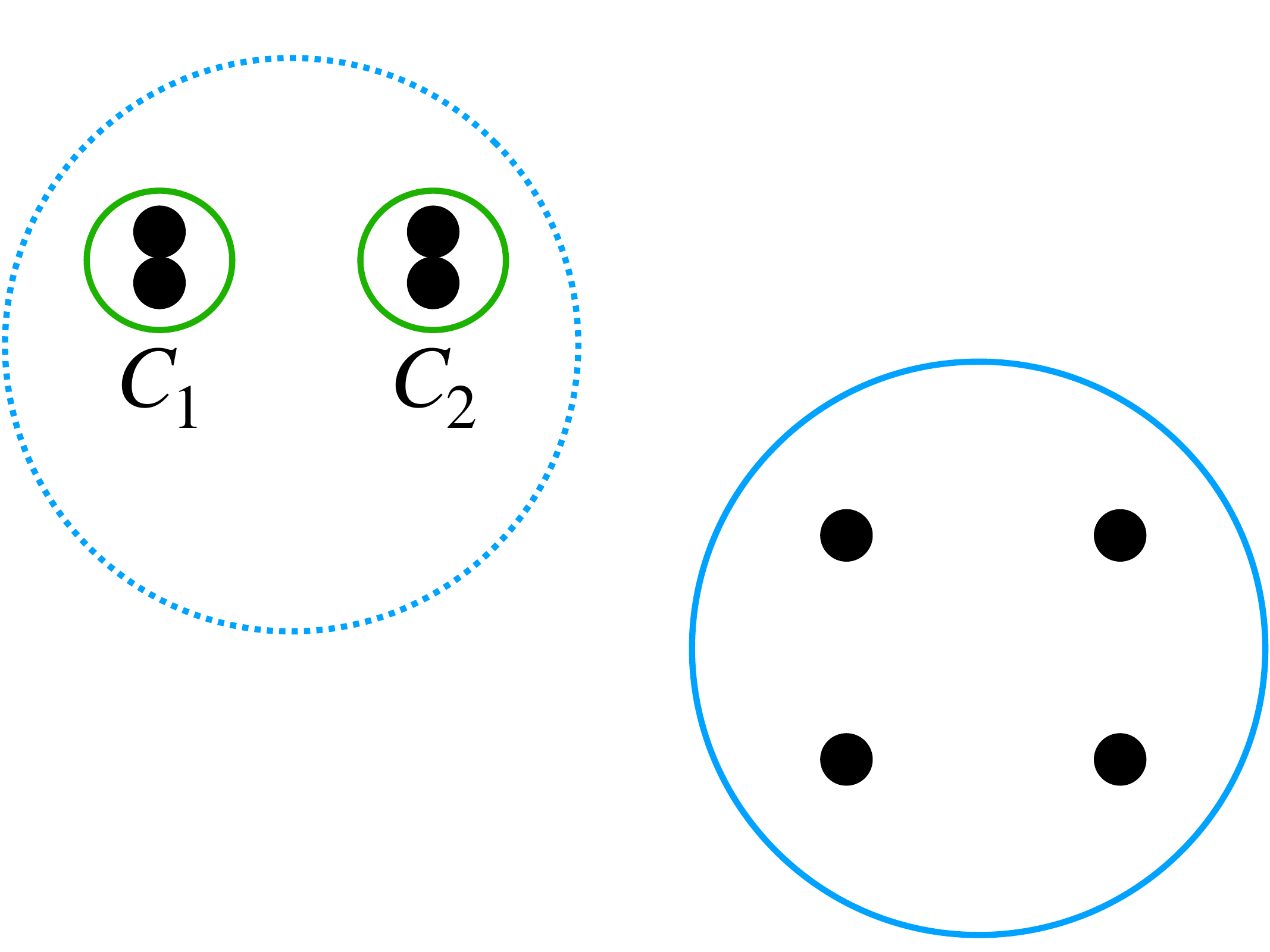}
  \caption{Strengthening the cluster $C_1 \cup C_2$ can create two well-separated subclusters, $C_1$ and $C_2$, which a Kleinberg-consistent flat clustering method is nevertheless forced to ignore. Whereas hierarchical clustering methods can have both $C_1 \cup C_2$, and $C_1$ and $C_2$ as they are properly nested.
  }
  \label{fig:axiom-motivation}
\end{figure}

This observation suggests moving beyond flat clustering by considering hierarchical clustering methods. Indeed, a hierarchical clustering can preserve the desirable cluster $C_1 \cup C_2$ while simultaneously incorporating the newly emerging subclusters $C_1$ and $C_2$: strengthening the evidence for an existing cluster need not prevent the method from expressing finer structure within that cluster. 
Formally, a hierarchical clustering method is a map from dissimilarities to hierarchies, represented as laminar families of clusters. We formulate hierarchical analogs of scale invariance, richness, and consistency, and additionally impose permutation invariance as a basic symmetry requirement.

Our first main result is positive: Contrary to the flat setting, these axioms are \emph{jointly satisfiable} in the hierarchical setting. In fact, there exist uncountably many admissible methods. We explicitly construct several admissible methods, including hierarchies based on well-separated clusters and a non-binary version of single linkage. In contrast, non-binary variants of other classical linkage methods (such as complete, average, Ward, centroid, and median linkage) fail to satisfy the axioms. 

Beyond existence, we study the global structure of the family of admissible methods under the refinement order. This family forms a remarkably diverse partially ordered set: it has uncountable height, width, and cellularity, and contains uncountably many pairwise incompatible maximal elements. In particular, there is no greatest admissible method. Nevertheless, the axioms impose a nontrivial common structure. We prove a \emph{backbone property}: every admissible method refines a hierarchy of sufficiently well-separated clusters. Moreover, every finite collection of admissible methods shares such a common backbone. Thus, the axioms allow substantial diversity while still enforcing agreement on sufficiently well-separated cluster structure.

We then consider a stronger requirement that is specific to hierarchical clustering. When the input dissimilarity is an ultrametric,\footnote{An ultrametric is a dissimilarity satisfying the strong triangle inequality \(u(x,y)\leq \max\{u(x,z),u(y,z)\}\) for all \(x,y,z\); such a dissimilarity canonically encodes a hierarchy through its nested distance balls.} it already encodes a canonical hierarchical structure. It is therefore natural to require a hierarchical clustering method to recover this hierarchy exactly. We call this property \emph{exactness on ultrametrics}. The resulting class of strongly admissible methods remains uncountable and retains the backbone and maximality phenomena described above. 
However, the additional requirement sharpens the order-theoretic structure: unlike the canonical admissible class, the strongly admissible class has a least element under refinement.

 Finally, motivated by practical clustering pipelines, we study preprocessing transformations of the input dissimilarity. We derive general conditions under which composition with such a transformation preserves each axiom, and illustrate these principles with several common preprocessing operations.

 Our work connects several strands of the clustering literature. 
 It contributes to the efforts to understand and bypass Kleinberg's impossibility theorem by modifying the axioms and/or the clustering problem formulation~\citep{ben2008measures,cohen2018clustering,willson2024axioms}. 
 It is also related to axiomatic and structural characterizations of hierarchical clustering methods~\citep{carlsson2010characterization,ackerman2010characterization,ackerman2016characterization}, as well as to population-level axiomatizations of hierarchical clustering~\citep{thomann2015axiomatic,arias2025axiomatic}. 
 
 In particular, hierarchical outputs have previously been shown to support positive axiomatic results:  \citet{carlsson2010characterization} characterize single linkage using an axiom system that differ substantially from Kleinberg's framework, whereas \citet{ackerman2016characterization} characterize linkage-based hierarchical methods using locality and a weaker consistency that only considers moving farther apart clusters that are already well-separated. 
 Both works retain the numerical scales at which clusters merge, so their outputs are height-labeled hierarchies (dendrograms), equivalently represented by ultrametrics. They therefore study maps from input dissimilarities to output ultrametrics. Instead, we consider methods returning unweighted hierarchies, which retain only the nested cluster structure. 
 Within this less structured output framework, our axioms more closely mirror Kleinberg's original requirements while imposing fewer structural constraints. Accordingly, rather than characterizing a unique method or a prescribed algorithmic family, we study the much more diverse class of hierarchical methods satisfying these axioms. Section~\ref{sec:related_works} provides a detailed comparison.

\subsection{Definitions and Notation}
Throughout the paper, $\cX$ is a finite set of items with cardinality $n$. 
Moreover, as the labeling of the elements of $\cX$ is irrelevant to an unsupervised task such as clustering, we implicitly assume $\cX = [n]$, where $n = |\cX|$ is finite and $[n] = \{1,\dots,n\}$. 
To avoid trivial cases, we always assume $n \ge 4$.\footnote{For \(n\le 2\), a hierarchy can contain only the root and singleton leaves and is therefore necessarily the star hierarchy. For \(n=3\), there is only one non-star hierarchy up to relabeling.} 
A \emph{dissimilarity function} on $\cX$ is a map $d \colon \cX\times\cX\to\R_{\ge 0}$ such that $d(x,x)=0$ and $d(x,y)=d(y,x)>0$ for all distinct $x, y\in\cX$. We do not assume that $d$ obeys the triangle inequality, hence $d$ is not necessarily a distance. We denote by $\cD(\cX)$ the set of all dissimilarity functions on~$\cX$.
We use the convention $\min \emptyset = \infty$. 

A \emph{cluster} $C$ is a nonempty subset of $\cX$, and a \emph{partition} of~$\cX$ is a set $\cC=\{C_1,\dots,C_k\}$ of pairwise disjoint clusters whose union is $\cX$. Let $\bC:= \cX \setminus C$ denote the complement of the cluster $C$ with respect to $\cX$. We write $\cP(\cX)$ for the set of all partitions of $\cX$.

\subsection{Structure of the Paper} 
The remainder of the paper is organized as follows.
In Section~\ref{sec:achievability}, we state the main axioms and establish the corresponding achievability results.
In Section~\ref{sec:admissible_HCs}, we present several admissible hierarchical clustering methods.
In Section~\ref{sec:structure}, we study the structural properties of the set of admissible methods.
In Section~\ref{sec:extension}, we add exactness on ultrametrics to the axiom system and characterize preprocessing transformations that preserve the axioms.
In Section~\ref{sec:related_works}, we discuss related work and, in Section~\ref{sec:conclusion}, conclude the paper.
Omitted proofs and technical lemmas are provided in the Appendix. 

\subsection{Use of Large Language Models (LLM)}
Whereas the conceptualization of the project underlying this paper was carried out entirely by the authors, we also used GPT-5.6 Sol to improve clarity, wording, presentation, and to assist in identifying and correcting minor errors and typos in earlier drafts. 
However, the vast majority of the mathematical content and proofs were generated by us. The only mathematical results for which LLMs were used to develop proof techniques are Proposition~\ref{prop:stable_cluster_is_admissible}, Lemmas~\ref{lem:incompatible_family} and \ref{lem:pairwise_incompatible}, and Proposition~\ref{prop:general}(iv).
We also used the model to assist with the numerical simulations in Appendix~\ref{app:size_Tglob}.
All text proposed by LLM was rigorously checked and edited by the authors, and we take full responsibility for this article.

\section{From Impossibility to Achievability}
\label{sec:achievability}
 In this section, we first recall Kleinberg's impossibility theorem for flat clustering and then show that, in contrast, the hierarchical analogs of his axioms are jointly satisfiable.
 
\subsection{Kleinberg's Impossibility Theorem for Flat Clustering}
\label{subsec:kleinberg_flat}
We begin by recalling the axiomatic framework for flat clustering, introduced by \cite{kleinberg2002impossibility}. A \emph{flat clustering method} is a map
$$f \colon \cD(\cX) \to \cP(\cX)$$ 
such that $f(d)$ is a partition of $\cX$ for every dissimilarity $d \in \cD(\cX)$. 

We first introduce a transformation of dissimilarities, called strengthening\footnote{This transformation is called a $\Gamma$-transformation in~\citet{kleinberg2002impossibility}, where $\Gamma$ is a partition.} that reinforces a given clustering. Intuitively, a strengthening moves points within the same cluster closer to each other, and points in different clusters further apart.

\begin{definition}[$\cC$-strengthening]
\label{def:strengthening}
Let $\cC \in \cP(\cX)$ and $d, d' \in \cD(\cX)$ be two dissimilarity functions. We say that $d' \in \cD(\cX)$ is a $\cC$-strengthening of $d$ if for all clusters $C \in \cC$,
    \begin{itemize}
    \itemsep0em
        \item (Intra-cluster contraction) $d'(x,y) \le d(x,y)$ for all $x,y \in C$; 
        \item (Inter-cluster expansion) $d'(x,y) \ge d(x,y)$ for all $x\in C$, $y \notin C$.
    \end{itemize}
\end{definition}

We now formalize the properties that a clustering method should satisfy. These axioms capture natural requirements such as invariance to scaling, expressiveness, and stability under strengthening.

\begin{definition}
\label{def:admissible_Kleinberg}
A flat clustering method $f$ is:
\begin{itemize}[nosep]
\item \textbf{scale invariant} if $f(\beta \,d) = f(d)$ for every $d\in \cD(\cX)$ and every $\beta>0$;
\item \textbf{rich} if $f$ is surjective, that is, for every $\cC \in \cP(\cX)$, there exists $d \in \cD(\cX)$ such that $f(d) = \cC$; 
\item \textbf{consistent} if $f(d) = f(d')$ for every $d\in \cD(\cX)$ and every $f(d)$-strengthening $d'$ of $d$.
\end{itemize}
\end{definition} 

These axioms appear natural when considered individually, 
although the consistency axiom is sometimes viewed as contentious because it permits the creation of new clusters
(see Section~\ref{sec:intro}). 
Kleinberg established that these three requirements are not jointly satisfiable.

\begin{theorem}[\cite{kleinberg2002impossibility}]
\label{thm:kleinberg}
There is no flat clustering method $f$ that is simultaneously scale invariant, rich, and consistent.  
\end{theorem}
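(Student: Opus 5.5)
The plan is to follow Kleinberg's original argument. It rests on one structural fact: under scale invariance and consistency, the range of $f$ is an \emph{antichain} in the refinement order on $\cP(\cX)$. Richness then forces the range to be all of $\cP(\cX)$. That is impossible because $\cP(\cX)$ contains comparable pairs, for instance the all-singletons partition refines the one-cluster partition $\{\cX\}$, and these are distinct when $n\ge 2$. So the whole proof reduces to showing that if $\Gamma, \Gamma' \in \mathrm{Range}(f)$ and $\Gamma$ is a proper refinement of $\Gamma'$, we get a contradiction.

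The key tool is a notion of \emph{forcing}. Say a pair of reals $(a,b)$ with $0<a<b$ is \emph{$\Gamma$-forcing} if every dissimilarity $d$ satisfies $f(d)=\Gamma$ whenever $d(x,y)\le a$ for all $x,y$ in the same cluster of $\Gamma$ and $d(x,y)\ge b$ for all $x,y$ in different clusters. I first show that every $\Gamma$ in the range admits a forcing pair. Take $d_0$ with $f(d_0)=\Gamma$. Let $a_0$ be the largest intra-cluster value of $d_0$ and $b_0$ the smallest inter-cluster value. Rescale $d_0$ by a large $\beta$ so that $\beta b_0 > a_0$; by scale invariance $f(\beta d_0)=\Gamma$. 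Now choose $a=\min\{\beta a_0', \dots\}$, concretely $a$ the smallest positive intra-cluster value, and $b$ the largest inter-cluster value of $\beta d_0$. Any $d$ satisfying the bounds for $(a,b)$ is then a $\Gamma$-strengthening of $\beta d_0$, so consistency gives $f(d)=\Gamma$. This works because intra-cluster distances are reduced to $\le a$ and inter-cluster distances are raised to $\ge b$. The genuine constraint is only that $a$ must be at most every intra value of $\beta d_0$ and $b$ at least every inter value. Forcing pairs with that property always exist, and the ratio $b/a$ can be large; that is fine.

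Next, assume $\Gamma$ properly refines $\Gamma'$, both in the range, with forcing pairs $(a,b)$ and $(a',b')$. Using scale invariance, I rescale so that the two pairs are positioned suitably. Then I build a single $d$ with three levels of values. Pairs within the same $\Gamma$-cluster get a tiny value $\varepsilon\le \min(a,a')$. Pairs in different $\Gamma$-clusters but the same $\Gamma'$-cluster get an intermediate value $m$ with $b\le m\le a'$. Pairs in different $\Gamma'$-clusters get a huge value $M\ge \max(b,b')$. Under the $(a,b)$ forcing condition, $f(d)=\Gamma$. Under the $(a',b')$ condition, $f(d)=\Gamma'$. Since $\Gamma\neq\Gamma'$, this is the contradiction.

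The main obstacle is ensuring the intermediate value $m$ exists, i.e.\ arranging $b\le a'$. This is exactly where scale invariance is used a second time. If $(a',b')$ is $\Gamma'$-forcing, then so is $(\lambda a',\lambda b')$ for every $\lambda>0$. To see this, apply the forcing property to $d/\lambda$ and then use $f(d)=f(d/\lambda)$. Choosing $\lambda\ge b/a'$ therefore yields $b\le \lambda a'$. I must also check that the definition of forcing is applied with non-strict inequalities consistently, and that $\varepsilon>0$ keeps $d$ a valid dissimilarity. With that, the antichain property follows, and richness is contradicted as described.
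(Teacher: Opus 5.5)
Your argument is correct and is essentially Kleinberg's original proof, which the paper cites rather than reproduces: consistency gives a $\Gamma$-forcing pair for every $\Gamma$ in the range, scale invariance together with a three-level dissimilarity shows the range is an antichain, and richness contradicts this. The only difference is cosmetic: you rescale the forcing pair $(a',b')$ in advance so that a single $d$ conforms to both partitions, whereas Kleinberg builds $d$ conforming to the coarser partition and then rescales $d$ itself.

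In your first step, the rescaling by $\beta$ does nothing, since it does not change the relative order of intra- and inter-cluster values. Also, the smallest intra-cluster value and the largest inter-cluster value need not satisfy $a<b$. As you note, though, any $a$ below all intra-cluster values and any $b$ above all inter-cluster values works, so consistency alone already supplies a forcing pair with $a<b$.
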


\subsection{Achievability of Hierarchical Clustering}
\label{subsec:achievability}

We now turn to hierarchical clustering. Unlike flat clustering, which outputs a single partition of the dataset, hierarchical clustering outputs \emph{nested clusters}. Formally, a hierarchy on $\cX$ is a rooted tree whose leaves are the singleton sets $\{x\}$ for $x \in \cX$ and whose root is the full set $\cX$. We represent such a tree by its associated nested set of clusters. 

\begin{definition}[Hierarchy]
\label{def:hierarchy}
Let $2^{\cX}$ denote the power set of $\cX$. 
A set $\Psi \subseteq 2^{\cX} \setminus \{\emptyset\}$ is a \emph{hierarchy} on $\cX$ if it satisfies:
\begin{enumerate}[nosep]
\item \emph{Laminarity}: For all $C_1,C_2\in\Psi$, we have either $C_1\cap C_2=\emptyset$, or $C_1\subseteq C_2$, or $C_2\subseteq C_1$. 
    \item \emph{Root and leaves:} $\cX \in \Psi$ and $\{x\} \in \Psi$ for every $x \in \cX$.
\end{enumerate}
We denote by $\cT(\cX)$ the set of all hierarchies on $\cX$.
\end{definition}
\begin{example}
Let $\cX = \{1,2,3,4\}$. The set
$\Psi = \big\{\{1,2,3,4\}, \{1,2\}, \{3,4\}, \{1\},\{2\},\{3\},\{4\}\big\}$
is a hierarchy representing a binary tree with root $\{1,2,3,4\}$ and two children $\{1,2\}$ and $\{3,4\}$. 
\end{example}

A \emph{hierarchical clustering method} is a map 
\[
 T \colon \cD(\cX) \to \cT(\cX)
\]
such that for every $d \in \cD(\cX)$, $T(d)$ is a hierarchy on $\cX$. 
Kleinberg’s axioms of scale invariance, consistency, and richness admit natural analog in the hierarchical setting. 

\begin{definition}
\label{def:admissible}
A hierarchical clustering method $T$ on $\cX$ is 
\begin{enumerate}[nosep]
\item \textbf{scale invariant} if \(T(\beta \, d) = T(d) \) for every $\beta>0$ and every $d\in\cD(\cX)$;
\item \textbf{partition rich} if for every partition $\cC\in\cP(\cX)$, there exists $d \in \cD(\cX)$ such that $\cC \subseteq T(d)$; 
\item \label{def:admissible_consistency}
 \textbf{partition consistent} 
 if for every $d \in \cD(\cX)$ and every partition $\cC \in \cP(\cX)$ such that $\cC \subseteq T(d)$, we have $\cC \subseteq T(d')$ for every $\cC$-strengthening $d'$ of $d$\new{\footnote{For a hierarchy represented as a rooted tree, partitions \(\cC\subseteq T(d)\) are in bijection with vertex cuts separating the root from the leaves: the vertices in the cut are precisely the nodes corresponding to the clusters in \(\cC\).}}; 
\item \textbf{permutation invariant} if $T( d_{\phi}) \weq \phi \cdot T(d)$ for every permutation $\phi$ of $\cX$, where $d_{\phi}(x,y) := d(\phi^{-1}(x),\phi^{-1}(y))$ and $\phi \cdot T(d) = \{ \{\phi(x) \colon x\in C\} \colon C\in T(d) \}$. 
\end{enumerate}
A hierarchical clustering method $T$ is \emph{admissible} if it satisfies all four axioms.
\end{definition}
Definition~\ref{def:admissible} is a direct extension of Kleinberg’s original axioms to the hierarchical setting. The flat clustering method $f$ is replaced by a hierarchical clustering method $T$, and the three axioms of Definition~\ref{def:admissible_Kleinberg} are reformulated accordingly. We also explicitly include permutation invariance, requiring that the output depends only on the underlying dissimilarity structure and not on the labeling of the points. In Kleinberg’s original framework, this requirement is implicit, as clusterings are defined directly on sets rather than on a labeled representation. 

We refer to the set of four axioms in Definition~\ref{def:admissible} as the canonical axiom system $\axiadm$, and we call \emph{admissible} the hierarchical clustering methods that satisfy $\axiadm$. 
There are, however, other reasonable axiomatizations of hierarchical clustering. In particular, there is a well-known correspondence between ultrametrics and hierarchies, which is not addressed by the axioms of $\axiadm$ because it has no direct counterpart in flat clustering. In Section~\ref{sec:ultrametric}, we study the consequences of such an additional requirement. Moreover, further alternative axioms are discussed in Appendix~\ref{app:alternative_axioms}, where we show that most of the results established for this canonical axiom system in fact remain valid under these alternative formulations. All the axioms and their alternatives are cataloged in Table~\ref{tab:axioms-summary}.

The first main result of this paper is that the hierarchical clustering problem does admit (many) methods that satisfy $\axiadm$, in contrast with flat clustering.

\begin{theorem}
\label{thm:achievability}
There exist uncountably many admissible methods.
\end{theorem}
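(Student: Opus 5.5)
The plan is to exhibit an explicit one-parameter family $\{T_\alpha\}_{\alpha \ge 1}$ of admissible methods based on well-separated clusters, and then to check that distinct parameters give distinct methods. For a cluster $C$ with $|C|\ge 2$ and $C \neq \cX$, write $\mathrm{diam}_d(C) = \max_{x,y\in C} d(x,y)$ and $\mathrm{sep}_d(C) = \min_{x\in C,\, y\notin C} d(x,y)$. Call $C$ \emph{$\alpha$-separated} for $d$ if $\alpha\,\mathrm{diam}_d(C) < \mathrm{sep}_d(C)$. Define $T_\alpha(d)$ to consist of $\cX$, all singletons, and all $\alpha$-separated clusters.

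\textbf{Step 1 (laminarity).} I expect this to be the main point to verify, since $T_\alpha(d)$ must be a hierarchy for every $d$. Suppose $C_1, C_2 \in T_\alpha(d)$ overlap without nesting. Pick $x \in C_1\cap C_2$, $y \in C_1\setminus C_2$ and $z\in C_2\setminus C_1$; both clusters are then nontrivial.
\begin{itemize}
\item Since $z\notin C_1$, we get $d(x,z) \ge \mathrm{sep}_d(C_1) > \alpha\,\mathrm{diam}_d(C_1) \ge \alpha\, d(x,y)$.
\item Symmetrically, since $y \notin C_2$, we get $d(x,y) > \alpha\, d(x,z)$.
\end{itemize}
Combining the two gives $d(x,y) > \alpha^2 d(x,y)$. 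This is impossible because $\alpha\ge 1$ and $d(x,y)>0$. Together with the root and the leaves, this makes $T_\alpha(d)$ a hierarchy.

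\textbf{Step 2 (the four axioms).}
\begin{itemize}
\item \emph{Scale invariance.} The defining inequality is homogeneous in $d$, so it is unchanged under $d \mapsto \beta d$.
\item \emph{Permutation invariance.} Diameters and separations are transported by $\phi$, so $C$ is $\alpha$-separated for $d$ if and only if $\phi(C)$ is $\alpha$-separated for $d_\phi$.
\item \emph{Partition richness.} Given $\cC \in \cP(\cX)$, set $d(x,y)=1$ within blocks and $d(x,y)=\alpha+1$ across blocks. Every non-singleton block $C\neq\cX$ then has $\alpha\,\mathrm{diam}_d(C) = \alpha < \alpha+1 = \mathrm{sep}_d(C)$. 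Singletons and $\cX$ are always included, so $\cC\subseteq T_\alpha(d)$.
\item \emph{Partition consistency.} Let $\cC\subseteq T_\alpha(d)$ and let $d'$ be a $\cC$-strengthening of $d$. Blocks that are singletons or $\cX$ remain trivially. For any other block $C\in\cC$, all pairs inside $C$ are intra-block, so $\mathrm{diam}_{d'}(C)\le \mathrm{diam}_d(C)$. Every pair $(x,y)$ with $x\in C$ and $y\notin C$ lies across blocks, so $\mathrm{sep}_{d'}(C)\ge \mathrm{sep}_d(C)$. Hence $C$ stays $\alpha$-separated and $\cC \subseteq T_\alpha(d')$.
\end{itemize}

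\textbf{Step 3 (uncountably many distinct methods).} Take $1\le \alpha<\beta$ and choose $t\in(\alpha,\beta)$. Let $d$ be given by $d(1,2)=1$ and $d(x,y)=t$ for every other distinct pair. Consider $C=\{1,2\}$, which is a proper non-singleton cluster since $n\ge 4$. It has $\mathrm{diam}_d(C)=1$ and $\mathrm{sep}_d(C)=t$. Therefore $C\in T_\alpha(d)$ because $\alpha<t$, but $C\notin T_\beta(d)$ because $\beta>t$. So $T_\alpha\neq T_\beta$, and $\alpha\mapsto T_\alpha$ is an injection from $[1,\infty)$ into the set of admissible methods. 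This gives uncountably many admissible methods, proving Theorem~\ref{thm:achievability}.
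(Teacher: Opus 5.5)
Your proof is correct and is essentially the paper's argument. Since $\varrho(d,C)=\mathrm{diam}_d(C)/\mathrm{sep}_d(C)$, your $T_\alpha$ is exactly $\Tglob^{\bdeta}$ with constant margin $\bdeta=\alpha^{-1}\mathbf{1}$, and the paper likewise obtains uncountably many admissible methods from the admissibility of this family together with a two-valued dissimilarity that separates distinct margins (Proposition~\ref{prop:admissible_separation-based}). The remaining differences are minor: you prove laminarity directly instead of citing laminarity of $\Tloc$ and the inclusion $\Tglob^{\bdeta}(d)\subseteq\Tloc(d)$, and you realize partitions with a two-level dissimilarity rather than through the paper's ultrametric construction for strict hierarchical richness.
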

The proof of Theorem~\ref{thm:achievability} is constructive. 
In the next section, we give explicit examples of admissible hierarchical clustering methods.

\section{Explicit Construction of Admissible Methods}
\label{sec:admissible_HCs}

 In this section, we construct admissible methods from three complementary perspectives. 
 We first examine the admissibility of some standard {\em linkage methods}, as they are the most widely used class of hierarchical clustering methods. 
 We then introduce two families of methods defined through explicit separation conditions and finally study Bryant-Berry stable clusters. 

These two parameterized families play a central role in the analysis of the class of admissible hierarchical clustering methods made in Section~\ref{sec:structure}: the separation methods will be useful to order the admissible methods based on the cluster structures they produce, whereas the Bryant-Berry cluster construction will be useful for showing the existence of many admissible methods that are fundamentally different (more precisely, incompatible).

\subsection{Linkage Methods}
\label{subsec:linkage}

Linkage-based methods form a classical and widely used class of hierarchical clustering algorithms. They are typically defined as greedy agglomerative procedures: Starting from the singleton partition, clusters are iteratively merged according to a prescribed inter-cluster dissimilarity rule, such as single, complete, average, or Ward linkage. We refer to Appendix~\ref{app:linkage} for formal definitions of these classic linkage rules. Standard formulations of linkage algorithms merge exactly two clusters at each step, and hence always produce binary hierarchies.\footnote{A hierarchy  $\Psi \subseteq 2^{\cX}$ on $\cX$ is a \emph{binary hierarchy} if, for every non-singleton cluster $C\in \Psi$, there exist exactly two proper subsets $C_1,C_2\in \Psi$ such that: (i) $C_1,C_2 \subsetneq C$; (ii) $C_1\cap C_2 = \emptyset$; (iii) $C_1 \cup C_2 = C$. 
}

We first observe that this restriction to pairwise merges and hence to binary hierarchies is incompatible with permutation invariance, regardless of the specific linkage rule employed.

\begin{lemma}
\label{lem:binary_not_perm}
Any hierarchical clustering method that is constrained to always produce a binary hierarchy is not permutation invariant.
\end{lemma}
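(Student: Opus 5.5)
We argue by contradiction using a single highly symmetric input. Take the uniform dissimilarity $d_0$ with $d_0(x,y)=1$ for all distinct $x,y\in\cX$. For every permutation $\phi$ of $\cX$ we have $(d_0)_\phi = d_0$, since $d_0(\phi^{-1}(x),\phi^{-1}(y))=1$ whenever $x\neq y$. So if $T$ is permutation invariant, then
\[
T(d_0) = T\big((d_0)_\phi\big) = \phi\cdot T(d_0)
\]
for every $\phi$. In words, $T(d_0)$ must be a hierarchy that every permutation of $\cX$ maps to itself.

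Next we show that the only such hierarchy is the star hierarchy $\{\cX\}\cup\{\{x\}:x\in\cX\}$. Suppose $C\in T(d_0)$ with $2\le |C|\le n-1$. Pick $x,y\in C$ and $z\notin C$, and let $\phi$ be the transposition of $y$ and $z$. Then $\phi(C)=(C\setminus\{y\})\cup\{z\}$, which must also belong to $T(d_0)$. This set intersects $C$ (it contains $x$), yet neither set contains the other: $y$ lies only in $C$ and $z$ lies only in $\phi(C)$. This contradicts laminarity, so no such $C$ exists.

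Finally, the star hierarchy is not binary when $n\ge 4$ (indeed whenever $n\ge 3$). For the root $\cX$ to be split into two clusters $C_1,C_2$ of the hierarchy, at least one of them would need size between $2$ and $n-1$, and no such cluster exists. Hence no method that always outputs a binary hierarchy can be permutation invariant.

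The argument has no real obstacle. The only point that needs care is the transposition step, namely checking that $C$ and $\phi(C)$ genuinely violate laminarity, and this is where the condition $2\le|C|\le n-1$ is used.
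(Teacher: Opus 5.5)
Your proof is correct and uses the same mechanism as the paper's: a permutation that fixes the input forces the output hierarchy to be closed under that permutation, and a transposition that moves a cluster only partially creates two overlapping clusters, violating laminarity. The difference is scope: the paper takes any $d$ with three points having identical dissimilarity profiles, uses only permutations of those three points, and deduces non-binarity from the fact that every non-singleton cluster meeting the triple contains all of it. That shows the obstruction arises on a whole class of inputs, whereas your uniform $d_0$ is a special case that admits the full symmetric group and pins $T(d_0)$ down exactly as the star hierarchy, which makes the final non-binarity step immediate.
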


\begin{proof}
Given a dissimilarity $d$, we say that two elements $x \ne y$ have pairwise identical dissimilarity profiles if $x$ and $y$ are equidistant from every other element of $\cX$ (i.e., \(\forall z \in \cX \setminus \{x,y\} \colon \quad d(x,z) = d(y,z)\)). 
Let $d$ be a dissimilarity on $\cX$, and suppose that $\cX$ has three distinct elements $x_1$, $x_2$ and $x_3$, such that any two have pairwise identical profiles.  As the elements of \(S:=\{x_1,x_2,x_3\}\) have pairwise identical dissimilarity profiles, every permutation \(\sigma\) of \(S\) that fixes \(\cX\setminus S\) preserves \(d\). Permutation invariance therefore requires that $C\in T(d) \Longrightarrow \sigma(C)\in T(d).$

Let \(C\in T(d)\) be non-singleton and suppose that \(C\cap S\neq\varnothing\). If \(C\cap S=\{x_i\}\), then \(C=U\cup\{x_i\}\) for some nonempty \(U\), and exchanging \(x_i\) with another element of \(S\) produces a cluster
\(U\cup\{x_j\}\in T(d)\) incompatible with \(C\). 
If \(|C\cap S|=2\), exchanging one of these two elements with the third likewise produces a cluster incompatible with \(C\). 
Both cases contradict laminarity. Hence every non-singleton cluster having a non-empty intersection with \(S\) contains all elements of~\(S\), and thus $T$ must be non-binary. 
\end{proof}

Motivated by Lemma~\ref{lem:binary_not_perm}, we consider variants of linkage methods whose output is not constrained to binary hierarchies.
In these variants, whenever multiple pairs of clusters attain the same minimal inter-cluster dissimilarity, all clusters involved in this tie are merged together simultaneously. This produces hierarchies that are not necessarily binary, but that respect the symmetries of the input dissimilarity function. 
Among the classic linkage rules mentioned earlier, single linkage is the only one that produces admissible hierarchies, as established by the following proposition. 

\begin{proposition}[Admissibility of Linkage Methods]
\label{prop:linkage_admissibility}
The non-binary variant of the single linkage method $\Tsl$ is admissible. 
In contrast, non-binary variants of the complete, average, Ward, centroid, and median linkage violate at least one of the axioms in Definition~\ref{def:admissible}. 
\end{proposition}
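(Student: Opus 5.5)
The plan has two parts: a direct verification of the four axioms for $\Tsl$, and explicit counterexamples for the other linkage rules.

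\textbf{Positive part.} First I will replace the greedy description of the non-binary single linkage by a closed form. For a threshold $t\ge 0$, let $G_t$ be the graph on $\cX$ with an edge $\{x,y\}$ whenever $d(x,y)\le t$. Simultaneously merging all tied minimal pairs produces exactly the connected components of $G_t$ as $t$ ranges over the distinct values of $d$. Hence
\[
\Tsl(d)=\{C : C \text{ is a connected component of } G_t \text{ for some } t\ge 0\}.
\]
Equivalently, $C\in\Tsl(d)$ if and only if $C$ is a singleton, or
\[
\max_{\text{MST edges inside } C} d \;<\; \min_{x\in C,\,y\notin C} d(x,y).
\]
In other words, the largest single-linkage (minimax path) distance inside $C$ is strictly smaller than the smallest dissimilarity leaving $C$. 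Denote the left-hand side by $\delta(C)$ and the right-hand side by $\Delta(C)$.

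With this characterization the axioms follow directly.
\begin{itemize}
\item Scale invariance and permutation invariance are immediate, since the criterion is invariant under multiplying $d$ by $\beta>0$ and under relabeling the points.
\item For partition richness, given $\cC$, set $d=1$ within clusters and $d=2$ across clusters. Every $C\in\cC$ is then a component of $G_1$.
\item For partition consistency, take $\cC\subseteq\Tsl(d)$ and a $\cC$-strengthening $d'$. For each $C\in\cC$, within-$C$ distances do not increase, so the minimax path distances inside $C$ (paths may stay inside $C$ because $C$ is a component at level $\delta(C)$) satisfy $\delta'(C)\le\delta(C)$. Edges leaving $C$ do not decrease, so $\Delta'(C)\ge\Delta(C)$. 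Thus $\delta'(C)\le\delta(C)<\Delta(C)\le\Delta'(C)$, and $C\in\Tsl(d')$.
\end{itemize}
The main care needed is proving the component characterization rigorously from the tie-merging rule. Once it is stated as the set of connected components of $G_t$ over all thresholds, this is essentially the standard equivalence between single linkage and the minimum spanning tree.

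\textbf{Negative part.} For each of the other rules I will exhibit a concrete small example, with $n=4$ or $n$ general padded by distant points, that violates partition consistency. Scale, permutation invariance and richness all hold for these rules, since richness follows from the same two-level dissimilarity used above. The template is the phenomenon of Figure~\ref{fig:axiom-motivation}, run in reverse: start with $d$ on $\{a,b,c,d\}$ where $\{a,b\}$ and $\{c,d\}$ are clusters merging at the top, together with the partition $\cC=\{\{a,b\},\{c\},\{d\}\}$ or a similar one. Then choose a strengthening that keeps each cluster of $\cC$ intact but changes the relative order of average or maximum cross-cluster distances. One way is to increase a single cross distance, which for complete or average linkage redirects which merge happens first and destroys a cluster such as $\{c,d\}$ that belonged to $\cC$. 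Concretely, for complete linkage I take $\cC=\{\{a,b\},\{c,d\}\}$ with an intermediate $\{b,c\}$-type merge. Enlarging one cross distance then lets a different triple form first and breaks $\{c,d\}$. I will fix numbers by hand and check each ordering.

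For Ward, centroid and median linkage, which operate on Euclidean inputs via Lance--Williams updates, I will use the corresponding Lance--Williams recurrences on the given $d$ (or squared $d$). Choosing analogous numerical examples, centroid and median can even produce inversions, giving violations. The main obstacle is producing small, verifiable numerical counterexamples for each rule, especially Ward. My first attempt will be a $5$-point configuration in which a cross distance is increased so that an average-type cross distance crosses a within-cluster merge level.
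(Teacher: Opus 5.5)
\textbf{Positive half.} This part is correct and is essentially the paper's argument. Single-linkage clusters are exactly the connected components of the threshold graphs $G_t$. A strengthening keeps a block connected at the same level and only raises the edges leaving it, which is precisely the paper's proof of cluster-wise consistency. The paper obtains the invariances through $\Tsl=\Tglob\circ B^*$, while you read them off the criterion directly; both are fine.

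One wording fix. ``The maximum over MST edges inside $C$'' must mean an MST of the subgraph induced on $C$, or equivalently $\max_{x,y\in C}B^*(d)(x,y)$ as in your restatement. Read with the global MST, the criterion accepts non-clusters. For example, take $d(a,b)=d(b,c)=1$, $d(a,c)=10$ and $C=\{a,c\}$, which contains no global MST edge.

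\textbf{Negative half.} This is where the gap is. You do not exhibit a counterexample for any of the five rules, and the only concrete sketch cannot work as written. If $\mathcal{C}=\{\{a,b\},\{c,d\}\}\subseteq T(d)$ on four points, the hierarchy is $\{\mathcal{X},\{a,b\},\{c,d\}\}$ plus singletons, so there is no room for an intermediate $\{b,c\}$-type merge. Moreover, for complete and average linkage one checks directly that such a $2+2$ partition survives every $\mathcal{C}$-strengthening: each pair stays strictly below the cross values it competes with, and a strengthening only widens these gaps.

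There are further problems with the template:
\begin{itemize}
\item With $\mathcal{C}=\{\{a,b\},\{c\},\{d\}\}$, destroying $\{c,d\}$ violates nothing, since $\{c,d\}$ is not a block. A block of $\mathcal{C}$ must be lost.
\item Inversions of centroid or median linkage are not axiom violations, because the output is an unweighted hierarchy.
\item For Ward, centroid and median, both $d$ and $d'$ must be squared Euclidean.
\item All decisive comparisons should be strict, so that the tie-breaking convention needed for multiway WPGMA and median updates plays no role.
\end{itemize}

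\textbf{What the paper does instead.} The paper uses a single padded four-point configuration with a $3+1$ partition $\{A,\{4\}\}$, $A=\{1,2,3\}$, and breaks it by a \emph{within-block contraction}. With $d(2,3)=4$ the unique minimum, $\{2,3\}$ forms first and then absorbs $1$. Lowering only $d(1,2)$ from $6$ to $3$ makes $\{1,2\}$ form first instead. The linkage from $\{1,2\}$ to $3$ is then inflated by $d(1,3)=6$ and exceeds $d(3,4)=q$, so $\{3,4\}$ forms and $A$ disappears. Choosing $q\in\{5,\,9/2,\,41/10\}$ makes this one example work for all six rules, with strict comparisons and squared Euclidean inputs.

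Your cross-increase idea can also be rescued if aimed at a partition that is not a height cut. In $\{\{a,b\},\{c\},\{d\}\}$, suppose $d(c,d)$ is the minimum, $d(a,c)<d(a,b)$, and the linkage from $\{c,d\}$ to $a$ exceeds $d(a,b)$. Then raising $d(c,d)$ above $d(a,c)$ lets $c$ capture $a$ before $b$. The parameters still have to be tuned and checked rule by rule, including the squared-Euclidean constraint.
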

\begin{proof}
We prove the admissibility of $\Tsl$ in Appendix~\ref{app:proofs_single_linkage}, and the non-admissibility of other linkage methods in Appendix~\ref{app:linkage}.    
\end{proof}

\subsection{Separation Methods}
\label{subsec:separation_methods}
We now introduce two classes of admissible hierarchical clustering methods, based on explicit separation conditions between within-cluster and cross-cluster dissimilarities. 
These methods declare a nonempty subset $C\subseteq\cX$ to be a cluster if the internal dissimilarities within $C$ are sufficiently small compared to the dissimilarities between elements of $C$ and of its complement~$\bC$. 
The resulting hierarchies are defined directly from $d$, without resorting to an agglomerative procedure. 
We quantify the separability of a cluster~$C$ by the ratios
\[
\frac{d(x_1,y)}{d(x_2,z)},
\qquad x_1,x_2,y\in C,\ z\notin C,
\]
and we consider below two families of methods resulting from thresholding these ratios. The two families differ in the choice of the reference points $x_1, x_2 \in C$: the first family enforces a worst-case comparison globally for any pair $x_1, x_2 \in C$, whereas the second family enforces this comparison locally by setting $x_1=x_2$.

\begin{definition}
\label{def:separation_based}
For any finite set $\cX$, define the global and local separabilities with respect to $d\in \cD(\cX)$ and a proper subset $C \subsetneq \cX$ with $|C|\ge 2$, respectively by
\begin{align*}
 \varrho(d,C)\, := \, \max_{ \substack{ x_1,x_2,y \in C, z\in \bC} } \frac{d(x_1,y)}{ d(x_2,z)}
\quad \text{ and } \quad  
\tau(d,C) \, := \,\max_{ \substack{ x,y\in C, z \in \bC} }
\frac{ d(x,y)}
{d(x,z)}.
\end{align*}
We call \emph{separation margin sequence} any sequence $\bdeta=(\eta_m)_{1 \le m \le n-2}$ such that $0<\eta_s\le 1$ for every $s$. For such sequence $\bdeta$, we define the set of $\bdeta$-globally and $\bdeta$-locally separated clusters on $\cX$, respectively as
\begin{align*}
\Tglob^{\bdeta}(d) &:= \left\{ C\subsetneq\cX : |C|\ge 2, \, \varrho(d,C) < \eta_{|C|-1} \right\} \cup \{\{x\}: x \in \cX\} \cup \{\cX\}; \\ 
\Tloc^{\bdeta}(d) &:= \left\{ C\subsetneq\cX : |C|\ge 2, \, \tau(d,C) < \eta_{|C|-1} \right\} \cup \{\{x\}: x \in \cX\} \cup \{\cX\}. 
\end{align*}
\end{definition}
The global (resp., local) separability $\varrho(d,C)$ (resp., $\tau(d,C)$) measures the sensitivity of the global (resp., local) separability of the cluster $C$ to the dissimilarity function $d$. 
The following proposition establishes that the maps $\Tglob^{\bdeta} \colon d \mapsto \Tglob^{\bdeta}(d)$ and $\Tloc^{\bdeta}\colon d \mapsto \Tloc^{\bdeta}(d)$ are admissible hierarchical clustering methods. 

\begin{proposition}
\label{prop:admissible_separation-based}
For any separation margin sequence $\bdeta$, $\Tglob^{\bdeta}$ and $\Tloc^{\bdeta}$ are admissible. Furthermore, if $\bdeta \neq \bdeta'$, 
then $\Tglob^{\bdeta} \neq \Tglob^{\bdeta'}$ and $\Tloc^{\bdeta} \neq \Tloc^{\bdeta'}$.
\end{proposition}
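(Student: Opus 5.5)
The plan is to verify the four axioms separately for $\Tloc^{\bdeta}$ and $\Tglob^{\bdeta}$, after first checking that both maps really output hierarchies, and then to handle distinctness with an explicit family of dissimilarities. A preliminary observation reduces the work: taking $x_1=x_2=x$ in the definition of $\varrho$ shows that $\tau(d,C)\le\varrho(d,C)$. Hence $\Tglob^{\bdeta}(d)\subseteq\Tloc^{\bdeta}(d)$ for every $d$, and any subfamily of a laminar family is laminar.

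\emph{Laminarity.} Root and leaves are included by definition, so it suffices to show that $\Tloc^{\bdeta}(d)$ is laminar; I expect this to be the only step needing an idea. Suppose $C_1,C_2\in\Tloc^{\bdeta}(d)$ overlap without being nested. Both are then proper and non-singleton, and we can pick $x\in C_1\cap C_2$, $y\in C_1\setminus C_2$ and $z\in C_2\setminus C_1$.
\begin{itemize}
\item Applying $\tau(d,C_1)<\eta_{|C_1|-1}\le 1$ with $x,y\in C_1$ and $z\notin C_1$ gives $d(x,y)<d(x,z)$.
\item Applying $\tau(d,C_2)<1$ with $x,z\in C_2$ and $y\notin C_2$ gives $d(x,z)<d(x,y)$.
\end{itemize}
These two inequalities contradict each other, so $\Tloc^{\bdeta}(d)$ is laminar, and by the preliminary observation so is $\Tglob^{\bdeta}(d)$. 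Both maps therefore land in $\cT(\cX)$. The constraint $\eta_m\le 1$ is exactly what this step uses.

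\emph{Scale and permutation invariance.} Both follow directly from the definitions. Replacing $d$ by $\beta d$ leaves every ratio $d(\cdot,\cdot)/d(\cdot,\cdot)$ unchanged, so $\varrho$ and $\tau$ are unchanged. For a permutation $\phi$ we have $\varrho(d_\phi,\phi(C))=\varrho(d,C)$ and $\tau(d_\phi,\phi(C))=\tau(d,C)$, obtained by substituting $\phi^{-1}$ of the points; moreover $|\phi(C)|=|C|$, so the threshold $\eta_{|C|-1}$ is the same.

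\emph{Partition consistency.} Let $\cC\subseteq T(d)$ and let $d'$ be a $\cC$-strengthening of $d$. Clusters $C\in\cC$ that are singletons or equal to $\cX$ remain in $T(d')$ automatically. For any other $C\in\cC$, every numerator $d(x_1,y)$ or $d(x,y)$ in the definitions involves two points of $C$, so it does not increase under $d'$. Every denominator $d(x_2,z)$ with $z\notin C$ involves points in different blocks of $\cC$, so it does not decrease. Hence $\varrho(d',C)\le\varrho(d,C)<\eta_{|C|-1}$, and the same holds for $\tau$.

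\emph{Partition richness.} Given $\cC\in\cP(\cX)$, set $d(x,y)=1$ for distinct $x,y$ in the same block and $d(x,y)=L$ across blocks. Every proper non-singleton block $C$ then has $\varrho(d,C)=1/L$, which is below $\min_m\eta_m>0$ once $L$ is large. It follows that $\cC\subseteq\Tglob^{\bdeta}(d)\subseteq\Tloc^{\bdeta}(d)$.

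\emph{Distinctness.} Suppose $\bdeta\neq\bdeta'$, and pick $m\in\{1,\dots,n-2\}$ with, say, $\eta_m<\eta'_m$. Fix a set $C$ with $|C|=m+1$; it is proper and non-singleton because $2\le m+1\le n-1$. Choose $r\in[\eta_m,\eta'_m)$ and define $d=1$ on distinct pairs inside $C$ and $d=1/r$ on all other distinct pairs. Numerators are $0$ or $1$ and every denominator equals $1/r$, so $\varrho(d,C)=\tau(d,C)=r$. Consequently $C$ belongs to $\Tglob^{\bdeta'}(d)$ and $\Tloc^{\bdeta'}(d)$ but to neither $\Tglob^{\bdeta}(d)$ nor $\Tloc^{\bdeta}(d)$, which proves both inequalities of methods.
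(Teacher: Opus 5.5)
Your proof is correct, and its skeleton matches the paper's. Laminarity of $\Tglob^{\bdeta}(d)$ comes from the inclusion $\Tglob^{\bdeta}(d)\subseteq\Tloc^{\bdeta}(d)$, scale and permutation invariance come from invariance of the ratios, and consistency comes from monotonicity of $\varrho$ and $\tau$ under strengthening. Your consistency argument in fact gives cluster-wise consistency, which is what the paper proves. For distinctness, the paper uses $r$ inside $C$ and inside $\overline{C}$ and $1$ across, which is your construction up to scaling.

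Two steps take a different route. For laminarity, the paper gives no direct argument. It cites the known fact that the Apresjan hierarchy $\Tloc(d)$ is laminar and then uses $\Tglob^{\bdeta}(d)\subseteq\Tloc^{\bdeta}(d)\subseteq\Tloc(d)$. Your three-point argument proves this from scratch and shows that $\eta_m\le 1$ is exactly what is needed.

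For richness, you prove only partition richness, which is all that admissibility requires. The paper instead proves strict hierarchical richness. For an arbitrary $\Psi\in\cT(\cX)$ it builds the ultrametric $u_\Psi$ with heights $q^{\operatorname{depth}(C)}$ and checks $\Psi\subseteq\Tglob^{\bdeta}(u_\Psi)$. It then invokes exactness of $\Tglob$ and $\Tloc$ on ultrametrics to get equality. Your block dissimilarity is the two-level case of that ultrametric, so your route is shorter and avoids the exactness lemma. The paper's route buys surjectivity of $\Tglob^{\bdeta}$ onto $\cT(\cX)$, which it later uses to argue that the backbone property implies hierarchical richness.
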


When $\bdeta = \bone = (1,\cdots,1)$, we simply write $\Tglob, \Tloc$ instead of $\Tglob^{\bone}, \Tloc^{\bone}$. The clusters belonging to $\Tglob(d)$ and $\Tloc(d)$ are the sets of points that are strictly closer to any point in the set than to any point outside the set. 
Among these methods, the most studied one is $\Tloc(d)$, which is called the \emph{Apresjan hierarchy}~\citep{apresjan1966algorithm}.
The clusters belonging to $\Tloc(d)$ have been studied independently by different authors, and are referred to as Apresjan clusters, $K$-clumps, strong clusters, nice clusters, or valid clusters in the literature \citep{ackerman2014incremental,diatta1994apresjan,balcan2008discriminative,bryant2001structured}. 

The separation margins $\bdeta$ control the sensitivity to separability $\varrho(d,C)$ and $\tau(d,C)$. 
The higher the values of $\bdeta$ are, the more sensitive these methods become to close-to-one separabilities, but at the price of being too prone to noise. 
For example, consider $\cX = \{1,2,3,4\}$ and $d\in \cD(\cX)$ such that every pairwise dissimilarity is exactly equal to 1. For this~$d$, because no nontrivial subset satisfies the global separation condition, $\Tglob$ identifies only the root and leaves as clusters. 
However, if $d(1,2)$ is slightly decreased to be equal to $1 - \varepsilon$, even for an extremely small $\varepsilon>0$ such as $\varepsilon= 10^{-10}$, then $\{1,2\}$ belongs to the hierarchy returned by $\Tglob$. But, setting $\bdeta = (1-\varepsilon') \cdot \mathbf{1}$ with $\varepsilon'>\varepsilon$, the cluster $\{1,2\}$ does not belong to the hierarchy $\Tglob^{\bdeta}$, as $\Tglob^{\bdeta}$ outputs only the root and leaves. 

\begin{example}
\label{ex:adm_methods}
Let $\cX = \{1,2,3,4,5\}$ and  
\begin{align}
d = 
 \begin{pmatrix}
 0 & 1 & 2 & 4 & 3 \\
 1 & 0 & 4 & 5 & 6 \\
 2 & 4 & 0 & 5 & 6 \\
 4 & 5 & 5 & 0 & 7 \\
 3 & 6 & 6 & 7 & 0 
 \end{pmatrix}.
 \label{eq:ex_d}
 \end{align}
Then, as shown in Figure~\ref{fig:ex_hierarchies}, the outputs of $\Tsl,\Tloc,\Tglob$, and $\Tglob^{\bdeta}$ with $\bdeta = \frac{1}{2} \boldsymbol{1}$ are \(
\Tsl(d) = \{ \{1,2,3,5\}, \{1,2,3 \}, \{1,2\}\} \cup \{\cX\} \cup \{\{x\}: x\in \cX\};
\) \(
\Tloc(d) = \{\{1,2,3 \}, \{1,2\}\} \cup \{\cX\} \cup \{\{x\}: x\in \cX\};
\) \(
\Tglob(d) = \{\{1,2\}\} \cup \{\cX\} \cup \{\{x\}: x\in \cX\};
\) \(
\Tglob^{\bdeta}(d) = \{\{\cX\} \cup \{\{x\}: x\in \cX\}.
\)
\begin{figure}[!ht]
\centering
\input{drawings/ex_hcs}
\caption{Output of $\Tsl,\Tloc,\Tglob,$ and $\Tglob^{\bdeta}$ with $\bdeta = \frac{1}{2} \boldsymbol{1}$ on the dissimilarity $d$ given in Equation~\eqref{eq:ex_d}. 
}
\label{fig:ex_hierarchies}
\end{figure}
\end{example}

\subsection{Bryant-Berry Stable Clusters}
\citet{bryant2001structured} introduced the notion of \emph{stable clusters}, a combinatorial criterion that is strictly weaker than the separation conditions in Definition~\ref{def:separation_based}, yet still yields sets of clusters that form hierarchies. In the following, we recall their construction, by translating the similarity framework in \citet{bryant2001structured} to the dissimilarity-based framework used in this paper. We then demonstrate that the induced hierarchical clustering method is admissible.

For \(x,y,z\in\cX\), the \emph{Bryant--Berry isolation weight} of the pair $(x,y)$ with respect to $z$ is
\[
\rho_{d}(xy\mid z)
:=
\min\{d(x,z),d(y,z)\}-d(x,y).
\]
Intuitively, $\rho_d(xy \mid z) > 0$ means that the point $z$ is farther from both $x$ and $y$ than they are from each other. 
For disjoint nonempty subsets $U,V,Z \subseteq \cX$, we define the average Bryant--Berry isolation weight by
\[
\overline{\rho}_{d}(UV\mid Z) :=
\frac{1}{|U||V||Z|} \sum_{u\in U, v \in V, z\in Z} \rho_{d}(uv\mid z),
\]
and the \emph{stable-cluster index} of a proper subset $C \subseteq \cX$ with $|C|\ge 2$ by
\[
\iota^d(C)
:=
\min_{\substack{U,V\neq\emptyset,\ U\cap V=\emptyset\\ U\cup V=C\\ \emptyset \neq Z\subseteq \cX\setminus C}}
\overline{\rho}_{d}(UV\mid Z).
\]
A subset $C$ is \emph{stable} if $\iota^d(C) > 0$, i.e., if for every bipartition $(U,V)$ of $C$ and every nonempty set $Z$ external to $C$, the average isolation weight is strictly positive. The Bryant-Berry method is then\footnote{\citet{bryant2001structured} work with similarities $s$; by letting $d := M - s$ with $M \ge \max s$, the isolation weight $\rho$ is invariant, so the stable-cluster family transfers verbatim to the dissimilarity setting.} 
\[
\Tstable(d)
:=
\{\cX\}
\cup
\{\{x\}:x\in\cX\}
\cup
\{C\subsetneq\cX:|C|\ge 2, \, \iota^d(C)>0\}.
\]
\citet{bryant2001structured} show that the set of stable clusters is laminar, so $\Tstable(d)$ is indeed a hierarchy. 
Computing $\iota^d(C)$ requires minimizing over all bipartitions of $C$ and all external witness sets, and \citet{bryant2001structured} show that deciding stability is NP-hard in general. They further establish that the stable-cluster family is contained in the average-linkage hierarchy, thus providing a tractable outer approximation of their method. 

We also consider variants of this method resulting from preprocessing the dissimilarity function with power transformations. This yields the following parameterized family of admissible methods, which turns out to be useful for the analysis in the next section. 
As shown in the next proposition, the Bryant--Berry method and its power-transformation variants satisfy the admissibility axioms. 
\begin{proposition}
\label{prop:stable_cluster_is_admissible}
$\Tstable$ is admissible. Furthermore, for every $\power > 0$, the composed method $\Tstable^{(\power)} := \Tstable \circ \trfpower^{\power}$ is admissible, where $\trfpower^{\power}(d)(x,y) = (d(x,y))^\power$ for every $x,y$.
\end{proposition}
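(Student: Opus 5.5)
We verify the four axioms of Definition~\ref{def:admissible} for $\Tstable$ directly. Laminarity of the stable family is the Bryant--Berry result, so $\Tstable(d)$ is a hierarchy. Then we deduce the statement for $\Tstable^{(\power)}$ by checking that $\trfpower^{\power}$ interacts well with each axiom.

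\emph{Scale invariance.} Since $\rho_{\beta d}(xy\mid z)=\beta\,\rho_d(xy\mid z)$, we get $\iota^{\beta d}(C)=\beta\,\iota^d(C)$. The sign of the index, and hence $\Tstable$, is therefore unchanged.

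\emph{Permutation invariance.} We have $\rho_{d_\phi}(\phi(x)\phi(y)\mid\phi(z))=\rho_d(xy\mid z)$, and $\phi$ maps bipartitions and witness sets of $C$ bijectively to those of $\phi(C)$. Hence $\iota^{d_\phi}(\phi(C))=\iota^d(C)$.

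\emph{Partition richness.} Given $\cC\in\cP(\cX)$, let $d(x,y)=1$ if $x,y$ lie in the same block and $d(x,y)=3$ otherwise. For a block $C$ with $|C|\ge2$ and $C\neq\cX$, every term $\rho_d(uv\mid z)$ with $u,v\in C$ and $z\notin C$ equals $3-1>0$, so $\iota^d(C)>0$. Singletons and $\cX$ are always present, so $\cC\subseteq\Tstable(d)$.

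\emph{Partition consistency.} This is the main step. Let $\cC\subseteq\Tstable(d)$ and let $d'$ be a $\cC$-strengthening. For a block $C\in\cC$ with $2\le|C|<n$, take $u\in U$, $v\in V$ and $z\notin C$. Then $u,v\in C$ gives $d'(u,v)\le d(u,v)$, and $z$ lies in a different block from $u$ and $v$ gives $d'(u,z)\ge d(u,z)$ and $d'(v,z)\ge d(v,z)$. So $\rho_{d'}(uv\mid z)\ge\rho_d(uv\mid z)$ termwise, hence $\overline{\rho}_{d'}(UV\mid Z)\ge\overline{\rho}_{d}(UV\mid Z)$ and $\iota^{d'}(C)\ge\iota^d(C)>0$. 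The trivial blocks (singletons, or $C=\cX$) are automatic. The point that makes this work is that only pairs inside a single cluster $C$ and triples with $z$ outside $C$ enter $\iota(C)$, and these are exactly the pairs whose monotonicity the strengthening controls. No cross-cluster comparison within $C$ is needed, unlike for linkage methods.

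\emph{The power variants.} Write $g=\trfpower^{\power}$.
\begin{itemize}
\item Scale invariance: $g(\beta d)=\beta^{\power}g(d)$, so it transfers from $\Tstable$.
\item Permutation invariance: $g$ acts entrywise, so it transfers.
\item Consistency: since $t\mapsto t^{\power}$ is strictly increasing, a $\cC$-strengthening $d'$ of $d$ yields a $\cC$-strengthening $g(d')$ of $g(d)$, and consistency of $\Tstable$ applies.
\item Richness: the map $t\mapsto t^{\power}$ is a bijection of $(0,\infty)$, so $g$ is a bijection of $\cD(\cX)$. Every $d$ arising in the richness construction for $\Tstable$ is therefore attained as $g(d^{1/\power})$.
\end{itemize}

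Overall, the only non-formal ingredient is the termwise monotonicity of $\rho$ under strengthening. I expect no real obstacle beyond carefully handling the edge cases $|C|=1$ and $C=\cX$, which are trivially included.
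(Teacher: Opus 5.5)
Your proof is correct. For scale invariance, permutation invariance, partition consistency and the transfer to $\Tstable^{(\power)}$ it is essentially the paper's argument. The paper proves the same termwise inequality $\rho_{d'}(uv\mid z)\ge\rho_d(uv\mid z)$, but for a single $\{C\}$-strengthening; this gives cluster-wise consistency, which implies partition consistency. It handles the power variants through a general criterion for axiom-preserving transformations, checking exactly your four conditions: $g(\beta t)=\beta^{\power}g(t)$, strict monotonicity, surjectivity, and entrywise action.

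The genuine difference is partition richness.
\begin{itemize}
\item You verify it directly with the two-level dissimilarity ($1$ inside blocks, $3$ across), where every isolation weight equals $2$.
\item The paper instead derives it from exactness on ultrametrics. That proof uses the inclusion $\Tloc\sqsubseteq\Tstable$ and a contradiction argument showing that every stable cluster of an ultrametric is a ball.
\end{itemize}
Your route is shorter and self-contained for plain admissibility. The paper's longer route also shows that $\Tstable$ and every $\Tstable^{(\power)}$ are exact on ultrametrics, i.e.\ strongly admissible. It needs this later for the uncountable width, cellularity and incompatibility results in the strongly admissible class.

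Your richness dissimilarity is itself an ultrametric whose hierarchy is exactly $\cC$ together with the root and the leaves. Your check is therefore the special case of the paper's exactness statement in which the verification is immediate.
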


\section{The Structure of the Set of Admissible Methods}
\label{sec:structure}

This section studies the admissible class as a whole, by focusing on structural properties shared across all of its members rather than on any individual method. We establish three main insights, each developed in its own subsection. 
First, in Section~\ref{subsec:partial_order}, we highlight the \emph{diversity} of the admissible class: there exist uncountably many pairwise incompatible admissible methods, and both the width and the height of the class (suitably defined) are uncountable. Second, we show in Section~\ref{subsec:core_hierarchy} that despite their diversity, all admissible methods agree on a nontrivial \emph{common core} of conservative cluster structures. Finally, in Section~\ref{subsec:order_theory}, we study \emph{maximal} admissible methods and show that there are uncountably many of them.

\subsection{Comparing Admissible Methods: Refinement and Incompatibility}
\label{subsec:partial_order}

Section~\ref{sec:admissible_HCs} introduced several admissible methods, including a number of parameterized variants. This naturally raises the question of how restrictive the admissibility axioms are, and to what extent they constrain the class of admissible methods. 

\subsubsection{Refinement: A Partial Order on Hierarchical Clustering Methods}
Addressing the questions above requires comparing hierarchical clustering methods. We begin by formalizing the intuition that some methods always return more fine-grained cluster structures than others. 

\begin{definition}
\label{def:refinement}
Let $T_1$ and $T_2$ be two hierarchical clustering methods. We say $T_2$ \emph{refines}~$T_1$ 
(or equivalently, that $T_2$ is \emph{finer} than $T_1$), 
 and denote it by $T_1 \sqsubseteq T_2$, if 
\[
 T_1(d) \subseteq T_2(d) \quad \text{for all } d \in \cD(\cX).
\]
\end{definition}

The binary relation $\sqsubseteq$ is a partial order between hierarchical clustering methods. 
We denote by $\cH(\cX)$ and $\cH_{\axiadm}(\cX)$ the sets of hierarchical clustering methods on $\cX$ and hierarchical clustering methods on $\cX$ that satisfy $\axiadm$, respectively (thus $\cH_{\axiadm}(\cX) \subsetneq \cH(\cX)$). 
Then, $(\cH(\cX),\, \sqsubseteq)$ is a partially ordered set, and so is $(\cH_{\axiadm}(\cX) ,\, \sqsubseteq)$.

\begin{example}[Known and immediate refinement relations]
\label{ex:refinement_adms_in_3}
Among the admissible methods introduced in Section~\ref{sec:admissible_HCs}, we have the following refinement relationships: 
\begin{enumerate}[label=\textup{(\roman*)},nosep]
 \item $\Tloc \sqsubseteq \Tsl$ and $\Tloc \sqsubseteq \Tstable$
 \item $\Tglob^{\bdeta} \sqsubseteq \Tloc^{\bdeta}$ for any $\bdeta$; 
 \item $\Tglob^{\bdeta'} \sqsubseteq \Tglob^{\bdeta}$ and $\Tloc^{\bdeta'} \sqsubseteq \Tloc^{\bdeta}$ for any $\bdeta' \leq \bdeta$ (where inequalities are element-wise, i.e., $\eta'_s \leq \eta_s$ for all $s$).
\end{enumerate}
Point~(i) is established by~\citet{bryant2001structured}. ($\Tloc \sqsubseteq \Tsl$ has also been re-proven by~\citet{balcan2008discriminative} and \citet{dreveton2025hierarchical}.)  
Point~(ii) holds because $\tau(d,C)\le\varrho(d,C)$ for every $C\subseteq\cX$ and $d\in\cD(\cX)$.
For point~(iii), if $\bdeta'\le\bdeta$, then $\varrho(d,C)<\eta'_{|C|-1}$ implies $\varrho(d,C)<\eta_{|C|-1}$, and likewise for $\tau(d,C)$. Hence $\Tglob^{\bdeta'}\sqsubseteq\Tglob^{\bdeta}$ and $\Tloc^{\bdeta'}\sqsubseteq\Tloc^{\bdeta}$.
\end{example}

\subsubsection{Order-theoretic Terminology}

Before analyzing $\cH_{\axiadm}(\cX)$ as a subset of the partially ordered set (poset) $(\cH(\cX) , \, \sqsubseteq)$, we first briefly recall the standard order-theoretic terminology. 

In a poset $(P,\sqsubseteq)$, and a subset $S \subseteq P$, an element $g \in S$ is a \emph{greatest element} of $S$ if $s \sqsubseteq g$ for all $s \in S$. 
An element $m \in S$ is a \emph{maximal element} of $S$ if there is no $s \in S$ such that $m \sqsubsetneq s$. 
A poset may contain multiple maximal elements but at most one greatest element; if a greatest element exists, it is necessarily the unique maximal element.
An element $a \in P$ is a \emph{lower bound} of $S$ if for every element $b \in S$, $a \sqsubseteq b$. Similarly, $a \in P$ is an \emph{upper bound} of $S$ if for every element $b \in S$, $b \sqsubseteq a$.
The \emph{infimum} (i.e., the greatest lower bound) of the subset $S$ is the lower bound $a\in P$ of $S$ such that $b \sqsubseteq a$ for any lower bound $b$ of $S$ in $P$. The \emph{supremum} (i.e., the least upper bound or join) of the subset $S$ is the upper bound $a \in P$ of $S$ satisfying $a \sqsubseteq b$ for any upper bound $b$ of $S$ in $P$. 

Furthermore, two elements $x, y \in P$ are \emph{comparable} if either $x \sqsubseteq y$ or $y \sqsubseteq x$; otherwise, they are incomparable. 
A \emph{chain} is a subset $C \subseteq P$ in which every pair of distinct elements is comparable. 
The \emph{height} of the poset $P$ is the supremum of the cardinalities of all chains. 

A nonempty subset \(S\subseteq P\) is \emph{upward directed} if, for every \(x,y\in S\), there exists \(z\in S\) such that \(x\sqsubseteq z\) and \(y\sqsubseteq z\). Equivalently, every finite nonempty subset of \(S\) has an upper bound that belongs to \(S\). Every chain is upward directed, but an upward-directed family need not be totally ordered.

We say that two elements $x, y \in P$ are \emph{upward compatible} if they share a common upper bound in $P$; otherwise, they are said to be (upward) incompatible. When analyzing the structure of subsets within $P$, these concepts of comparability and compatibility give rise to different types of antichains. A standard (or weak) \emph{antichain} is a subset $A \subseteq P$ in which no two distinct elements are comparable. The quantity used to capture the maximal size of these mutually incomparable sets is the \emph{width} of the poset, defined as the supremum of the cardinalities of all standard antichains in $P$.

Besides incomparability, a stricter structural condition requires subsets to be mutually incompatible. A \emph{strong upward antichain} is a subset $A \subseteq P$ where no two distinct elements are upward compatible (for any $x \neq y \in A$, there is no $z \in P$ such that $x \sqsubseteq z$ and $y \sqsubseteq z$). Analogous to how width measures the maximum size of standard antichains, the \emph{cellularity} of a poset is defined as the supremum of the cardinalities of all its strong upward antichains.

\subsubsection{Diversity of Admissible Methods}
\label{subsec:nonuniqueness}

\begin{lemma}
\label{lem:incompatible_family}
The family of admissible methods
\(
\left\{\Tstable^{(\power)}:\power>0 \right\} \subsetneq \cH_{\axiadm}(\cX)
\)
is pairwise incompatible and thus forms a strong upward antichain.
\end{lemma}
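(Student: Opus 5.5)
The plan is to show that for any $0<\power<\power'$ there is a single dissimilarity $d$ such that $\Tstable^{(\power)}(d)\cup\Tstable^{(\power')}(d)$ is not laminar. This suffices. If some $T\in\cH(\cX)$ were a common upper bound, then $T(d)$ would contain both hierarchies, and so would contain two crossing clusters. That contradicts Definition~\ref{def:hierarchy}. Since compatibility is measured in the whole poset $\cH(\cX)$, this gives a strong upward antichain. Distinctness of the methods also follows from this, because two methods whose outputs on $d$ differ cannot coincide.

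\textbf{Step 1: why clusters of size at least three are needed.} For a two-point cluster $C=\{x,y\}$, the only bipartition is $(\{x\},\{y\})$. The average $\overline{\rho}(UV\mid Z)$ over $Z$ is at least the minimum over singleton witnesses $\{z\}$. The sign of $\min\{d(x,z),d(y,z)\}^\power-d(x,y)^\power$ does not depend on $\power$. So stability of pairs is invariant under $\trfpower^{\power}$, and the construction must use clusters with $|C|\ge 3$. There the averaging over $U\times V$ mixes several terms whose signs can change with $\power$.

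\textbf{Step 2: the power-mean mechanism.} Take $C=\{a,b,c\}$, the bipartition $U=\{a\}$, $V=\{b,c\}$, and a witness $z$. The averaged weight is then
\[
\tfrac12\bigl(\min\{d(a,z),d(b,z)\}^\power+\min\{d(a,z),d(c,z)\}^\power\bigr)-\tfrac12\bigl(d(a,b)^\power+d(a,c)^\power\bigr).
\]
Make the minima equal to a common value $m$, and take $d(a,b)\neq d(a,c)$. The condition becomes $m>M_\power\bigl(d(a,b),d(a,c)\bigr)$, where $M_\power$ is the power mean. Because $M_\power$ is strictly increasing in $\power$ on non-constant inputs, I choose $m=M_r$ for some $r\in(\power,\power')$. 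Then this witness certifies positivity under $\power$ and negativity under $\power'$. Symmetrically, I build a second triple $C'$ crossing $C$, for instance $C'=\{c,e,f\}$ or $\{b,c,g\}$. For $C'$ the inequality is reversed by using a mean-type condition on the other side, e.g.\ an internal distance equal to $M_r$ of two external distances. That makes $C'$ stable for powers above $r$ and unstable below. This requires $n\ge 5$ or $6$. For larger $n$, I place the remaining points far away at a common large distance; for $n=4$ I would try a direct four-point tuning. Then $C\in\Tstable^{(\power)}(d)$ and $C'\in\Tstable^{(\power')}(d)$ with $C\cap C'\neq\emptyset$ and neither containing the other.

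\textbf{Step 3: the main obstacle.} The hardest part is checking every bipartition $(U,V)$ and every nonempty witness set $Z$ for the two target clusters, so that $C$ really is stable under $\power$ and $C'$ under $\power'$. The min-over-singletons reduction from Step 1 still applies to $Z$ here, since the average over $Z$ is a convex combination. So only single witnesses need to be checked, but every bipartition of each triple must be handled. My first attempt is to make all "other" terms robustly positive with a large margin. Every external distance not involved in the tuned comparison is set much larger than all internal ones, uniformly for all powers in $[\power,\power']$. Then only the single engineered comparison is near the threshold, and the uniform margin absorbs everything else. If the crossing requirement forces conflicts between the two gadgets, I would fall back to a small explicit numeric family and verify the finitely many inequalities directly.
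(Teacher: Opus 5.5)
\textbf{There is a genuine gap: the proposal never produces the dissimilarity $d$.} Your mechanism is the right one. Monotonicity of power means in the exponent is exactly what drives the paper's construction. But the route you sketch toward $d$ cannot work as stated, for two reasons.

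First, the lemma is needed for $n=4$, which is the paper's standing assumption. Your two-gadget design requires $n\ge 5$ or $6$, and the four-point case is left as something you ``would try''.

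Second, the Step~3 plan is blocked by the crossing itself. You propose to tune one comparison per cluster at a private witness and make every other term positive with a large margin. Take $C=\{a,b,c\}$ and $C'=\{b,c,g\}$. Stability of $C$ under $p$, with $(U,V)=(\{a\},\{b,c\})$ and $Z=\{g\}$, gives (after bounding each minimum by its second argument) $M_p\bigl(d(g,b),d(g,c)\bigr)>M_p\bigl(d(a,b),d(a,c)\bigr)$. Stability of $C'$ under $p'$, with $(U,V)=(\{g\},\{b,c\})$ and $Z=\{a\}$, gives $M_{p'}\bigl(d(a,b),d(a,c)\bigr)>M_{p'}\bigl(d(g,b),d(g,c)\bigr)$. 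So the distances internal to one cluster and external to the other can never be separated by a large margin. These mutual-witness comparisons are forced to be the near-threshold ones however many extra witnesses you add, so the extra points buy nothing.

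Your alternative $C'=\{c,e,f\}$ is infeasible for any pair of powers. The bipartition $(\{c\},\{a,b\})$ of $C$ with witnesses $e$ and $f$ forces $\min\{d(c,e),d(c,f)\}>\min\{d(c,a),d(c,b)\}$. The bipartition $(\{c\},\{e,f\})$ of $C'$ with witnesses $a$ and $b$ forces the reverse strict inequality.

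The missing idea is to let one pair of distances serve as the tuned comparison for both clusters. The paper does this on $\{1,2,3,4\}$ with $C_1=\{1,2,3\}$ and $C_2=\{2,3,4\}$, prescribing $p$-th powers as follows. Set $d(1,2)^p=1$, $d(1,3)^p=h$, $d(2,4)^p=d(3,4)^p=t$, $d(2,3)^p=\varepsilon$ and $d(1,4)^p=M$. Let $r=p'/p$, take $h>1$ close to $1$, and take $t$ strictly between $\frac{1+h}{2}$ and $\bigl(\frac{1+h^r}{2}\bigr)^{1/r}$. Choose $\varepsilon<1$ with $t^r+\varepsilon^r<2$, and $M>h$.

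Then witness $4$ certifies $C_1$ under $p$: the bipartition $(\{1\},\{2,3\})$ reduces to $2t>1+h$. Witness $1$ certifies $C_2$ under $p'$: the bipartition $(\{4\},\{2,3\})$ reduces to $1+h^r>2t^r$. The remaining four bipartitions follow from these two inequalities together with $\varepsilon<1<t<h<M$ and $t^r+\varepsilon^r<2$. Your reduction to singleton witnesses and your far-away extension to larger $n$ then complete the argument as in the paper.
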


Lemma~\ref{lem:incompatible_family} reveals that $\cH_{\axiadm}(\cX)$ is genuinely diverse, as it shows that no single admissible method refines all the others: not only do admissible methods differ, but uncountably many pairs admit no common refinement (or upper bound in the order-theoretic terminology) even in the larger class $\cH(\cX)$. In particular, $(\cH_{\axiadm}(\cX),\sqsubseteq)$ has no greatest element: there is no universal admissible method that simultaneously refines all others. We obtain the following theorem. 

\begin{theorem}
The height, width, and cellularity of $\cH_{\axiadm}(\cX)$ are uncountable.
\end{theorem}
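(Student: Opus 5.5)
The plan is to establish the three cardinalities separately, each by exhibiting an uncountable subset of $\cH_{\axiadm}(\cX)$ of the right kind. All three are bounded above by the cardinality of the whole poset, so only lower bounds are needed.

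\emph{Cellularity and width.} Lemma~\ref{lem:incompatible_family} gives the family $\{\Tstable^{(\power)}:\power>0\}$. It is pairwise incompatible, so any two distinct members have no common upper bound, even in $\cH(\cX)$ and hence in $\cH_{\axiadm}(\cX)$. It is therefore a strong upward antichain.

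To see that it is uncountable, we need $\power\mapsto\Tstable^{(\power)}$ to be injective on $(0,\infty)$. This follows from incompatibility itself. If $\Tstable^{(\power)}=\Tstable^{(\power')}$ with $\power\neq\power'$, this common method would be an upper bound of both, contradicting incompatibility. (Reading the lemma literally, distinct parameters give incompatible methods, which forces distinctness.) Hence the cellularity is at least $|(0,\infty)|$.

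Comparable elements are always compatible, since the larger one is a common upper bound. So every strong upward antichain is a standard antichain, and the width is also uncountable.

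\emph{Height.} Here I would use the separation methods. By Example~\ref{ex:refinement_adms_in_3}(iii), $\bdeta'\le\bdeta$ implies $\Tglob^{\bdeta'}\sqsubseteq\Tglob^{\bdeta}$. By Proposition~\ref{prop:admissible_separation-based}, distinct margin sequences give distinct methods, all of which are admissible.

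Take the constant sequences $\bdeta_t=t\cdot\bone$ for $t\in(0,1]$. These are totally ordered element-wise, so $\{\Tglob^{\bdeta_t}:t\in(0,1]\}$ is a chain. Its members are pairwise distinct by the injectivity part of Proposition~\ref{prop:admissible_separation-based}. This gives a chain of cardinality continuum, so the height is uncountable.

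The main potential obstacle is the injectivity used for the antichain, which I would take directly from Lemma~\ref{lem:incompatible_family} as stated. Should the lemma only assert incompatibility for distinct methods, I would instead separate $\Tstable^{(\power)}$ and $\Tstable^{(\power')}$ explicitly. The idea is to build a dissimilarity on four points for which the sign of an average isolation weight, such as $\overline\rho$ for a cluster $\{1,2\}$ against $Z=\{3,4\}$, changes as the power $\power$ crosses a threshold depending on the entries. Choosing the entries so that the threshold lies strictly between $\power$ and $\power'$ would then separate the two methods.
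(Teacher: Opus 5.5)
Your proof is correct and follows the paper's argument. Uncountable width and cellularity come from the pairwise-incompatible family $\{\Tstable^{(p)}:p>0\}$ of Lemma~\ref{lem:incompatible_family}, and uncountable height comes from a continuum chain of separation methods with constant margins. You use $\Tglob^{t\cdot\bone}$ where the paper uses $\Tloc^{t\cdot\bone}$, which makes no difference by Example~\ref{ex:refinement_adms_in_3}(iii) and Proposition~\ref{prop:admissible_separation-based}.

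You also spell out two steps the paper leaves implicit: incompatibility forces $p\mapsto\Tstable^{(p)}$ to be injective, and every strong upward antichain is a standard antichain. Your fallback construction is therefore unnecessary.
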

\begin{proof}
Uncountable height follows immediately from the family of admissible methods \(\left\{\Tloc^{\bdeta}: \eta \in (0,1], \bdeta = \eta \cdot \boldsymbol{1} \right\}\), which is itself uncountable. 
Uncountable width and cellularity hold because the set \(\left\{\Tstable^{(p)} : p > 0 \right\}\) is an uncountable set of admissible methods whose elements are pairwise incompatible by Lemma~\ref{lem:incompatible_family}, and therefore mutually incomparable. 
Finally, we show that $\Tsl$ and $\Tstable^{(\power)}$ with any $\power >0$ are incompatible in Appendix~\ref{app:incompatibility}.
\end{proof}

\subsection{Uniformity: A Well-separated Backbone}
\label{subsec:core_hierarchy}
Section~\ref{subsec:nonuniqueness} established the diversity of the set of admissible methods $\cH_{\axiadm}(\cX)$.
This raises another natural question: do admissible methods nevertheless share some common cluster structure?
As a motivating example, consider $\Tsl$ and $\Tstable$.
Both refine $\Tloc$: as noted in Section~\ref{subsec:partial_order}, $\Tloc \sqsubseteq \Tsl$ and $\Tloc \sqsubseteq \Tstable$.
Hence, despite being incompatible, $\Tsl$ and $\Tstable$ share $\Tloc$ as common cluster structure. 
The following theorem shows that this is not a coincidence specific to $\Tsl$ and $\Tstable$: every method in $\cH_{\axiadm}(\cX)$ refines a globally separated hierarchy $\Tglob^{\bdeta}$ for some margin sequence $\bdeta$.

\begin{theorem}[Backbone hierarchy]
\label{thm:core_hierarchy}
 For any $T\in \cH_{\axiadm}(\cX)$, there exists a separation margin sequence $\bdeta$ (that may depend on $T$ and on $|\cX|$) such that $\Tglob^{\bdeta} \sqsubseteq T$. 
\end{theorem}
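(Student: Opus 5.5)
The plan is to combine the four axioms in a pigeonhole argument over finitely many \emph{witness} dissimilarities supplied by partition richness. Fix $T\in\cH_{\axiadm}(\cX)$. For every partition $\cP\in\cP(\cX)$, partition richness gives some $d_\cP$ with $\cP\subseteq T(d_\cP)$. There are finitely many partitions, so the distortion constant
\[
R:=\max_{\cP\in\cP(\cX)}\frac{\max_{u\neq v}d_\cP(u,v)}{\min_{u\neq v}d_\cP(u,v)}
\]
is finite and depends only on $T$ and $n$. By permutation invariance, $d_{\cP,\phi}:=(d_\cP)_\phi$ witnesses $\phi\cdot\cP\subseteq T(d_{\cP,\phi})$ with the same distortion. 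So for every partition $\cQ$ there is a witness with distortion at most $R$. I would take $\eta_s:=R^{-(N+2)}$ for all $s$, with $N:=\binom n2$ (or a similar finite power of $R$).

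The core step is a transfer lemma. Suppose $\cQ\in\cP(\cX)$ and $d\in\cD(\cX)$ satisfy
\[
\max\{d(u,v): u,v \text{ in a common block of } \cQ\}\;\le\;R^{-1}\min\{d(u,v): u,v \text{ in different blocks of } \cQ\}.
\]
Then $\cQ\subseteq T(d)$. To prove it, take the witness $e$ for $\cQ$. Choose $\beta>0$ with $d\le\beta e$ on intra-block pairs and $d\ge\beta e$ on inter-block pairs; this is possible exactly because the left-hand side is at most $R^{-1}$ times the right-hand side. Scale invariance gives $\cQ\subseteq T(\beta e)$. Since $d$ is a $\cQ$-strengthening of $\beta e$, partition consistency yields $\cQ\subseteq T(d)$. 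It therefore suffices to show that whenever $\varrho(d,C)<\eta_{|C|-1}$, some partition $\cQ\ni C$ satisfies this gap condition for $d$.

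To find $\cQ$, let $a:=\max_{x,y\in C}d(x,y)$ and $b:=\min_{x\in C,z\notin C}d(x,z)$. Then $b/a\ge 1/\varrho(d,C)>R^{N+2}$. At most $N$ distinct values of $d$ occur on pairs inside $\bC$. By pigeonhole, some $k\in\{0,\dots,N\}$ makes the window $[aR^{k},aR^{k+1})$ free of such values. I would then take the blocks of $\cQ$ on $\bC$ to be groups of points linked by pairs with $d<aR^{k}$, and let $C$ itself be a block. The cross pairs between $C$ and $\bC$ are at least $b\gg aR^{k+1}$, and the inter-block pairs inside $\bC$ are at least $aR^{k+1}$, so the inter side of the gap condition is fine. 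The intra-$C$ pairs are at most $a\le aR^{k}$.

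The main obstacle is the intra-block pairs of $\bC$. Connected components of the graph $\{d<aR^k\}$ can contain pairs with large $d$, a chaining effect, which would violate the gap condition. My first attempt would be to replace single-level components by a more careful construction. One option is to iterate the pigeonhole over nested scales, using $N$ further powers of $R$ in $\eta$, so that within each block every pair lies below the gap. Here the key point is that the gap condition only concerns the whole of $\bC$ through the single scale separating it from $C$. If no such partition exists, I would instead allow $\cQ$ to group $\bC$ differently and apply the transfer lemma in several rounds. Each round would use the clusters of $\bC$ already certified in $T(d)$ by smaller-scale arguments, together with laminarity, to reduce the problem to a dissimilarity in which $\bC$ is ultrametric-like at the relevant scale. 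The separation margin produced this way depends only on $R$ and $n$, as the theorem allows.
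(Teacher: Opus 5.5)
Your transfer lemma is correct, but the reduction you build on it fails. It is not true that whenever $\varrho(d,C)$ is small, some partition $\mathcal{Q}\ni C$ satisfies your gap condition for $d$ itself. The quantity $\varrho(d,C)$ says nothing about $d$ restricted to $\bC$. Yet any such $\mathcal{Q}$ forces every pair inside $\bC$ to be either intra-block (so it must be small) or inter-block (so it must be large). Here is a concrete failure. Take $n=5$, $C=\{x_1,x_2\}$, $\bC=\{z_1,z_2,z_3\}$, and set $d(x_1,x_2)=1$, all cross pairs equal to $L$, $d(z_1,z_2)=d(z_2,z_3)=\varepsilon<1$, and $d(z_1,z_3)=2L$. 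Then $\varrho(d,C)=1/L$ is as small as you like, yet no partition works:
\begin{itemize}
\item if $z_1,z_3$ share a block, the intra side is at least $2L$ while the inter side is at most $L$;
\item otherwise $z_2$ is separated from $z_1$ or from $z_3$, so the inter side is at most $\varepsilon<1$, which is at most the intra side.
\end{itemize}
The obstruction is the non-ultrametric triple $z_1,z_2,z_3$, not a missing gap between scales. So no choice of $\eta$ and no nested pigeonhole can repair it. Your fallback of using clusters of $\bC$ already certified in $T(d)$ also has nothing to work with: here the admissible method $\Tglob$ puts $C$ into $\Tglob(d)$ but no nontrivial subset of $\bC$.

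The missing idea is to pass through an intermediate dissimilarity and apply partition consistency twice, with two different partitions, so that the values of $d$ inside $\bC$ drop out entirely.
\begin{itemize}
\item Take a witness $e$ for the specific partition $\{C,\bC\}$. Choose $\beta>0$ with $d\le\beta e$ on pairs inside $C$ and $d\ge\beta e$ on cross pairs. This only requires $\varrho(d,C)\le\min_{x\neq y\in C}e(x,y)/\max_{x\in C,z\in\bC}e(x,z)$, and that ratio is at least $1/R$.
\item Let $d_1$ agree with $\beta e$, except that $d_1:=\min\{d,\beta e\}$ on pairs inside $\bC$. Then $d_1$ is a $\{C,\bC\}$-strengthening of $\beta e$, so $\{C,\bC\}\subseteq T(d_1)$. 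Hence also $\{C\}\cup\{\{z\}:z\in\bC\}\subseteq T(d_1)$.
\item Now $d$ is a strengthening of $d_1$ for this finer partition, because pairs inside $\bC$ are inter-block and $d\ge d_1$ there. Consistency gives $C\in T(d)$.
\end{itemize}
In short, the coarse partition lets you lower the $\bC$-internal values freely, and the fine one lets you raise them freely. This is the paper's Lemma~\ref{lemma:single_cluster_extraction}, combined with scale invariance and partition richness in Lemma~\ref{lemma:core_hierarchy}. With it, $\eta_s$ can be taken as the minimum of these witness ratios over clusters of size $s+1$, capped at $1$. The pigeonhole over $\binom n2$ values becomes unnecessary.
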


Although admissible methods may disagree on individual clusters, Theorem~\ref{thm:core_hierarchy} reveals a structural common ground: every admissible method refines a globally $\bdeta$-separated hierarchy $\Tglob^{\bdeta}$. 

Whereas the separation margin sequence $\bdeta$ is method-dependent, the following corollary shows that any finite collection of admissible methods admits a common backbone hierarchy, obtained by taking the pointwise minimum of their individual margin sequences.

\begin{corollary}
\label{cor:common_backbone}
Let $T_1,\cdots, T_m$ be methods in $\cH_{\axiadm}(\cX)$, with $m$ finite. There exists a separation margin sequence $\bdeta$ such that $\Tglob^{\bdeta} \sqsubseteq T_{\ell}$ for all $\ell \in [m]$. 
\end{corollary}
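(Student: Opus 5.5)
Apply Theorem~\ref{thm:core_hierarchy} to each method separately, take the pointwise minimum of the resulting margin sequences, and then use the monotonicity of $\Tglob^{\bdeta}$ in $\bdeta$ from Example~\ref{ex:refinement_adms_in_3}(iii).

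First, for each $\ell\in[m]$, Theorem~\ref{thm:core_hierarchy} gives a separation margin sequence $\bdeta^{(\ell)}=(\eta^{(\ell)}_s)_{1\le s\le n-2}$ with $\Tglob^{\bdeta^{(\ell)}}\sqsubseteq T_\ell$. Next, define $\bdeta$ by
\[
\eta_s:=\min_{\ell\in[m]}\eta^{(\ell)}_s\qquad(1\le s\le n-2).
\]
Since $m$ is finite and each $\eta^{(\ell)}_s\in(0,1]$, this minimum is attained and lies in $(0,1]$. Hence $\bdeta$ is again a separation margin sequence. This is the only point where finiteness of $m$ is used: an infimum over infinitely many methods could be $0$, and $\bdeta$ would then no longer be admissible as a margin sequence.

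By construction, $\bdeta\le\bdeta^{(\ell)}$ elementwise for every $\ell$. Example~\ref{ex:refinement_adms_in_3}(iii) then gives $\Tglob^{\bdeta}\sqsubseteq\Tglob^{\bdeta^{(\ell)}}$. Concretely, $\varrho(d,C)<\eta_{|C|-1}$ implies $\varrho(d,C)<\eta^{(\ell)}_{|C|-1}$, and the root and singletons are common to both hierarchies.

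Finally, $\sqsubseteq$ is transitive, since it is inclusion of outputs pointwise in $d$. Combining the two steps gives $\Tglob^{\bdeta}\sqsubseteq T_\ell$ for all $\ell\in[m]$. There is no genuine obstacle beyond checking that the minimum stays strictly positive.
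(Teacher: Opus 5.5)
Your proposal is correct and matches the paper's proof essentially line for line. You apply Theorem~\ref{thm:core_hierarchy} to each $T_\ell$, take the coordinatewise minimum of the margin sequences (which stays in $(0,1]$ because $m$ is finite), and chain $\Tglob^{\bdeta}\sqsubseteq\Tglob^{\bdeta^{(\ell)}}\sqsubseteq T_\ell$ using monotonicity of $\Tglob^{\bdeta}$ in $\bdeta$.
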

\begin{proof}
For each $\ell \in [m]$, Theorem~\ref{thm:core_hierarchy} provides a separation margin sequence $\bdeta^{(\ell)} = (\eta_s^{(\ell)})_{1\leq s \leq n-2}$ such that $\Tglob^{\bdeta^{(\ell)}} \sqsubseteq T_{\ell}$. Define $\bdeta = (\eta_s)_{1\leq s \leq n-2}$ by
\( \eta_s := \min_{\ell \in [m]} \eta_s^{(\ell)},
\) 
which satisfies $\eta_s \in (0,1]$ for every $s$ since the minimum is taken over a finite set of positive values. By construction, $\eta_s \leq \eta_s^{(\ell)}$ for all $\ell \in [m]$ and all $s \geq 1$, so $\Tglob^{\bdeta} \sqsubseteq \Tglob^{\bdeta^{(\ell)}} \sqsubseteq T_{\ell}$.
\end{proof}

The preceding corollary shows that every finite family of admissible methods has an admissible common lower bound. 
This conclusion does not, however, extend to the entire admissible class. To make this precise, define the \emph{trivial method}
\(T_{\mathrm{triv}}\in\cH(\cX)\) by
\begin{align}
\label{eq:def_Ttriv}
T_{\mathrm{triv}}(d) := \{\cX\}\cup\bigl\{\{x\}:x\in\cX\bigr\}
\qquad \forall d\in\cD(\cX).
\end{align}
In other words, \(T_{\mathrm{triv}}\) is the method that always returns only the root and the singleton leaves. 
The following proposition shows that \(T_{\mathrm{triv}}\) is the infimum of $\cH_{\axiadm}(\cX)$. Moreover, $T_{\mathrm{triv}}$ does not satisfy partition richness and hence is not admissible. Hence Proposition~\ref{proposition:least_element} also implies that \((\cH_{\axiadm}(\cX),\sqsubseteq)\) has no least element.

\begin{proposition}
\label{proposition:least_element}
The infimum of \(\cH_{\axiadm}(\cX)\), computed in the poset \((\cH(\cX),\sqsubseteq)\), is \(T_{\mathrm{triv}}\). 
\end{proposition}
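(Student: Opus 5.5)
Two things need to be shown. First, $T_{\mathrm{triv}}$ is a lower bound of $\cH_{\axiadm}(\cX)$. Second, every lower bound $L\in\cH(\cX)$ of $\cH_{\axiadm}(\cX)$ satisfies $L\sqsubseteq T_{\mathrm{triv}}$.

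The first part is immediate. Every hierarchy contains the root $\cX$ and all singletons $\{x\}$ by Definition~\ref{def:hierarchy}, so $T_{\mathrm{triv}}(d)\subseteq T(d)$ for every method $T$ and every $d$.

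For the second part, take a lower bound $L$, a dissimilarity $d$, and a nontrivial cluster $C\in L(d)$, that is, $2\le |C|\le n-1$. I will derive a contradiction by exhibiting an admissible method whose output on $d$ does not contain $C$. The natural candidates are the separation methods $\Tglob^{\bdeta}$ with constant margin $\bdeta=\eta\mathbf{1}$, which are admissible by Proposition~\ref{prop:admissible_separation-based}. Fix $d$ and $C$. The global separability $\varrho(d,C)$ is a fixed positive number, since all off-diagonal dissimilarities are positive and $|C|\ge 2$ guarantees a pair $x_1\neq y$ in $C$. Choose $\eta\in(0,1]$ with $\eta\le\varrho(d,C)$. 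Then $C\notin\Tglob^{\eta\mathbf{1}}(d)$, while $L(d)\subseteq \Tglob^{\eta\mathbf{1}}(d)$ because $L$ is a lower bound. This is a contradiction. Hence $L(d)$ contains only the root and the singletons, i.e.\ $L(d)=T_{\mathrm{triv}}(d)$ for every $d$, so $L\sqsubseteq T_{\mathrm{triv}}$. Together with the first part, $T_{\mathrm{triv}}$ is the greatest lower bound.

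There is no real obstacle. The only point requiring care is that $\varrho(d,C)>0$, which holds because $d(x,y)>0$ for distinct $x,y$ and $\bC\neq\emptyset$ since $C$ is proper. Because the admissible method is allowed to depend on $d$ and $C$, a single margin sequence need not work uniformly. If one prefers a single method, one can instead take $\eta$ below the minimum of $\varrho(d,C)$ over all nontrivial $C$; this minimum is positive because there are finitely many $C$.

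The claim that $(\cH_{\axiadm}(\cX),\sqsubseteq)$ has no least element then follows from this proposition together with the failure of partition richness for $T_{\mathrm{triv}}$. For that failure, any partition with a nontrivial block, which exists since $n\ge 4$, is never contained in $T_{\mathrm{triv}}(d)$.
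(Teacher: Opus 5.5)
Your proof is correct and follows essentially the same route as the paper. Both arguments get the lower-bound part from the fact that every hierarchy contains the root and the singletons. Both then, for each $d$ and each nontrivial $C$, pick a margin $\eta_{|C|-1}\le\varrho(d,C)$ so that the admissible method $\Tglob^{\bdeta}$ excludes $C$. The only difference is cosmetic: you argue directly against an arbitrary lower bound $L$, whereas the paper shows that the pointwise intersection of all admissible outputs equals $T_{\mathrm{triv}}(d)$.
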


\begin{proof}
As every hierarchy on \(\cX\) contains the root and all singleton clusters, $T_{\mathrm{triv}}\sqsubseteq T$ for every $T\in\cH(\cX)$. 
In particular, \(T_{\mathrm{triv}}\) is a lower bound of \(\cH_{\axiadm}(\cX)\).

We now show that no nontrivial proper cluster belongs to every admissible method. Fix \(d\in\cD(\cX)\) and a nontrivial proper subset \(C\subsetneq\cX\). Since \(\varrho(d,C)>0\), we may choose a separation margin sequence \(\bdeta\) such that $0<\eta_{|C|-1}\leq \varrho(d,C).$ 
Then, by definition, $C\notin \Tglob^{\bdeta}(d)$, whereas \(\Tglob^{\bdeta}\) is admissible. Hence \(C\) does not belong to the intersection of all admissible methods. 
Because this holds for every \(d\in\cD(\cX)\) and every nontrivial proper subset \(C\subsetneq\cX\), and because the root and singleton clusters belong to every hierarchy, for every $d \in \cD(\cX)$
\(
\bigcap_{T\in\cH_{\axiadm}(\cX)}T(d) = T_{\mathrm{triv}}(d).
\)
Therefore \(T_{\mathrm{triv}}\) is the infimum of \(\cH_{\axiadm}(\cX)\) in \(\cH(\cX)\). 
\end{proof}

Theorem~\ref{thm:core_hierarchy} carries an additional, perhaps unexpected, implication: the partition richness axiom can be upgraded to a seemingly stronger one, \emph{hierarchical richness}. Recall that partition richness only requires that for every partition $\cC$, there exists a dissimilarity $d$ such that $\cC \subseteq T(d)$. Combined with the other axioms of $\axiadm$, it in fact implies that $T$ satisfies hierarchical richness, that is, for any $\Psi \in \cT(\cX)$, there exists $d\in \cD(\cX)$ such that $\Psi \subseteq T(d)$.\footnote{This is a direct consequence of $T$ refining $\Tglob^{\bdeta}$ for some $\bdeta$, and of $\Tglob^{\bdeta} \colon \cD(\cX) \to \cT(\cX)$ being surjective.} 

\subsection{Existence of Uncountably Many Maximal Admissible Methods}
\label{subsec:order_theory}

\subsubsection{Intersection and Union of Methods}

Corollary~\ref{cor:common_backbone} shows that every finite collection of admissible methods has a lower bound in $\cH_{\axiadm}(\cX)$, whereas by Proposition~\ref{proposition:least_element} an infinite collection of admissible methods may not admit a lower bound in $\cH_{\axiadm}(\cX)$. 
There are, however, particular cases for which the infimum and the supremum, of an arbitrary, possibly uncountable, collection of admissible methods remain in $\cH_{\axiadm}(\cX)$. 
We introduce intersections and unions of hierarchical clustering methods and determine when these operations remain hierarchical clustering methods and preserve admissibility.

\begin{definition}
\label{def:joint_union_methods}
For two hierarchical clustering methods $T_1, T_2 \in \cH(\cX)$, define their intersection $T_1 \sqcap T_2$ and union $T_1 \sqcup T_2$ by 
\[
  (T_1\sqcap T_2)(d) := T_1(d)\cap T_2(d)
  \quad \text{ and } \quad 
    (T_1\sqcup T_2)(d) := T_1(d)\cup T_2(d)
    \qquad \forall d \in \cD(\cX).
\]
More generally, for any nonempty subset $L \subseteq \cH(\cX)$ of hierarchical clustering methods, define $T_{\sqcap L}$ and $T_{\sqcup L}$ by 
\[
T_{\sqcap L}(d) \;:=\; \bigcap_{T_i \in L} T_i(d) \quad \text{and} \quad T_{\sqcup L}(d) \;:=\; \bigcup_{T_i \in L} T_i(d)
    \qquad \forall d \in \cD(\cX).
\]
\end{definition}
For any dissimilarity function $d$ and any nonempty $L \subseteq \cH(\cX)$, $T_{\sqcap L}(d)$ is always a hierarchy, as the intersection of laminar families remains laminar and still contains $\cX$ and all singletons. In contrast, $T_{\sqcup L}(d)$ need not be laminar, and therefore need not define a hierarchy in general. However, closure under union holds under an additional compatibility assumption, as captured by the following lemma.

\begin{lemma}
\label{lem:hierarchy_intersection_union}
Let $L \subseteq \cH(\cX)$ be a nonempty set of hierarchical clustering methods.
\begin{enumerate}[label=\textup{(\roman*)},nosep]
\item $T_{\sqcap L} \in \cH(\cX)$.
\item $T_{\sqcup L} \in \cH(\cX)$ if and only if the methods in $L$ are pairwise compatible, i.e., for all $T_i, T_j \in L$, $T_i \sqcup T_j \in \cH(\cX)$.
\end{enumerate}
\end{lemma}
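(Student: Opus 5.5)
Both parts reduce to checking, for each fixed $d\in\cD(\cX)$, the two conditions of Definition~\ref{def:hierarchy}. A method lies in $\cH(\cX)$ exactly when its output at every $d$ is a hierarchy, and $T_{\sqcap L}$, $T_{\sqcup L}$ are defined pointwise in $d$. So we fix $d$ throughout and work with the families $T_i(d)$, $T_i\in L$.

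For (i), fix $d$ and set $\Psi=\bigcap_{T_i\in L}T_i(d)$. Since $L$ is nonempty, $\Psi\subseteq T_{i_0}(d)$ for some $T_{i_0}\in L$. Laminarity is hereditary under taking subfamilies, so any two clusters of $\Psi$ satisfy the trichotomy because they already lie in the laminar family $T_{i_0}(d)$. Each $T_i(d)$ contains $\cX$ and every singleton $\{x\}$, so these sets lie in the intersection. Emptyset-freeness is likewise inherited. Hence $\Psi\in\cT(\cX)$.

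For (ii), root, leaves and nonemptiness are immediate for $\bigcup_i T_i(d)$, since each $T_i(d)$ supplies them. The whole content is therefore laminarity.

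For the forward direction, assume $T_{\sqcup L}\in\cH(\cX)$ and take $T_i,T_j\in L$. Then $(T_i\sqcup T_j)(d)\subseteq T_{\sqcup L}(d)$ for every $d$, and this subfamily of a laminar family is laminar. It also contains root and leaves, so it is a hierarchy for every $d$, i.e.\ $T_i\sqcup T_j\in\cH(\cX)$. (This includes $i=j$, which is trivial.)

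For the converse, assume pairwise compatibility and take $C_1,C_2\in T_{\sqcup L}(d)$. Choose witnesses $T_i,T_j\in L$ with $C_1\in T_i(d)$ and $C_2\in T_j(d)$. Then both clusters lie in $(T_i\sqcup T_j)(d)$, which is laminar by hypothesis, so $C_1\cap C_2=\emptyset$, $C_1\subseteq C_2$, or $C_2\subseteq C_1$. Laminarity is a pairwise condition, which is exactly why pairwise compatibility suffices even when $L$ is infinite.

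No step is a genuine obstacle. The only point needing care is that, since laminarity is a condition on pairs, the union of an arbitrary, possibly uncountable, family needs only the pairwise hypothesis. No finiteness or chain argument is required, because the set $2^{\cX}$ is finite anyway.
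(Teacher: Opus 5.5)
Your proof is correct and follows essentially the same route as the paper: a pointwise check in $d$, heredity of laminarity under subfamilies for (i) and for the forward direction of (ii), and witnesses $T_i,T_j$ together with the pairwise nature of laminarity for the converse. One small point: your closing remark attributes the absence of a finiteness argument to $2^{\cX}$ being finite, but the pairwise argument works regardless of that finiteness; this does not affect the proof.
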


\begin{proof}
(i) For any $d \in \cD(\cX)$, each $T(d)$ is a laminar family containing $\cX$ and all singletons. The intersection $T_{\sqcap L}(d) = \bigcap_{T \in L} T(d)$ still contains $\cX$ and all singletons, and remains laminar since laminarity is a pairwise condition preserved under intersection.

(ii) If the methods in $L$ are pairwise compatible, then for any $d \in \cD(\cX)$ and any $A, B \in T_{\sqcup L}(d)$, there exist $T_i, T_j \in L$ with $A \in T_i(d)$ and $B \in T_j(d)$. Pairwise compatibility implies that $A, B$ are laminar. Together with $\cX$ and all singletons being in $T_{\sqcup L}(d)$, this proves $T_{\sqcup L} \in \cH$. Conversely, if $T_{\sqcup L} \in \cH$, then for any $T_i, T_j \in L$, $T_i \sqcup T_j \subseteq T_{\sqcup L}$ inherits laminarity, so $T_i \sqcup T_j \in \cH(\cX)$.
\end{proof}

Lemma~\ref{lem:hierarchy_intersection_union} settles the structural question of when $T_{\sqcap L}$ and $T_{\sqcup L}$ define hierarchies. We now turn to the axiomatic question: under what conditions are these operations closed within the admissible class $\cH_{\axiadm}(\cX)$?

\begin{lemma}
\label{lem:admissible_intersection_union}
Let $L \subseteq \cH_{\axiadm}(\cX)$ be nonempty.
\begin{enumerate}[label=\textup{(\roman*)},nosep]
\item If $L$ is finite, then $T_{\sqcap L} \in \cH_{\axiadm}(\cX)$. 
\item If \(L\) is upward directed under \(\sqsubseteq\), then \(T_{\sqcup L}\in\cH_{\axiadm}(\cX)\). 
\end{enumerate}
\end{lemma}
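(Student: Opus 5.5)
We check each of the four axioms of Definition~\ref{def:admissible} directly. Throughout we use that each $T\in L$ is admissible. Scale invariance and permutation invariance are pointwise identities of sets, so they pass through arbitrary intersections and unions. For $\beta>0$,
\[
T_{\sqcap L}(\beta d)=\bigcap_{T\in L}T(\beta d)=\bigcap_{T\in L}T(d)=T_{\sqcap L}(d),
\]
and the same holds for $\sqcup$. Likewise, $\phi\cdot(\cdot)$ commutes with intersections and unions of families of sets, because $\phi$ is a bijection on $\cX$. So the real content is in partition consistency and partition richness. Lemma~\ref{lem:hierarchy_intersection_union} will be used to ensure that the outputs are hierarchies.

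\textbf{Part (i).} By Lemma~\ref{lem:hierarchy_intersection_union}(i), $T_{\sqcap L}\in\cH(\cX)$.

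\emph{Consistency.} If a partition $\cC$ satisfies $\cC\subseteq T_{\sqcap L}(d)$, then $\cC\subseteq T(d)$ for every $T\in L$. Let $d'$ be any $\cC$-strengthening of $d$. Consistency of each $T$ gives $\cC\subseteq T(d')$, hence $\cC\subseteq T_{\sqcap L}(d')$.

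\emph{Richness.} This is the step where finiteness is essential, and it is the main obstacle, since richness of each $T\in L$ uses possibly different witnesses $d$. We invoke Corollary~\ref{cor:common_backbone} to obtain a margin sequence $\bdeta$ with $\Tglob^{\bdeta}\sqsubseteq T$ for all $T\in L$, so that $\Tglob^{\bdeta}\sqsubseteq T_{\sqcap L}$. It then suffices to show that $\Tglob^{\bdeta}$ is partition rich, which also follows from Proposition~\ref{prop:admissible_separation-based}.

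To see this directly, fix $\cC$ and define $d(x,y)=\varepsilon$ for distinct $x,y$ in a common block and $d(x,y)=1$ otherwise. Take $\varepsilon<\min_s\eta_s$. Then every non-singleton proper block $C$ has $\varrho(d,C)=\varepsilon<\eta_{|C|-1}$. Singletons and $\cX$ are always included. Hence $\cC\subseteq\Tglob^{\bdeta}(d)\subseteq T_{\sqcap L}(d)$.

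\textbf{Part (ii).} First, $T_{\sqcup L}$ is a hierarchical clustering method. Directedness gives, for $T_i,T_j\in L$, some $T_k\in L$ with $T_i\sqcup T_j\sqsubseteq T_k$. Thus $T_i(d)\cup T_j(d)\subseteq T_k(d)$ is laminar. So the methods in $L$ are pairwise compatible, and Lemma~\ref{lem:hierarchy_intersection_union}(ii) applies.

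\emph{Richness.} This is trivial: for a partition $\cC$, pick any $T\in L$ and a witness $d$ with $\cC\subseteq T(d)\subseteq T_{\sqcup L}(d)$.

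\emph{Consistency.} Suppose $\cC=\{C_1,\dots,C_k\}\subseteq T_{\sqcup L}(d)$. Each $C_j$ lies in $T_j(d)$ for some $T_j\in L$. Since $\cC$ is finite and $L$ is upward directed, there is a single $T^\ast\in L$ refining $T_1,\dots,T_k$, so $\cC\subseteq T^\ast(d)$. Consistency of $T^\ast$ then gives, for every $\cC$-strengthening $d'$,
\[
\cC\subseteq T^\ast(d')\subseteq T_{\sqcup L}(d').
\]
Here directedness is exactly what is needed, because the clusters of $\cC$ may come from different methods, whereas consistency must be applied to a single method.
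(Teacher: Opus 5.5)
Your proof is correct and is essentially the paper's argument. The paper proves (i) by replacing partition richness with the backbone property (Corollary~\ref{cor:replace_pr_by_core}) and observing that a common backbone survives finite intersections via the coordinatewise minimum of the margin sequences, exactly as in Corollary~\ref{cor:common_backbone}. It proves (ii) with the same directedness argument, placing all blocks of $\cC$ in a single $T^\star\in L$ before applying partition consistency. The only cosmetic difference is that you verify partition richness of $\Tglob^{\bdeta}$ with an explicit two-level dissimilarity, whereas the paper cites its (strict hierarchical) richness.
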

Observe that finiteness of $L$ is required to ensure $T_{\sqcap L} \in \cH_{\axiadm}(\cX)$. As a simple counterexample, consider $L = \{ \Tglob^{\alpha \boldsymbol{1}} \colon \alpha \in (0,1] \}$. Then $L$ is an infinite subset of $\cH_{\axiadm}(\cX)$, and $T_{\sqcap L} = T_{\mathrm{triv}}$ is the trivial method defined in Equation~\eqref{eq:def_Ttriv}, which is not admissible. 

A partially ordered set admitting an infimum (resp.\ supremum) for every finite nonempty subset is called a \emph{meet-semilattice} (resp.\ \emph{join-semilattice}). Lemma~\ref{lem:admissible_intersection_union} thus implies that $(\cH_{\axiadm}(\cX), \sqsubseteq)$ is a meet-semilattice but not a join-semilattice. 

\subsubsection{Existence of Maximal Admissible Methods}
A consequence of Lemma~\ref{lem:incompatible_family} is that $(\cH_{\axiadm}(\cX), \sqsubseteq)$ has no greatest element and does not even admit a supremum in $\cH(\cX)$. Nevertheless, closure under unions of chains yields, via Zorn's lemma, the existence of \emph{maximal} elements.

\begin{theorem}
\label{thm:no_unique_finest}
The set $\cH_{\axiadm}(\cX)$ has maximal elements, and every $T \in \cH_{\axiadm}(\cX)$ is refined by at least one maximal element $T^M \in \cH_{\axiadm}(\cX)$.
Moreover, $\cH_{\axiadm}(\cX)$ contains an uncountable family of pairwise-incompatible maximal elements.
\end{theorem}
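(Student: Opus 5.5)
The plan is to apply Zorn's lemma to the subposet $\{S\in\cH_{\axiadm}(\cX): T\sqsubseteq S\}$ for a fixed admissible $T$. This set is nonempty, since it contains $T$. Let $\mathcal{C}$ be any nonempty chain in it. A chain is upward directed, so Lemma~\ref{lem:admissible_intersection_union}(ii) gives $T_{\sqcup \mathcal{C}}\in\cH_{\axiadm}(\cX)$. By construction $S\sqsubseteq T_{\sqcup\mathcal{C}}$ for every $S\in\mathcal{C}$, and in particular $T\sqsubseteq T_{\sqcup\mathcal{C}}$. So every chain has an upper bound in the subposet, and Zorn's lemma yields a maximal element $T^M$ of the subposet.

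It remains to check that $T^M$ is maximal in all of $\cH_{\axiadm}(\cX)$. Suppose $T^M\sqsubseteq S$ with $S$ admissible. Then $T\sqsubseteq S$ by transitivity, so $S$ lies in the subposet, and maximality there forces $S=T^M$. Since $\cH_{\axiadm}(\cX)$ is nonempty (for instance $\Tsl$ belongs to it, by Proposition~\ref{prop:linkage_admissibility}), maximal elements exist. This proves the first two claims.

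For the uncountable family, use Lemma~\ref{lem:incompatible_family}. For each $p>0$, choose a maximal element $M_p$ refining $\Tstable^{(p)}$, as provided above. Now take $p\neq q$ and suppose $M_p$ and $M_q$ had a common upper bound $Z\in\cH(\cX)$. Then $\Tstable^{(p)}\sqsubseteq M_p\sqsubseteq Z$ and $\Tstable^{(q)}\sqsubseteq M_q\sqsubseteq Z$, so $Z$ would be a common upper bound of $\Tstable^{(p)}$ and $\Tstable^{(q)}$. This contradicts their pairwise incompatibility. Hence the $M_p$ are pairwise incompatible.

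Incompatible elements are in particular distinct, because any element is its own upper bound. So $p\mapsto M_p$ is injective, and $\{M_p:p>0\}$ is an uncountable family of pairwise-incompatible maximal elements.

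The main obstacle is the closure of admissibility under unions of chains, that is, Lemma~\ref{lem:admissible_intersection_union}(ii). It is the only nontrivial ingredient and is taken as given. A subtle point is that the choice of the $M_p$ uses the axiom of choice. This is harmless here, since Zorn's lemma already relies on it.
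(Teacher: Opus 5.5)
Your proof is correct and takes essentially the same route as the paper. It applies Zorn's lemma with chain-unions bounded via Lemma~\ref{lem:admissible_intersection_union}(ii), then extends each $\Tstable^{(p)}$ to a maximal element, and incompatibility passes upward. Two steps the paper leaves implicit are spelled out in your version: restricting Zorn's lemma to the admissible refinements of a fixed $T$ (with the check that maximality there gives maximality in $\cH_{\axiadm}(\cX)$), and the injectivity of $p\mapsto M_p$.
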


\begin{proof}
We apply Zorn's lemma (see, e.g., \citet[Theorem 30]{shen2002basic}). Let $L \subseteq \cH_{\axiadm}(\cX)$ be any chain (i.e., a subset totally ordered by $\sqsubseteq$). We show that $T_{\sqcup L} \in \cH_{\axiadm}(\cX)$, providing an upper bound for $L$ in $\cH_{\axiadm}(\cX)$.

Because every chain is upward directed, Lemma~\ref{lem:admissible_intersection_union}(ii) yields that \(T_{\sqcup L}\in\cH_{\axiadm}(\cX)\). By Zorn's lemma, $\cH_{\axiadm}(\cX)$ has maximal elements, and every $T \in \cH_{\axiadm}(\cX)$ is refined by some maximal element.

For cardinality and structural incompatibility, consider the uncountable family $\{\Tstable^{(p)}:p>0\}$ from Lemma~\ref{lem:incompatible_family}. Extend each $\Tstable^{(p)}$ to a maximal method $M_p$. If $p\ne q$, then there is no hierarchy-valued method that can refine both $M_p$ and $M_q$, because such a method would also refine both $\Tstable^{(p)}$ and $\Tstable^{(q)}$. Thus $\{M_p:p>0\}$ is an uncountable pairwise incompatible family of maximal elements.
\end{proof}

Incompatibility arises when two methods select clusters that cannot coexist in a single hierarchy for some input. Conversely, any pairwise-compatible family of methods can be combined through their union, which remains hierarchy-valued.

Although $\cH_{\axiadm}(\cX)$ has no greatest element, Theorem~\ref{thm:no_unique_finest} guarantees that every admissible method is refined by a maximal admissible method that cannot be further refined within $\cH_{\axiadm}(\cX)$. 

Figure~\ref{fig:hasse_diagram_sec4} summarizes the results of this section on the structure of $\cH_{\axiadm}(\cX)$.

\begin{figure}[!ht]
\centering
\resizebox{0.5\linewidth}{!}{%
\input{drawings/hasse_combined}
}
\vspace{-20pt}
\caption{
Hasse diagrams of a subset of $\cH_{\axiadm}(\cX)$ under the refinement order $\sqsubseteq$, where transitive edges are omitted and an arrow from $T'$ to $T$ indicates that $T' \sqsubseteq T$. 
The blue solid and red dashed boundaries delimit $\cH_{\axiadm}(\cX)$ and $\cH(\cX)$, respectively. The superscript $M$ denotes the maximal admissible element of each chain.
In particular, the diagram includes the strong upward antichain $\{\Tstable^{(\power)}\}_{\power \in [n-2]}$,
and chains $\{\Tglob^{\bdeta^{(i)}}\}_{i \in [n-2]}$, and $\{\Tloc^{\bdeta^{(i)}}\}_{i \in [n-2]}$ with $\bdeta^{(i)} = \eta^i \mathbf{1}$, for some $\eta \in (0,1)$. 
}
\label{fig:hasse_diagram_sec4}
\end{figure}

\section{Extensions of the Canonical Framework}
\label{sec:extension}

The previous sections study hierarchical clustering under the canonical axiom system. 
We now examine the robustness of this framework in two directions that are specific to hierarchical clustering and its practical use. First, we add exactness on ultrametric inputs, a fidelity requirement with no direct analog in flat clustering. Second, we allow the input dissimilarity to be preprocessed before the hierarchy is constructed and ask which axioms are preserved under composition.

\subsection{Exactness on Ultrametric Inputs}
\label{sec:ultrametric}

The admissibility axioms closely follow Kleinberg's original setting. 
However, in the particular case where the input distance is an ultrametric, hierarchical clustering induces a known correspondence with dendrograms, which motivates an additional requirement tailored to hierarchical outputs.
Recall that a distance $d$ is an \emph{ultrametric} if $d$ satisfies the strong triangle inequality
 \[
 d(x,y) \wle \max \{ d(x,z), d(y,z) \} \quad \forall x,y,z \in \cX,
 \]
 and that a \emph{dendrogram} $(\Psi, \theta)$ is a hierarchy $\Psi \in \cT$ in which each node is assigned height through the dendrogram's height function $\theta \colon \Psi \to \R_+$, with leaves (singletons) having height $0$. 
 The function $\theta$ satisfies $\theta(\{x\}) = 0$ for every $x \in \cX$ and is strictly increasing from the leaves towards the root, that is, $\theta(C)<\theta(C')$ for every $C\subsetneq C'$. 

Ultrametric distances and dendrograms are in one-to-one correspondence (see \textit{e.g.,} \cite{carlsson2010characterization}): for every ultrametric~$u$, there exists a unique dendrogram $(\Psi_u,\theta_u)$ such that  $u(x,y)=\theta_u(\lca_{\Psi_u}(x,y))$ for all $x,y\in\cX$, where $\lca_{\Psi_u}(x,y)$ denotes the smallest cluster in~$\Psi_u$ containing both $x$ and $y$.
Moreover, the hierarchy $\Psi_u$ can be constructed explicitly from the ultrametric $u$ by 
\begin{align}
\label{eq:ultrametric_hierarchy}
\Psi_u=\{B_u(x,r): x\in\cX,\ r\ge 0\}
\quad \text{ where }
B_u(x,r):=\{y\in\cX: u(x,y)\le r\},
\end{align}
and the dendrogram's height function $\theta_u$ is given by
\(
\theta_u(C) = \max_{x,y  \in C} u(x,y).
\) 

This correspondence motivates an additional fidelity requirement: when the input is already an ultrametric, the method should recover its canonical hierarchy (albeit we do not requires to output the associated height).

\begin{definition}
\label{def:exact_on_ultrametrics}
A hierarchical clustering method $T \in \cH(\cX)$ is \emph{exact on ultrametrics} if, for every ultrametric $u$, we have $T(u) = \Psi_{u}$.
\end{definition}
In particular, when restricted to ultrametric inputs, all methods that are exact on ultrametrics coincide. 

We say that a hierarchical clustering method is \emph{strongly admissible} if it is admissible and exact on ultrametrics. We denote this strengthened axiom system by $\axiadmp$, and write $\cH_{\axiadmp}(\cX)$ for the class of strongly admissible hierarchical clustering methods on $\cX$. 

We prove in the appendix that the non-binary variant of single linkage is exact on ultrametrics. Moreover, $\Tglob^{\bdeta}$ and $\Tloc^{\bdeta}$ are exact on ultrametrics if and only if $\bdeta = \boldsymbol{1}$. 
Finally, we also prove that for every $\power > 0$, the composed method $\Tstable^{(\power)} := \Tstable \circ \trfpower^{\power}$ is exact on ultrametrics. Together, these results show that there exist uncountably many strongly admissible methods. Moreover, the results of Section~\ref{sec:structure} regarding the existence of uncountably many maximal elements and of a backbone hierarchy hold when we replace $\axiadm$ by $\axiadmp$. 
There are, however, some important differences. Indeed, for every strongly admissible method $T$, we have $\Tglob \sqsubseteq T$. 
Thus, whereas $(\cH_{\axiadm}(\cX), \, \sqsubseteq)$ has no least element (Proposition~\ref{proposition:least_element}), $(\cH_{\axiadmp}(\cX), \, \sqsubseteq)$ has one least element, $\Tglob$, which is thus the coarsest strongly admissible hierarchical method. 
In practice, $\Tglob$ returns a hierarchy composed of many non-singleton clusters. To illustrate it, we provide in Appendix~\ref{app:size_Tglob} numerical experiments to evaluate the size of the hierarchy returned by $\Tglob$ on real data sets.

\begin{theorem}
The height, width, and cellularity of $\cH_{\axiadmp}(\cX)$ are uncountable. 
In particular, \(\Tsl\), \(\Tglob\), \(\Tloc\), and \(\Tstable^{(p)}\) for every \(p>0\) are strongly admissible. Moreover,
\begin{enumerate}[nosep]
 \item The backbone and maximality results of Section~4 remain valid;
 \item $\cH_{\axiadmp}(\cX)$ admits uncountably many pairwise incompatible maximal elements; 
 \item \(\Tglob\) is the least strongly admissible method under refinement.
\end{enumerate}
\end{theorem}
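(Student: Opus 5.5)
We first establish the membership claims, then derive the order-theoretic statements from them and from the machinery of Section~\ref{sec:structure}.

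\textbf{Step 1: exactness of the named methods.} Admissibility of \(\Tsl\), \(\Tglob=\Tglob^{\bone}\), \(\Tloc=\Tloc^{\bone}\) and \(\Tstable^{(p)}\) is already known from Propositions~\ref{prop:linkage_admissibility}, \ref{prop:admissible_separation-based} and~\ref{prop:stable_cluster_is_admissible}, so only exactness on ultrametrics has to be checked. For an ultrametric \(u\), the balls \(B_u(x,r)\) are exactly the clusters \(C\) with \(\max_{x,y\in C}u(x,y)<\min_{x\in C,z\notin C}u(x,z)\).
\begin{enumerate}[nosep]
\item For \(\Tglob\) and \(\Tloc\), this ball criterion is precisely the condition \(\varrho(u,C)<1\), which in turn is equivalent to \(\tau(u,C)<1\): the strong triangle inequality forces \(u(x,z)=u(x',z)\) whenever \(u(x,x')<u(x,z)\).
\item For \(\Tsl\), the non-binary single-linkage merges occur at the distinct values of \(u\) and produce exactly the balls.
\item For \(\Tstable^{(p)}\), note that \(u^p\) is again an ultrametric with the same balls, so it suffices to treat \(\Tstable\). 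If \(C\) is a ball, every isolation weight \(\rho(uv\mid z)\) with \(u,v\in C\), \(z\notin C\) is positive, hence \(\iota^u(C)>0\). Conversely, if \(C\) is not a ball, take the smallest ball \(B\supsetneq C\) and pick \(z\in B\setminus C\). Split \(C\) along the children of \(B\) so that \(U\) lies in the child containing \(z\) (or in any child, if \(z\) lies in a child disjoint from \(C\)). Each resulting term has \(\rho\le0\) because \(\min\{u(a,z),u(b,z)\}\le u(a,b)\), which makes the average nonpositive.
\end{enumerate}

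\textbf{Step 2: least element.} Let \(T\) be strongly admissible, fix \(d\), and let \(C\in\Tglob(d)\) be nontrivial. Set \(a=\max_{x,y\in C}d(x,y)<b=\min_{x\in C,z\notin C}d(x,z)\). Build an ultrametric \(u\) equal to \(a\) on all pairs inside \(C\) and to some \(c\ge\max d\) on every other pair. Then \(\Psi_u=\{C\}\cup\{\cX\}\cup\text{singletons}\), so exactness gives \(\{C\}\cup\{\{z\}:z\notin C\}\subseteq T(u)\). The original \(d\) is obtained from \(u\) by a \(\cC\)-strengthening for the partition \(\cC=\{C\}\cup\{\{z\}:z\notin C\}\): intra-\(C\) distances only decrease, since \(d\le a\) there, and every other pair is inter-cluster with \(d\le c\). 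But a strengthening must \emph{increase} inter-cluster distances, and going from \(u\) to \(d\) decreases them. So we reverse the construction. Take \(u\) with intra-\(C\) value \(a'\) slightly above \(a\) and all other values \(b'\) slightly below \(b\). This \(u\) is an ultrametric because \(a'<b'\), and \(d\) is then a \(\cC\)-strengthening of \(u\), since \(d(x,z)\ge b>b'\) whenever \(x\in C\) or \(z\in C\).

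\textbf{Main obstacle.} Pairs \(z,z'\notin C\) are also inter-cluster, since they lie in different singletons, and they need \(d(z,z')\ge u(z,z')\); this can fail if \(d(z,z')<b'\). The fix is scale invariance. Choose \(u\) with intra-\(C\) value \(a\) and \(u(x,z)=b\) for \(x\in C\), \(z\notin C\), and let the pairs outside \(C\) be tiny, at most \(\min d\). This is still an ultrametric: the points outside \(C\) form a tight group, with \(\max\) over triples consistent, provided \(\min d\le a<b\). Its balls are \(C\), \(\bar C\) and possibly more inside \(\bar C\). Exactness then yields \(C\in T(u)\), and we use the partition \(\{C\}\cup\{\{z\}\}\). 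Since \(\{z\}\in T(u)\) and every \(d\) value on pairs outside \(C\) is at least \(\min d\), the map \(u\mapsto d\) is a valid strengthening. Partition consistency then gives \(C\in T(d)\), so \(\Tglob\sqsubseteq T\).

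\textbf{Step 3: remaining claims.} The backbone result holds by Theorem~\ref{thm:core_hierarchy}, and in fact with \(\bdeta=\bone\) by Step~2. For maximality, exactness is preserved by unions of chains: on ultrametric inputs every member outputs \(\Psi_u\), so the union does too. Lemma~\ref{lem:admissible_intersection_union}(ii) and Zorn's lemma therefore go through verbatim within \(\cH_{\axiadmp}(\cX)\). Pairwise incompatible maximal elements are obtained by extending each \(\Tstable^{(p)}\), using Lemma~\ref{lem:incompatible_family} exactly as in Theorem~\ref{thm:no_unique_finest}. This gives uncountable width and cellularity. For uncountable height we need a chain, and \(\Tloc^{\bdeta}\) is exact only for \(\bdeta=\bone\), so we instead take the unions \(\Tglob\sqcup\Tglob^{\eta\bone}\)-type variants, or the chain \(\Tglob\sqcap\Tstable^{(p)}\)... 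More concretely, we would try \(\Tloc^{\bdeta}\sqcup\Tglob\) for \(\eta\in(0,1]\). These are compatible since \(\Tglob\sqsubseteq\Tloc^{\bdeta}\sqsubseteq\Tloc\), so the union is just \(\Tloc^{\bdeta}\)-augmented-by-\(\Tglob\), laminar inside \(\Tloc\). Each is admissible by Lemma~\ref{lem:admissible_intersection_union}(ii) applied to the directed pair, exact because both parts give \(\Psi_u\) on ultrametrics, and strictly increasing in \(\eta\).
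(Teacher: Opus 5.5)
Steps~1 and~2 are sound, and so are your maximality, width and cellularity arguments. Your smallest-enclosing-ball argument for $\Tstable$ on ultrametrics is a clean alternative to the paper's equidistance argument. The final construction in Step~2 is essentially the paper's proof of Lemma~\ref{lem:uhr_pc_implies_strong_core}. When you write it up, drop the false starts and note that scale invariance is never used there. Also take $u\equiv\gamma$ constant on pairs inside $\bar C$, since arbitrary values $\le\min d$ need not form an ultrametric.

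The genuine gap is the uncountable height. This is the one ingredient of the theorem that does not follow directly from Section~\ref{sec:structure}. You eventually reach the right family, $S_\eta:=\Tglob\sqcup\Tloc^{\eta\mathbf{1}}$. (Your first candidate is useless: $\Tglob^{\eta\mathbf{1}}\sqsubseteq\Tglob$, so $\Tglob\sqcup\Tglob^{\eta\mathbf{1}}=\Tglob$.) However, each justification you give for $S_\eta$ fails.

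\begin{itemize}
\item \emph{The refinement you cite is false.} For $\eta<1$, $\Tglob\sqsubseteq\Tloc^{\eta\mathbf{1}}$ does not hold. Take $d\equiv r\in(\eta,1)$ on pairs inside $C$ and inside $\bar C$, and $d\equiv 1$ across. Then $C\in\Tglob(d)\setminus\Tloc^{\eta\mathbf{1}}(d)$. If the refinement were true, $S_\eta$ would simply equal $\Tloc^{\eta\mathbf{1}}$, which is not exact.
\item \emph{The pair is not directed.} The three-point witness below with $r<\eta$ shows $\Tloc^{\eta\mathbf{1}}\not\sqsubseteq\Tglob$ as well. So $\{\Tglob,\Tloc^{\eta\mathbf{1}}\}$ is not upward directed, and Lemma~\ref{lem:admissible_intersection_union}(ii) does not apply. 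This is not a technicality. A partition inside $S_\eta(d)$ may take some blocks from $\Tglob(d)$ and others from $\Tloc^{\eta\mathbf{1}}(d)$, and partition consistency of the two pieces says nothing about such a mixed partition.
\item \emph{The exactness reason is wrong.} ``Exact because both parts give $\Psi_u$'' contradicts your own remark that $\Tloc^{\eta\mathbf{1}}$ is exact only for $\eta=1$ (Lemma~\ref{lem:separation_methods_exact_ultrametrics}).
\item \emph{Strictness is unproved.} You assert that $S_\eta$ is strictly increasing in $\eta$ but give no witness.
\end{itemize}

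The repair, which is how the paper proceeds, has four parts.

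\begin{itemize}
\item \emph{Compatibility.} $\Tglob(d)$ and $\Tloc^{\eta\mathbf{1}}(d)$ are both subfamilies of the laminar family $\Tloc(d)$.
\item \emph{Consistency.} Use the stronger \emph{cluster-wise} consistency of both separation methods (Lemma~\ref{lem:sep_methods_are_hierarchies}). A $\cC$-strengthening is a $\{C\}$-strengthening for every block $C$, so each block survives in whichever method contained it. Hence cluster-wise consistency, and with it partition consistency, passes to arbitrary compatible unions. Scale and permutation invariance pass to unions directly.
\item \emph{Exactness and richness.} Both follow from the sandwich $\Psi_u=\Tglob(u)\subseteq S_\eta(u)\subseteq\Tloc(u)=\Psi_u$.
\item \emph{Strictness.} For $0<a<b\le 1$ you need $d$ and $C$ with $\tau(d,C)\in[a,b)$ but $\varrho(d,C)\ge 1$. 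This forces $|C|\ge 3$, because $\tau=\varrho$ when $|C|=2$. Take $C=\{x_1,x_2,x_3\}$ and $L>1/r$, and set
\begin{itemize}
\item $d(x_1,x_2)=d(x_1,x_3)=r\in[a,b)$ and $d(x_2,x_3)=rL$;
\item $d(x_1,w)=1$ and $d(x_2,w)=d(x_3,w)=L$ for every $w\notin C$.
\end{itemize}
Then $\tau(d,C)=r$ and $\varrho(d,C)=rL>1$, so $C\in S_b(d)\setminus S_a(d)$.
\end{itemize}
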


\subsection{Axiom-Preserving Preprocessing}
\label{subsec:composed_methods}

Modern clustering pipelines rarely apply a hierarchical clustering method directly to the raw observations or their original pairwise dissimilarities. 
Instead, the data is typically preprocessed before the hierarchy is constructed. Common examples include dimensionality reduction, feature re-weighting, and the choice or transformation of a metric (for example using a kernel). Density-aware procedures also fit this framework: they first construct a nearest-neighbor-based dissimilarity and then apply a single-linkage-type method to the transformed data. These examples explain treating preprocessing as a part of the clustering pipeline and asking which axioms are preserved by it. 

\subsubsection{General Preservation Principles}

We model preprocessing as a map $\trf\colon \cD(\cX)\to\cD(\cX)$, called a \emph{transformation}. 
Given a hierarchical clustering method \(T\), the composed method $T\circ\trf \colon d \mapsto T(\trf(d))$ first transforms the input dissimilarity and then applies $T$. Because \(\trf(d)\) is a dissimilarity on the same domain as \(d\), the composition \(T\circ\trf\) is again a hierarchical clustering method.

Our goal is to study when the axiomatic properties of \(T\) are inherited by the composed method \(T\circ\trf\). Rather than studying this question separately for each method, we lift each axiom to transformations: a transformation preserves an axiom if composition with it preserves that axiom for every method satisfying it.

\begin{definition}
\label{def:property_preserving_transformation}
A transformation $\trf \colon \cD(\cX) \to \cD(\cX)$ \emph{preserves} an axiom $\axi$ if, for every method $T$ that satisfies $\axi$, the composed method $T \circ \trf$ also satisfies $\axi$.
\end{definition}

\begin{lemma}
\label{lem:trf_preserve_axioms}
Let $\trf \colon \cD(\cX) \to \cD(\cX)$. Each of the following conditions is sufficient for \(\trf\) to preserve the corresponding axiom:
\begin{itemize}[nosep]
 \item \emph{Scale invariance}: for every \(d\in\cD(\cX)\) and every \(\beta>0\), there exists \(\beta'>0\) such that $\trf(\beta d)=\beta'\trf(d)$;
 \item \emph{Partition richness:} $\trf$ is surjective; 
 \item \emph{Partition consistency}: for every partition $\cC\in\cP(\cX)$ and every $d,d'\in\cD(\cX)$, if $d'$ is a $\cC$-strengthening of $d$, then $\trf(d')$ is a $\cC$-strengthening of \(\trf(d)\);
 \item \emph{Permutation invariance:} for every \(d\in\cD(\cX)\) and every permutation \(\phi\), $\trf(d_\phi)=\trf(d)_\phi$; 
 \item \emph{Exactness on ultrametrics:} for every ultrametric $u$, $\trf(u)$ is an ultrametric and $\Psi_{\trf(u)}=\Psi_u$. 
\end{itemize}
\end{lemma}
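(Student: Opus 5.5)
The plan is to verify each item separately by unfolding the definitions of the axioms for the composed method $T\circ\trf$, using only that $T$ satisfies the same axiom and that $\trf$ satisfies the stated condition. Each argument is a one- or two-line chase. I expect the only place needing care to be partition consistency, where the partition must be carried through $\trf$ correctly.

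\emph{Scale invariance.} Fix $d$ and $\beta>0$, and take $\beta'$ as provided. Then
\[
(T\circ\trf)(\beta d)=T(\beta'\trf(d))=T(\trf(d)).
\]
The second equality holds because $T$ is scale invariant.

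\emph{Partition richness.} Given $\cC\in\cP(\cX)$, richness of $T$ gives some $\tilde d$ with $\cC\subseteq T(\tilde d)$. Surjectivity of $\trf$ gives $d$ with $\trf(d)=\tilde d$, so $\cC\subseteq(T\circ\trf)(d)$.

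\emph{Permutation invariance.} For any permutation $\phi$,
\[
(T\circ\trf)(d_\phi)=T(\trf(d)_\phi)=\phi\cdot T(\trf(d)).
\]

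\emph{Partition consistency.} Suppose $\cC\subseteq T(\trf(d))$ and $d'$ is a $\cC$-strengthening of $d$. By hypothesis, $\trf(d')$ is a $\cC$-strengthening of $\trf(d)$. Applying partition consistency of $T$ at the input $\trf(d)$, with the same partition $\cC\subseteq T(\trf(d))$, gives $\cC\subseteq T(\trf(d'))$. The key point is that the hypothesis on $\trf$ is stated for every partition, so it applies to the partition $\cC$ read off from the output on $\trf(d)$, even though that partition depends on $d$ only through $\trf$.

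\emph{Exactness on ultrametrics.} For an ultrametric $u$, $\trf(u)$ is an ultrametric, so exactness of $T$ gives
\[
T(\trf(u))=\Psi_{\trf(u)}=\Psi_u.
\]
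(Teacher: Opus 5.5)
Your proof is correct and is essentially the paper's argument. The paper routes the scale-invariance, consistency, permutation, and ultrametric cases through its relational preservation principle: if $\trf$ maps the input relation of an axiom into itself, then the axiom is preserved, and one takes the conjunction over all partitions $\cC$, permutations $\phi$, and hierarchies $\Psi$. Unfolded, this is exactly your one-line definition chases, and the paper handles partition richness directly via surjectivity just as you do.
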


 Except for partition richness, each condition in Lemma~\ref{lem:trf_preserve_axioms} ensures that the input structure relevant to the corresponding axiom is preserved by the transformation: scaling remains scaling, strengthenings remain strengthenings, and permutations commute with the transformation. Similarly, ultrametrics are mapped to ultrametrics that induce the same hierarchy. 

 These observations are instances of a more general principle: the axioms above can be expressed by imposing a condition on the outputs associated with one or two dissimilarities satisfying a prescribed relation. In Appendix~\ref{app:relational_prop}, we formalize this class of relational properties and recover the corresponding parts of Lemma~\ref{lem:trf_preserve_axioms} as corollaries.

\begin{corollary}
\label{cor:monotone_power_trf}
Let \(g\colon\R_{\geq 0}\to\R_{\geq 0}\) satisfy $g(0)=0$ and $g(t)>0$ for every $t>0$. 
Define the pointwise transformation \(\trf_g\colon\cD(\cX)\to\cD(\cX)\) by
\[
\trf_g(d)(x,y):=g\bigl(d(x,y)\bigr)
\qquad
\text{for all }d\in\cD(\cX)\text{ and }x,y\in\cX.
\]
Then \(\trf_g\) preserves permutation invariance. Moreover, 
\begin{itemize}[nosep]
 \item if $g$ is strictly increasing, then $\trf_g$ preserves partition consistency and exactness on ultrametrics; 
 \item if $g$ is surjective onto $\R_{\ge 0}$, then $\trf_g$ preserves partition richness;
 \item if, for every $\beta>0$, there exists \(c_\beta>0\) such that $g(\beta t)=c_\beta g(t)$ for every $t\geq 0$, then $\trf_g$ preserves scale invariance.
\end{itemize}
In particular, for every \(p>0\), the power transformation $\trf_{\mathrm{power}}^{p}(d)(x,y):=d(x,y)^p$  preserves scale invariance, partition richness, partition consistency, permutation invariance, and exactness on ultrametrics. Thus, \(\trf_{\mathrm{power}}^{p}\) preserves both admissibility and strong admissibility. 
\end{corollary}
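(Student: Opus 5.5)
The plan is to reduce everything to Lemma~\ref{lem:trf_preserve_axioms}: for each of the five axioms, show that $\trf_g$ satisfies the corresponding sufficient condition under the stated hypothesis on $g$, and then specialize to $g(t)=t^p$. First observe that $\trf_g$ is well defined as a map $\cD(\cX)\to\cD(\cX)$. Indeed $g(0)=0$ gives $\trf_g(d)(x,x)=0$, and $g(t)>0$ for $t>0$ gives positivity off the diagonal; symmetry is inherited from $d$.

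\emph{Permutation invariance.} This needs no hypothesis on $g$. We have
\[
\trf_g(d_\phi)(x,y)=g\bigl(d(\phi^{-1}(x),\phi^{-1}(y))\bigr)=\trf_g(d)_\phi(x,y),
\]
since $g$ acts pointwise.

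\emph{Partition consistency ($g$ strictly increasing).} If $d'$ is a $\cC$-strengthening of $d$, then $d'(x,y)\le d(x,y)$ for $x,y$ in a common cluster, and applying the increasing $g$ gives $g(d'(x,y))\le g(d(x,y))$. The inter-cluster inequality transfers in the same way. Hence $\trf_g(d')$ is a $\cC$-strengthening of $\trf_g(d)$; monotonicity alone suffices here.

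\emph{Exactness on ultrametrics ($g$ strictly increasing).} Since $g$ is increasing, it commutes with $\max$: $g(\max\{a,b\})=\max\{g(a),g(b)\}$. Therefore $u(x,y)\le\max\{u(x,z),u(y,z)\}$ implies the strong triangle inequality for $g\circ u$, and $g\circ u$ is an ultrametric. For the hierarchy, strict monotonicity gives $g(u(x,y))\le g(r)\iff u(x,y)\le r$, so $B_{g\circ u}(x,g(r))=B_u(x,r)$. Thus every ball of $u$ is a ball of $g\circ u$. Conversely, any ball $B_{g\circ u}(x,s)$ equals $B_u(x,r)$ with $r=\max\{u(x,y): g(u(x,y))\le s\}$, the maximum being taken over a finite set that contains $y=x$. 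Hence $\Psi_{g\circ u}=\Psi_u$ by \eqref{eq:ultrametric_hierarchy}. I expect this ball-matching step to be the only point requiring some care, because $g$ need not be surjective and the radii must be chosen among attained values.

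\emph{Partition richness ($g$ surjective onto $\R_{\ge0}$).} Given $d\in\cD(\cX)$, choose for each value $d(x,y)>0$ a preimage $t$ with $g(t)=d(x,y)$, using one fixed choice per value so that symmetry is kept. Then $t>0$ because $g(0)=0\ne d(x,y)$. Setting $\tilde d(x,y)=t$ and $\tilde d(x,x)=0$ gives $\tilde d\in\cD(\cX)$ with $\trf_g(\tilde d)=d$, so $\trf_g$ is surjective.

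\emph{Scale invariance.} Under the homogeneity hypothesis, $\trf_g(\beta d)=c_\beta\,\trf_g(d)$, so $\beta'=c_\beta$ works.

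\emph{Power transformation.} For $g(t)=t^p$ with $p>0$, all hypotheses hold: $g(0)=0$, $g(t)>0$ for $t>0$, $g$ is strictly increasing and a bijection of $\R_{\ge0}$, and $(\beta t)^p=\beta^p t^p$. Since each axiom is preserved separately, the conjunction of axioms is preserved. Hence $\trf^p_{\mathrm{power}}$ preserves admissibility and strong admissibility.
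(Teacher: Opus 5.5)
Your proposal is correct and follows the paper's proof essentially step by step: you check well-definedness, verify each sufficient condition of Lemma~\ref{lem:trf_preserve_axioms} for $\trf_g$ under the corresponding hypothesis on $g$, and then specialize to $g(t)=t^p$. Your explicit ball-matching argument for $\Psi_{g\circ u}=\Psi_u$, including choosing radii among attained values, simply spells out the paper's one-line remark that a strictly increasing $g$ preserves the ordering of all pairwise dissimilarities.
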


\subsubsection{Examples and Applications}

\emph{Minimax-path preprocessing and single linkage.} 
Our first example gives an exact preprocessing representation of single linkage. Given a dissimilarity $d\in \cD(\cX)$, consider the weighted complete graph whose vertices are the points in $\cX$ with edge-weight $d$. The bottleneck of a path $\gamma$ is the largest weight along the path, that is, $\max_{e \in \gamma} d(e)$. The \emph{minimum bottleneck} between two points $x, y \in \cX$ is 
\begin{align}
\label{eq:def_min_bottleneck}
 B^*(d)(x,y) \ := \ 
 \begin{cases}
\displaystyle
\min_{\gamma\in\Gamma_{x,y}} \max_{\{u,v\}\in\gamma} d(u,v), & x\neq y,\\[1ex]
0,  & x=y,
\end{cases}
\end{align}
where \(\Gamma_{x,y}\) denotes the set of paths from \(x\) to \(y\). 

Although the definition of $B^*$ given in~\eqref{eq:def_min_bottleneck} involves a minimum over all paths, this quantity can efficiently be computed from any minimum spanning tree \(M\). Indeed, if \(\gamma^M_{x,y}\) denotes the unique path from \(x\) to \(y\) in \(M\), then
\[
 B^*(d)(x,y) \weq \max_{\{u,v\}\in\gamma^M_{x,y}}d(u,v).
\]
The equivalence between this minimum-bottleneck construction and single linkage is standard; see, for example,~\cite{carlsson2010characterization}. 

The dissimilarity \(B^*(d)\) is an ultrametric, commonly called the \emph{subdominant ultrametric} associated with \(d\); its values are also known as minimax-path distances. Moreover, for every \(r\geq 0\), two points $x$ and $y$ satisfy $B^*(d)(x,y)\leq r$ if and only if they belong to the same connected component of the graph whose edges are the pairs \(\{u,v\}\) satisfying \(d(u,v)\leq r\). These connected components, as \(r\) varies, are exactly the clusters produced by single linkage. Hence, by the ultrametric-hierarchy correspondence detailed in Equation~\eqref{eq:ultrametric_hierarchy}, we have $\Tsl(d) \weq \Psi_{B^*(d)}.$ 
Moreover, because $\Tglob$ is exact on ultrametric inputs, we obtain $\Tsl(d) = \Tglob\bigl(B^*(d)\bigr)$, and hence 
\begin{align}
\label{eq:relationship_Tsl_Tglob}
 \Tsl\weq\Tglob\circ B^*.
\end{align}
Thus, single linkage can be viewed as first replacing the original dissimilarity by its minimax-path ultrametric and then extracting the globally separated clusters.
Observe also that the same reasoning applies to any method that is exact on ultrametric inputs; in particular, we may replace \(\Tglob\) in~\eqref{eq:relationship_Tsl_Tglob} with \(\Tloc\) or \(\Tstable\). 

Finally, the transformation $B^* \colon d \in \cD(\cX) \mapsto B^*(d)$ satisfies
\[
B^*(\beta d)=\beta B^*(d),
\qquad
B^*(d_\phi)=B^*(d)_\phi,
\quad \text{ and } \quad 
B^*(u)=u, 
\]
for every \(\beta>0\), every permutation \(\phi\), and every ultrametric \(u\). Hence, \(B^*\) preserves scale invariance, permutation invariance, and exactness on ultrametrics. Together with the corresponding properties of \(\Tglob\), the factorization \(\Tsl=\Tglob\circ B^*\) yields these properties for single linkage.

\medskip
\emph{PCA preprocessing.}
In this paragraph only, we restrict to Euclidean distances $d$, meaning that there exist points \((z_x)_{x\in\cX}\subset\R^p\) such that $d(x,y) \weq \lVert z_x-z_y\rVert_2$ for all $x,y\in\cX$. 
Center these points, that is define $\tz_x:=z_x - \bar{z}$ where $\bar{z} = \frac{1}{|\cX|}\sum_{x\in\cX}z_x$, and let \(P_r\) be the orthogonal projection onto the subspace spanned by the first \(r\) principal components of the centered configuration $(\tz_x)_{x\in\cX}$. Assuming that this principal subspace is uniquely defined, we define 
\[
\trf_{\mathrm{PCA},r}(d)(x,y)
:=
\bigl\lVert P_r\widetilde z_x-P_r\widetilde z_y \bigr\rVert_2.
\]
This definition does not depend on the chosen Euclidean realization, since centered realizations of the same dissimilarity differ only by an orthogonal transformation. Under the assumption that the projected points \(P_r\widetilde z_x\), \(x\in\cX\) are pairwise distinct,\footnote{Otherwise, two distinct points $x \ne y$ may have the same projection, in which case $\trf_{\mathrm{PCA},r}(d)(x,y) = 0$ and the construction yields a pseudodissimilarity rather than an element of \(\cD(\cX)\). Such collisions may occur in particular for highly symmetric configurations, including some Euclidean realizations of ultrametrics. In that case, the chosen projection dimension \(r\) is unsuitable if the preprocessing is required to remain dissimilarity-valued.}
$\trf_{\mathrm{PCA},r}(d)$ is a dissimilarity. 

Multiplying \(d\) by \(\beta>0\) amounts to multiplying the realizing configuration by \(\beta\), which leaves its principal subspace unchanged and multiplies all projected distances by
\(\beta\). Hence $\trf_{\mathrm{PCA},r}(\beta d) = \beta\,\trf_{\mathrm{PCA},r}(d)$,
and thus PCA preprocessing preserves scale invariance.

PCA preprocessing is also equivariant under relabelling. Indeed, permuting the points leaves the covariance operator, and hence the principal \(r\)-dimensional subspace, unchanged, while merely relabelling the projected points. Therefore, $\trf_{\mathrm{PCA},r}(d_\phi) = \trf_{\mathrm{PCA},r}(d)_\phi$, and thus PCA preprocessing also preserves permutation invariance.

 In general, however, PCA preprocessing need not preserve the remaining axioms. A partition strengthening may alter the principal subspace, so its image need not remain a strengthening of the projected dissimilarity.
 Moreover, for fixed \(r\), every output of \(\trf_{\mathrm{PCA},r}\) is a Euclidean dissimilarity realizable in dimension at most \(r\). Because a general dissimilarity need not have this form, \(\trf_{\mathrm{PCA},r}\) is not surjective onto \(\cD(\cX)\). Finally, projecting an ultrametric realization need not produce an ultrametric inducing the same hierarchy.

\medskip
\emph{HDBSCAN and mutual-reachability preprocessing.}
Fix \(k\geq 1\), and let \(d_{\mathrm{core}}(x)\) be the dissimilarity from
\(x\) to its \(k\)-th nearest neighbor. The mutual-reachability transformation
used in HDBSCAN is defined, for \(x\neq y\), by
\[
    \trfhdb(d)(x,y)
    :=
    \max\bigl\{
        d_{\mathrm{core}}(x),
        d_{\mathrm{core}}(y),
        d(x,y)
    \bigr\},
\]
with zero diagonal. Since all nearest-neighbor dissimilarities scale together
with \(d\),
\[
    \trfhdb(\beta d)=\beta\,\trfhdb(d)
    \qquad\text{for every }\beta>0.
\]
Moreover, the construction is equivariant under relabelling:
$\trfhdb(d_\phi)=\trfhdb(d)_\phi.$ 
Hence mutual-reachability preprocessing preserves both scale invariance and permutation invariance of the downstream hierarchical method.

\medskip
\emph{Similarity-valued preprocessing.} 
The same preservation principle extends to pipelines expressed in terms of similarities. For example, the Gaussian kernel
$k_\sigma(x,y) := \exp\left\{-\frac{\left(d(x,y) \right)^2}{2\sigma^2}\right\}$
is a strictly decreasing function of \(d(x,y)\). It therefore converts a dissimilarity strengthening into the corresponding similarity strengthening: within-cluster similarities increase, whereas cross-cluster similarities
decrease. The transformation also commutes with permutations. 
(Note that this example lies outside the dissimilarity-valued framework adopted above, so it should be understood using the similarity analog of the relevant axioms.)

\section{Related Work}
\label{sec:related_works}

Prior to Kleinberg, \citet{puzicha2000theory} developed an axiomatic framework for clustering objective functions, based on properties such as monotonicity, invariance, and robustness, and identified objectives satisfying these requirements. Kleinberg's impossibility theorem~\citep{kleinberg2002impossibility} subsequently motivated a broad line of work that modifies the axioms, the input, or the problem formulation in order to circumvent the impossibility result~\citep{ben2008measures,zadeh2012uniqueness,strazzeri2022possibility,
willson2024axioms}. 
Our work contributes to this line by asking whether the impossibility can be resolved by changing the form of the output, from a flat partition to a hierarchy, while retaining a dissimilarity-based input and direct hierarchical analogs of Kleinberg's axioms.

More precisely, these works relax Kleinberg's framework in different ways. 
\citet{ben2008measures} consider an analogous axiomatic setting for clustering-quality functions and establish the existence of functions satisfying their axioms. \citet{zadeh2012uniqueness} take the desired number of clusters as an additional input and correspondingly restrict richness to partitions having that number of clusters. \citet{cohen2018clustering} introduce a cost function to estimate the number of clusters and weaken
consistency by requiring it only when this estimate remains unchanged.
\citet{strazzeri2022possibility} restrict the transformations allowed by the consistency axiom and extend the setting to graph clustering.
In the graph setting, \citet{van2014axioms} adapt axioms for clustering-quality functions and introduce adaptive scale modularity, while \citet{willson2024axioms} translate Kleinberg's axioms to unweighted graphs and identify clustering methods satisfying their axioms.
A different structural perspective is developed by \citet{carlsson2013classifying}, who study clustering schemes through functoriality, requiring compatibility of clustering outputs with suitable maps between input metric spaces.

A distinct line of work develops axiomatic characterizations specific to hierarchical clustering.
\citet{carlsson2010characterization} view hierarchical clustering as a map from finite metric spaces to proximity dendrograms, equivalently ultrametrics, and characterize single linkage using normalization, separation, and functoriality. Functoriality requires every distance-nonincreasing map between input metric spaces to remain distance-nonincreasing between the corresponding output ultrametric spaces, including maps between datasets of different sizes. As distance-preserving relabelings are examples of such maps, functoriality is substantially stronger than permutation invariance. In fact, it has no direct counterpart in our framework, because it compares outputs across different ground sets and constrains their merge scales, whereas our outputs are unweighted hierarchies on a fixed set.

In a complementary direction, \citet{ackerman2010characterization} and \citet{ackerman2016characterization} study characterizations of linkage-based methods. In particular, the latter show that locality together with outer consistency characterizes hierarchical linkage methods. Outer consistency increases dissimilarities between the clusters of the partition induced by cutting a dendrogram at a selected height, while keeping within-block dissimilarities fixed. In contrast, partition consistency is thus more faithful to Kleinberg’s consistency as it allows within-block contractions and applies to every partition represented in the hierarchy. As a result, several linkage methods other than single linkage satisfy outer consistency whereas failing partition consistency.


Related axiomatic frameworks have also been developed for asymmetric dissimilarities. In particular, \citet{carlsson2014quasi} introduce hierarchical quasi-clustering for asymmetric networks and obtain a uniqueness result under their axioms, while \citet{carlsson2017admissible} characterize broader families of admissible hierarchical methods for asymmetric networks. 

 These earlier frameworks use additional structural requirements to characterize particular methods or algorithmic families. Our axioms remain closer to Kleinberg's original requirements and admit a much more diverse class of methods, whose structure under refinement is a key focus of our work. 

A complementary perspective axiomatizes objective functions for evaluating hierarchical clusterings rather than the behavior of the clustering method itself. 
\citet{dasgupta2016cost} introduced a cost function for similarity-based hierarchical clustering, thereby formulating hierarchical clustering as an optimization problem.
Building on this perspective, \citet{cohen2019hierarchical} give an axiomatic characterization of a broad class of admissible objective functions for both similarity- and dissimilarity-based hierarchical clustering, including Dasgupta's objective. This differs from our approach: their axioms determine which objective functions appropriately score a hierarchy, whereas ours directly constrain the behavior of the map from dissimilarities to hierarchies. 

Population-level approaches provide another, substantially different, axiomatic perspective on hierarchical clustering. 
\citet{thomann2015axiomatic} axiomatize hierarchical clustering of probability measures without assuming an underlying metric or dissimilarity. Their starting point is a user-specified clustering on a class of elementary measures, and they study how this clustering can be extended to more general distributions. Under suitable conditions, additivity and continuity requirements determine a unique such extension.
More recently, \citet{arias2025axiomatic} propose an axiomatic definition of hierarchical clustering based on the topology of density level sets. Rather than axiomatizing a clustering method and its behavior under transformations of the input, they seek to characterize which subsets of the support of a density should constitute population clusters. They first consider piecewise-constant densities and require clusters to have connected interior, not to split connected regions of constant density, and to be surrounded by regions of lower density. Among the cluster trees satisfying these requirements, they select the finest one, and subsequently extend the construction to more general densities, recovering Hartigan's cluster tree under suitable conditions.

Our framework is different in both its input and its object of study. In the spirit of Kleinberg, we axiomatize maps from pairwise dissimilarities to hierarchies: The axioms constrain how the output hierarchy behaves when the input dissimilarity is rescaled, strengthened, or relabeled, as well as which hierarchical structures the method can realize. Population-level approaches such as \citet{thomann2015axiomatic} and \citet{arias2025axiomatic} specify what a population hierarchy should be based on distributional or topological information, whereas ours impose structural requirements on hierarchical clustering methods that act directly on dissimilarities. 

\section{Conclusion}
\label{sec:conclusion}

Kleinberg's impossibility theorem shows that scale invariance, richness, and consistency cannot be jointly satisfied by a flat clustering method. In this work, we have shown that this incompatibility disappears when the output is allowed to be hierarchical. Natural hierarchical analogs of Kleinberg's axioms are jointly satisfiable; in fact, there exist uncountably many hierarchical clustering methods satisfying them.

Moreover, our work reveals that the set of admissible hierarchical methods is extremely complex. Under the refinement order, the class of admissible methods is a poset that has uncountable height, width, and cellularity, has neither a greatest nor a least element, and contains uncountably many pairwise incompatible maximal elements. Nevertheless, admissible methods cannot differ arbitrarily. Every admissible method contains a hierarchy of sufficiently well-separated clusters, and every finite collection of admissible methods therefore shares a nontrivial common backbone. Thus, the axioms simultaneously allow for substantial freedom in the additional clusters reported by a method while enforcing a common conservative hierarchical structure.

We also considered extensions that are specific to the hierarchical setting. Adding exactness on ultrametric inputs preserves the main structural picture while imposing a stronger connection between dissimilarities and their canonical hierarchies. In addition, we studied preprocessing transformations and identified conditions under which they preserve the axioms, allowing the framework to apply to clustering pipelines in which the input dissimilarity is transformed before the hierarchy is constructed.

This study raises several open questions. A first one is to determine which other hierarchical clustering methods are admissible, and, in particular, to identify explicit maximal admissible methods. 
Another one is to understand which additional axioms meaningfully reduce the large admissible class. Our results on ultrametric exactness provide one step in this direction, yielding, in particular, a least element among strongly admissible methods.

Finally, hierarchical clustering has an expressive advantage over flat clustering, as it does not require the number of clusters to be specified in advance and returns a richer, nested cluster structure that cannot be represented by a single partition. 
Nevertheless, some downstream tasks ultimately require a flat partition, thereby requiring a cut of the hierarchy, that is, a selection of a set of clusters from the hierarchy that forms a partition.
Kleinberg's theorem implies that no such cut-selection rule can induce a flat clustering method that simultaneously satisfies scale invariance, richness, and consistency. 
Characterizing the trade-offs inherent in cut selection, as well as the benefits of hierarchy-aware downstream procedures that avoid this projection altogether, remains an interesting direction for future work.

%% file: drawings/ex_hcs.tex
\begin{minipage}[t]{0.2\textwidth}
\centering
\begin{tikzpicture}[level distance=6mm,
  every node/.style={draw=blue, inner sep=0.8pt},
  level 1/.style={sibling distance=22mm,nodes={}},
  level 2/.style={sibling distance=12mm,nodes={}},
  level 3/.style={sibling distance=10mm,nodes={}},
  level 4/.style={sibling distance=6mm,nodes={}},
  ]
\node {\footnotesize $\{1,2,3,4,5\}$}
  child {
          child { 
            child {
                child {node {\footnotesize $\{4\}$}
                }
            }
        }
  }
  child {node {\footnotesize $\{1,2,3,5\}$}
    child {node {\footnotesize $\{1,2,3\}$}
        child {node {\footnotesize $\{1, 2\}$}
            child {node {\footnotesize $\{1\}$}}
            child {node {\footnotesize $\{2\}$}}
        }
        child {
            child {node {\footnotesize $\{3\}$}}
        }
    }
    child {
        child { 
            child {node {\footnotesize $\{5\}$}}
        }
    }
  }
  ;
\end{tikzpicture}

\vspace{2mm}
{\footnotesize (a) $\Tsl(d)$}
\end{minipage}
\hfill
\begin{minipage}[t]{0.2\textwidth}
\centering
\begin{tikzpicture}[level distance=6mm,
  every node/.style={draw=blue, inner sep=0.8pt},
  level 1/.style={sibling distance=11mm,nodes={}},
  level 2/.style={sibling distance=9mm,nodes={}},
  level 3/.style={sibling distance=6mm,nodes={}}]
\node {\footnotesize $\{1,2,3,4,5\}$}
    child {node {\footnotesize $\{1,2,3\}$}
        child {node {\footnotesize $\{1, 2\}$}
            child {node {\footnotesize $\{1\}$}}
            child {node {\footnotesize $\{2\}$}}
        }
        child {
            child {node {\footnotesize $\{3\}$}}
        }
    }
    child {
        child { 
            child {node {\footnotesize $\{4\}$}}
        }
    }
  child {
    child {
        child {node {\footnotesize $\{5\}$}}
    }
  }
  ;
\end{tikzpicture}

\vspace{2mm}
{\footnotesize (b) $\Tloc(d)$}
\end{minipage}
\hfill
\begin{minipage}[t]{0.2\textwidth}
\centering
\begin{tikzpicture}[level distance=6mm,
  every node/.style={draw=blue, inner sep=0.8pt},
  level 1/.style={sibling distance=9mm,nodes={}},
  level 2/.style={sibling distance=5mm,nodes={}}]
\node {\footnotesize $\{1,2,3,4,5\}$}
    child {node {\footnotesize $\{1, 2\}$}
        child {node {\footnotesize $\{1\}$}}
        child {node {\footnotesize $\{2\}$}}
    }
    child {
        child { 
            node {\footnotesize $\{3\}$}
        }
    }
    child {
        child { 
            node {\footnotesize $\{4\}$}
        }
    }
    child {
        child {
            node {\footnotesize $\{5\}$}
        }
    }
  ;
\end{tikzpicture}

\vspace{2mm}
{\footnotesize (c) $\Tglob(d)$}
\end{minipage}
\hfill
\begin{minipage}[t]{0.2\textwidth}
\centering
\begin{tikzpicture}[level distance=6mm,
  every node/.style={draw=blue, inner sep=0.8pt},
  level 1/.style={sibling distance=9mm,nodes={}},
  level 2/.style={sibling distance=5mm,nodes={}}]
\node {\footnotesize $\{1,2,3,4,5\}$}
    child {
        child {node {\footnotesize $\{1\}$}}}
    child {
        child {node {\footnotesize $\{2\}$}}
    }
    child {
        child { 
            node {\footnotesize $\{3\}$}
        }
    }
    child {
        child { 
            node {\footnotesize $\{4\}$}
        }
    }
    child {
        child {
            node {\footnotesize $\{5\}$}
        }
    }
  ;
\end{tikzpicture}

\vspace{2mm}
{\footnotesize (d) $\Tglob^{\bdeta}(d)$}
\end{minipage}

%% file: drawings/hasse_combined.tex
\begin{tikzpicture}[
    >=Stealth,
    every node/.style={draw, rounded corners, minimum width=4mm, minimum height=4mm, font=\scriptsize, inner sep=2pt, fill=white}
]

\begin{scope}[shift={(0,0)}]

    
    \node (TslM) at (-4.0, -1.2) {$\Tsl^M$};
    \node (Tst1M) at (-2.0, -1.2) {$\Tstable^{M}$};
    \node (Tst2M) at (0, -1.2) {$\Tstable^{(2), M}$};
    \node (Tst3M) at (2.0, -1.2) {$\Tstable^{(3), M}$};
    \node[draw=none, fill=none] (TdotsM) at (3.0, -1.2) {$\dots$};
    \node (TwaM) at (4.0, -1.2) {$\Twa^M$};
    
    \node (Tsl) at (-4.0, -2.5) {$\Tsl$};
    \node (Tst1) at (-2.0, -2.5) {$\Tstable$};
    \node (Tst2) at (0, -2.5) {$\Tstable^{(2)}$};
    \node (Tst3) at (2.0, -2.5) {$\Tstable^{(3)}$};
    \node[draw=none, fill=none] (Tdots) at (3.0, -2.5) {$\dots$};
    \node (Twa) at (4.0, -2.5) {$\Twa$};
    
    \node (Tloc) at (0, -3.8) {$\Tloc$};
    \node (Tloc1) at (0, -4.8) {$\Tloc^{\bdeta^{(1)}}$};
    \node (Tloc2) at (0, -5.8) {$\Tloc^{\bdeta^{(2)}}$};
    \node[draw=none, fill=none] (Tlocdots) at (0, -6.8) {$\vdots$};
    
    \node (Tglob) at (-2.0, -4.5) {$\Tglob$};
    \node (Tglob1) at (-2.0, -5.5) {$\Tglob^{\bdeta^{(1)}}$};
    \node (Tglob2) at (-2.0, -6.5) {$\Tglob^{\bdeta^{(2)}}$};
    \node[draw=none, fill=none] (Tglobdots) at (-2.0, -7.5) {$\vdots$};
    
    \draw[->] (Tsl) -- (TslM);
    \draw[->] (Tst1) -- (Tst1M);
    \draw[->] (Tst2) -- (Tst2M);
    \draw[->] (Tst3) -- (Tst3M);
    \draw[->] (Twa) -- (TwaM);
    
    
    \draw[->] (Tloc) -- (Tsl);
    \draw[->] (Tloc) -- (Tst1);
    \draw[->] (Tloc) -- (Tst2);
    \draw[->] (Tloc) -- (Tst3);
    \draw[->] (Tloc) -- (Twa);
    
    \draw[->] (Tloc1) -- (Tloc);
    \draw[->] (Tloc2) -- (Tloc1);
    \draw[->] (Tlocdots) -- (Tloc2);
    \draw[->] (Tglob1) -- (Tglob);
    \draw[->] (Tglob2) -- (Tglob1);
    \draw[->] (Tglobdots) -- (Tglob2);
    
    \draw[->] (Tglob) -- (Tloc);
    \draw[->] (Tglob1) -- (Tloc1);
    \draw[->] (Tglob2) -- (Tloc2);
    \draw[->] (Tglobdots) -- (Tlocdots); 
    
    
\end{scope}

\begin{scope}[on background layer]
  
  \begin{scope}[shift={(0,0)}]
    \filldraw[red!5, draw=red!50, dashed, rounded corners=8pt]
      (-4.75, -8.3) -- (-4.75, -0.45) -- (-3.25, -0.45) -- 
      (-3.25, -2.75) -- (-2.75, -2.75) -- (-2.75, -0.45) -- 
      (-1.25, -0.45) -- (-1.25, -2.75) -- (-0.75, -2.75) -- 
      (-0.75, -0.45) -- (0.75, -0.45) -- (0.75, -2.75) -- 
      (1.25, -2.75) -- (1.25, -0.45) -- (2.75, -0.45) -- 
      (2.75, -2.75) -- (3.25, -2.75) -- (3.25, -0.45) -- 
      (4.75, -0.45) -- (4.75, -8.3) -- cycle;
    \node[draw=none, fill=none, text=red!70, font=\scriptsize\bfseries] at (0, -8.6) {$\cH(\cX)$};

    \filldraw[blue!8, draw=blue!50, thick, rounded corners=8pt]
      (-4.55, -8.0) -- (-4.55, -0.65) -- (-3.45, -0.65) -- 
      (-3.45, -3.0) -- (-2.55, -3.0) -- (-2.55, -0.65) -- 
      (-1.45, -0.65) -- (-1.45, -3.0) -- (-0.55, -3.0) -- 
      (-0.55, -0.65) -- (0.55, -0.65) -- (0.55, -3.0) -- 
      (1.45, -3.0) -- (1.45, -0.65) -- (2.55, -0.65) -- 
      (2.55, -3.0) -- (3.45, -3.0) -- (3.45, -0.65) -- 
      (4.55, -0.65) -- (4.55, -8.0) -- cycle;
    \node[draw=none, fill=none, text=blue!70, font=\scriptsize\bfseries] at (1, -7.8) {$\cH_{\axiadm}(\cX)$};
    
  \end{scope}

\end{scope}

\end{tikzpicture}

%% file: appendix.tex
\section{Alternative Axioms and Robustness}
\label{app:alternative_axioms}
\label{app:generalized_structure}

Whereas the main text focuses on the canonical axiom system \(\axiadm\) and its strengthening by exactness on ultrametrics, several closely related formulations are also natural in the hierarchical setting. This appendix collects the variants that are useful for comparison, describes their logical relations, and shows that the principal structural conclusions of Section~\ref{sec:structure} are robust to many of these choices.

We denote by $\alpha$ a requirement, or a property, that may or may not be satisfied by a given hierarchical clustering method. In particular, all axioms considered in this paper are requirements. For a requirement \(\axi\), let
\[
\cH_{\axi}(\cX) := \{T\in\cH(\cX): T\text{ satisfies }\axi\}.
\]
We say that a requirement $\alpha_2$ is \emph{stronger} than another requirement $\alpha_1$, denoted $\axi_2\triangleright\axi_1$, if $\cH_{\axi_2}(\cX)\subseteq \cH_{\axi_1}(\cX).$ 
If both $\axi_2\triangleright\axi_1$ and $\axi_1\triangleright\axi_2$ hold, we say the two requirements are equivalent, and write \(\axi_1\equiv\axi_2\). 
Finally, the conjunction of $\alpha_1$ and $\alpha_2$ is denoted by \(\alpha_1 \wedge \alpha_2\). 

We retain the notation \(\axisca,\axipr,\axipc,\axiperm\) for scale invariance, partition richness, partition consistency, and permutation invariance, respectively, and \(\axisuhr\) for exactness on
ultrametrics. Thus
\[
\axiadm := \axisca\wedge\axipr\wedge\axipc\wedge\axiperm,
\qquad
\axiadmp := \axiadm\wedge\axisuhr
\ \equiv \
\axisca\wedge\axisuhr\wedge\axipc\wedge\axiperm,
\]
where the last equivalence holds because exactness on ultrametrics implies partition richness.
More generally, an \emph{axiom system} is a conjunction of requirements. 

Table~\ref{tab:axioms-summary} summarizes all the requirements/properties considered in this paper, grouped into four categories: admissibility, invariance, richness, and consistency. 

\begin{table}[!ht]
\centering
\setlength{\tabcolsep}{4pt}
\renewcommand{\arraystretch}{1.0}
\begin{tabular}{llcccc}
\toprule
\emph{Category} & \emph{Symbol / name} & \(\saxijoint\) & \( \saxiup\) & \( \saxiunion\) & \(\saxitb\) \\
\midrule
\multirow{2}{*}{Admissibility} 
&\cellcolor{lightgray}\(\axiadm\) (admissibility)        & \cellcolor{lightgray}\checkmark & \cellcolor{lightgray}\checkmark & \cellcolor{lightgray}$\times$ & \cellcolor{lightgray}$\times$ \\
& \cellcolor{magenta}\(\axiadmp\) (strong admissibility)        & \cellcolor{magenta}\checkmark & \cellcolor{magenta}\checkmark & \cellcolor{magenta}$\times$ & \cellcolor{magenta}\checkmark \\
\midrule
\multirow{3}{*}{Invariance}
& \cellcolor{lightgray}\(\axisca\) (scale invariance)        & \cellcolor{lightgray}\checkmark & \cellcolor{lightgray}\checkmark & \cellcolor{lightgray}\checkmark & \cellcolor{lightgray}\checkmark \\
& \(\axiord\) (order invariance)       & \checkmark & \checkmark & \checkmark & \checkmark \\
&\cellcolor{lightgray}\(\axiperm\) (permutation invariance)  & \cellcolor{lightgray}\checkmark & \cellcolor{lightgray}\checkmark & \cellcolor{lightgray}\checkmark & \cellcolor{lightgray}\checkmark \\
\midrule
\multirow{8}{*}{Richness variants}
& \cellcolor{lightgray}\(\axipr\) (partition richness)        & \cellcolor{lightgray}$\times$ & \cellcolor{lightgray}$\checkmark$ & \cellcolor{lightgray}\checkmark & \cellcolor{lightgray}$\times$ \\
& \(\axicwr\) (cluster-wise richness)          & $\times$ & \checkmark & \checkmark & $\times$ \\
& \(\axihr\) (hierarchical richness)          & $\times$ & \checkmark & \checkmark & $\times$ \\
& \(\axishr\) (strict hierarchical richness)  & $\times$ & $\times$ & $\times$ & $\times$ \\
& \(\axiuhr\) (refinement on ultrametrics)     & \checkmark & \checkmark & \checkmark & \checkmark \\
& \cellcolor{magenta}\(\axisuhr\) (exact on ultrametrics)         & \cellcolor{magenta}\checkmark & \cellcolor{magenta}\checkmark & \cellcolor{magenta}\checkmark & \cellcolor{magenta}\checkmark \\
& \(\axicore\) (backbone-hierarchy)    & \checkmark & \checkmark & \checkmark & $\times$\\
& \(\axicore^1\) (unit-backbone-hierarchy)    & \checkmark & \checkmark & \checkmark & $\times$\\
\midrule
\multirow{4}{*}{Consistency variants}
& \cellcolor{lightgray}\(\axipc\) (partition consistency)     & \cellcolor{lightgray}\checkmark & \cellcolor{lightgray}\checkmark & \cellcolor{lightgray}$\times$ & \cellcolor{lightgray}\checkmark \\
& \(\axicwc\) (cluster-wise consistency)  & \checkmark & \checkmark & \checkmark & \checkmark \\
& \(\axihc\) (hierarchical consistency)   & \checkmark & \checkmark & \checkmark & \checkmark\\
& \(\axiabc\) (absence consistency)       & \checkmark & \checkmark & $\times$ & \checkmark \\
\bottomrule
\end{tabular}
\caption{
Summary of the (alternative) axioms and their structural properties. 
Gray shading indicates the canonical axioms; magenta shading indicates exactness on ultrametrics and strong admissibility. Unshaded rows correspond to requirements introduced in this appendix.
A checkmark in the $\saxijoint$, $\saxiup$, or $\saxiinfuni$ column means that the corresponding method class is closed, respectively, under finite nonempty intersections, unions of nonempty upward-directed families, or unions of pairwise compatible nonempty families; a cross indicates that no such closure property is asserted. 
The three closure properties $\saxijoint$, $\saxiup$, or $\saxiinfuni$ are preserved under finite conjunctions of axioms; see Proposition~\ref{prop:individual_property_closure}. 
Similarly, a checkmark in the column $\saxitb$ means that the axiom is representable as a relational property or as a conjunction of relational properties (we refer to Section~\ref{app:relational_prop} for a precise definition and to Section~\ref{subsec:relational_axioms} for a proof of the checkmarks in the last column). 
}
\label{tab:axioms-summary}
\end{table}

\subsection{Alternative Axiom Formulations}
\label{subsec:axioms:catalog}

\subsubsection{Order invariance}

Scale invariance can be strengthened by requiring the output to depend only
on the ordering of the pairwise dissimilarities.

\begin{definition}[Order invariance \(\axiord\)]
\label{def:ax-ord}
A method \(T\in\cH(\cX)\) is \emph{order invariant} if, for every
\(d\in\cD(\cX)\) and every strictly increasing \(g:\R_{\ge0}\to\R_{\ge0}\) satisfying \(g(0)=0\), we have 
\(
T(g\circ d)=T(d),
\)
where $(g\circ d)(x,y):=g(d(x,y))$ for all elements $x,y \in \cX$. 
\end{definition}

\subsubsection{Richness variants}
\label{subsubsec:axioms:catalog:richness}

Partition richness requires that for any partition, there is a dissimilarity function $d$ that makes the partition part of the output hierarchy. In the hierarchical setting, one may instead require the realization of individual clusters or of entire hierarchies.

\begin{definition}[Richness variants]
A method \(T\in\cH(\cX)\) satisfies
\begin{itemize}[nosep]
\item \emph{cluster-wise richness} \(\axicwr\) if, for every nonempty
\(C\subseteq\cX\), there exists \(d\in\cD(\cX)\) such that \(C\in T(d)\);
\item \emph{hierarchical richness} \(\axihr\) if, for every
\(\Psi\in\cT(\cX)\), there exists \(d\in\cD(\cX)\) such that
\(\Psi\subseteq T(d)\);
\item \emph{strict hierarchical richness} \(\axishr\) if, for every
\(\Psi\in\cT(\cX)\), there exists \(d\in\cD(\cX)\) such that
\(T(d)=\Psi\);
\item \emph{refinement on ultrametrics} $\axiuhr$ if,  $\Psi_u\subseteq T(u)$ for every ultrametric~$u$.
\end{itemize}
\end{definition} 
Refinement on ultrametrics is an alternative to exactness on ultrametrics (Definition~\ref{def:exact_on_ultrametrics}; denoted $\axisuhr$), and parallel to $\axisuhr$, this can be regarded as a richness variant.

\subsubsection{Consistency variants}
\label{subsubsec:axioms:catalog:consistency}

Partition consistency preserves all blocks of a partition under a
strengthening of that partition. A natural alternative is to require the
same property cluster by cluster. To compare the two formulations, we first
extend the notion of strengthening.

\begin{definition}[\(\psi\)-strengthening]
\label{def:psi_strengthening}
Let \(\psi\subseteq 2^\cX\setminus\{\emptyset\}\) be nonempty.
A dissimilarity \(d'\in\cD(\cX)\) is a \(\psi\)-strengthening of
\(d\in\cD(\cX)\) if, for every \(C\in\psi\),
\begin{itemize}[nosep]
 \item (Intra-cluster contraction) \(d'(x,y)\le d(x,y)\) for all \(x,y\in C\);
 \item (Inter-cluster expansion) \(d'(x,y)\ge d(x,y)\) for all \(x\in C\) and \(y\notin C\);
 \item (External pairs unconstrained) Pairs with both endpoints outside \(\bigcup_{C\in\psi}C\) are unconstrained.
\end{itemize}
\end{definition}
Observe that in the case of a $\cC$-strengthening as defined in Definition~\ref{def:strengthening}, $\cC$ is a partition, so there exists no pair $x,y \notin \bigcup_{C\in \cC} C$ as $\cC$ covers~$\cX$. However, because the set $\psi$ in Definition~\ref{def:psi_strengthening} does not necessarily cover $\cX$, we make the treatment of such external pairs explicit.  

\begin{definition}[Consistency variants]
\label{def:ax-cwc}
A method \(T\in\cH(\cX)\) satisfies
\begin{itemize}[nosep]
\item \emph{cluster-wise consistency} \(\axicwc\) if, whenever
\(C\in T(d)\), every \(\{C\}\)-strengthening \(d'\) of \(d\) satisfies
\(C\in T(d')\);
\item \emph{hierarchical consistency} \(\axihc\) if, whenever
\(\psi\subseteq T(d)\), every \(\psi\)-strengthening \(d'\) of \(d\)
satisfies \(\psi\subseteq T(d')\).
\end{itemize}
\end{definition}

For completeness, one can also state partition consistency in the reverse
direction. Say that \(d'\) is a \(\cC\)-\emph{weakening} of \(d\) when
\(d\) is a \(\cC\)-strengthening of \(d'\), and call a method
\emph{absence consistent} (\(\axiabc\)) if
\[
\cC\not\subseteq T(d)
\quad\Longrightarrow\quad
\cC\not\subseteq T(d')
\]
for every \(\cC\)-weakening \(d'\) of \(d\). As shown below, this is simply
the contrapositive formulation of partition consistency.

\subsection{Relations Among the Axioms}
\label{subsec:axioms:relations}

The elementary implications among the richness and invariance requirements are $\axiord \, \triangleright \, \axisca$, 
$\axisuhr \, \triangleright \, \axishr
\, \triangleright \, \axihr
\, \triangleright \, \axipr
\, \triangleright \, \axicwr,
$
as well as $\axisuhr \, \triangleright \, \axiuhr \, \triangleright \, \axihr$. 
The following proposition shows that the consistency variants collapse more strongly than their definitions suggest.

\begin{proposition}
\label{thm:relation_consistencies}
We have $\axicwc\equiv\axihc\triangleright\axipc\equiv\axiabc$.
\end{proposition}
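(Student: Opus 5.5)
We prove the chain in three parts: $\axipc\equiv\axiabc$, then $\axicwc\equiv\axihc$, then $\axihc\triangleright\axipc$.

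\emph{Step 1: $\axipc\equiv\axiabc$.} Absence consistency states that if $d'$ is a $\cC$-weakening of $d$ (equivalently, $d$ is a $\cC$-strengthening of $d'$) and $\cC\not\subseteq T(d)$, then $\cC\not\subseteq T(d')$. Rename the variables, swapping $d$ and $d'$. The statement becomes: if $d'$ is a $\cC$-strengthening of $d$ and $\cC\not\subseteq T(d')$, then $\cC\not\subseteq T(d)$. This is exactly the contrapositive of partition consistency, so the two classes coincide.

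\emph{Step 2: $\axicwc\equiv\axihc$.} For $\axihc\triangleright\axicwc$, take $\psi=\{C\}$; hierarchical consistency then reduces to cluster-wise consistency.

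For the converse, let $T\in\cH_{\axicwc}(\cX)$, let $\psi\subseteq T(d)$, and let $d'$ be a $\psi$-strengthening of $d$. The key observation is that a $\psi$-strengthening is a $\{C\}$-strengthening for every $C\in\psi$. Indeed, the conditions in Definition~\ref{def:psi_strengthening} concern only pairs inside $C$ and pairs with one endpoint in $C$ and one outside. Both kinds of pair are constrained, with the same inequalities, in the $\{C\}$-strengthening definition. Every other pair is unconstrained for $\{C\}$. So for each $C\in\psi$, applying $\axicwc$ to the single pair $(d,d')$ gives $C\in T(d')$. Hence $\psi\subseteq T(d')$.

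The only subtlety is that the unconstrained external pairs of $\psi$ are a subset of the unconstrained pairs of $\{C\}$. Pairs lying outside $C$ but inside some other member of $\psi$ are constrained by $\psi$, and this only makes the $\psi$-condition stronger, so the inclusion goes the right way.

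\emph{Step 3: $\axihc\triangleright\axipc$.} A partition $\cC\subseteq T(d)$ is a special case of $\psi$. A $\cC$-strengthening in the sense of Definition~\ref{def:strengthening} coincides with a $\psi$-strengthening for $\psi=\cC$, since $\cC$ covers $\cX$ and there are no external pairs. Applying $\axihc$ therefore gives $\cC\subseteq T(d')$.

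The main obstacle is the definitional bookkeeping in Step 2, namely checking that each pairwise constraint of a $\{C\}$-strengthening is implied by the $\psi$-strengthening constraints. The arguments themselves are immediate.

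The statement uses $\triangleright$ and does not claim strictness. If strictness were intended, I would exhibit a method in $\cH_{\axipc}(\cX)$ violating $\axicwc$. I would try a method that outputs a cluster $C$ only when some fixed partition containing $C$ is also globally separated, so that a $\{C\}$-strengthening that destroys the separation of $\bC$ removes $C$.
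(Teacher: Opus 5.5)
Your proof is correct and follows the paper's argument. As in the paper, you take $\psi=\{C\}$ and observe that every $\psi$-strengthening is a $\{C\}$-strengthening for each $C\in\psi$, which gives $\axicwc\equiv\axihc$; you take $\psi=\cC$, which covers $\cX$ and so has no external pairs, which gives $\axihc\triangleright\axipc$; and you obtain $\axipc\equiv\axiabc$ as the contrapositive after swapping the roles of $d$ and $d'$. Your closing remark on strictness is not needed, because $\triangleright$ only asserts the inclusion $\cH_{\axihc}(\cX)\subseteq\cH_{\axipc}(\cX)$ and the proposition makes no strictness claim.
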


\begin{proof}
Hierarchical consistency implies cluster-wise consistency by taking
\(\psi=\{C\}\). Conversely, if \(d'\) is a \(\psi\)-strengthening of \(d\),
then it is a \(\{C\}\)-strengthening for every \(C\in\psi\). Cluster-wise
consistency therefore preserves every \(C\in\psi\), proving
\(\axicwc\equiv\axihc\). Taking \(\psi=\cC\) for a partition
\(\cC\subseteq T(d)\) gives \(\axihc\triangleright\axipc\).

Finally, if \(d'\) is a \(\cC\)-weakening of \(d\), then \(d\) is a
\(\cC\)-strengthening of \(d'\). Hence
\[
\cC\subseteq T(d')\Longrightarrow\cC\subseteq T(d)
\]
is precisely partition consistency applied to the pair \((d',d)\), and its
contrapositive is absence consistency.
\end{proof}

We next record the relation between richness and the backbone theorem (Theorem~\ref{thm:core_hierarchy}). Introduce
the auxiliary \emph{backbone property} \(\axicore\):
\[
T\text{ satisfies }\axicore
\quad\Longleftrightarrow\quad
\Tglob^{\bdeta}\sqsubseteq T
\text{ for some separation margin sequence }\bdeta.
\]
Since \(\Tglob^{\bdeta}\) is hierarchically rich for all separation margin $\bdeta$,
\(
\axicore \, \triangleright \, \axihr.
\)

\begin{proposition}
\label{thm:richness-relation}
Let \(\axi_1\triangleright(\axisca\wedge\axipc)\) and \(\axi_2\triangleright(\axisca\wedge\axicwc)\). Then, 
\[
(\axi_1\wedge\axipr)
\equiv
(\axi_1\wedge\axihr)
\equiv
(\axi_1\wedge\axicore), \quad \text{and} \quad (\axi_2\wedge\axicwr)
\equiv
(\axi_2\wedge\axipr)
\equiv
(\axi_2\wedge\axihr)
\equiv
(\axi_2\wedge\axicore).
\]
\end{proposition}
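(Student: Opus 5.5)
The plan is a cycle of implications. The ``downward'' implications are already available: \(\axicore\triangleright\axihr\) because each \(\Tglob^{\bdeta}\) is hierarchically rich, and \(\axihr\triangleright\axipr\triangleright\axicwr\) are elementary. So it suffices to prove two upward implications: \((\axi_1\wedge\axipr)\triangleright\axicore\), and \((\axi_2\wedge\axicwr)\triangleright\axicore\). In both I will reprove the backbone statement of Theorem~\ref{thm:core_hierarchy} without permutation invariance. I will work cluster by cluster, and make the margin \(\eta_s\) uniform by taking a minimum over the finitely many clusters of size \(s+1\).

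\textbf{Partition richness case.} Fix a nontrivial proper \(C\). Partition richness applied to the partition \(\{C,\bC\}\) gives \(d_C\) with \(\{C,\bC\}\subseteq T(d_C)\). Define
\[
\kappa_C:=\min\Bigl\{1,\ \frac{\min_{x\neq y\in C}d_C(x,y)}{\max_{x\in C,z\notin C}d_C(x,z)}\Bigr\}>0,
\qquad
\eta_s:=\min_{|C|=s+1}\kappa_C .
\]
Now take any \(d\) with \(\varrho(d,C)<\eta_{|C|-1}\). I want to show \(C\in T(d)\) by walking from \(d_C\) to \(d\) through strengthenings.

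First I rescale \(d\), which is allowed by scale invariance, so that \(\min_{x\in C,z\notin C}d(x,z)=\max_{x\in C,z\notin C}d_C(x,z)\). Then \(\max_{x,y\in C}d(x,y)<\kappa_C\max_{\mathrm{cross}}d_C\le\min_{x\neq y\in C}d_C(x,y)\).

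Step 1 uses the partition \(\{C,\bC\}\). I contract every pair inside \(\bC\) to a value \(\epsilon\) smaller than all values of \(d\), and leave all other pairs unchanged. This is a \(\{C,\bC\}\)-strengthening of \(d_C\), so partition consistency keeps \(C\) in the output of the resulting \(e\). If \(|\bC|=1\), this step is vacuous.

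Step 2 uses the partition \(\{C\}\cup\{\{z\}:z\notin C\}\), which lies in \(T(e)\) because singletons are always present. Passing from \(e\) to \(d\):
\begin{itemize}[nosep]
\item intra-\(C\) pairs decrease, by the inequality above;
\item cross pairs increase, by the normalization;
\item pairs inside \(\bC\) increase from \(\epsilon\) to their \(d\)-values, and these count as inter-cluster pairs for the singleton blocks.
\end{itemize}
Hence \(d\) is a strengthening of \(e\), and partition consistency gives \(C\in T(d)\). It follows that \(\Tglob^{\bdeta}\sqsubseteq T\).

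\textbf{Cluster-wise richness case.} This is simpler. Cluster-wise richness gives \(d_C\) with \(C\in T(d_C)\), and I define \(\kappa_C\) and \(\eta_s\) in the same way. A \(\{C\}\)-strengthening leaves pairs inside \(\bC\) unconstrained. So, after the same rescaling, \(d\) is directly a \(\{C\}\)-strengthening of \(d_C\), and cluster-wise consistency gives \(C\in T(d)\) in a single step.

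\textbf{Main obstacle.} The delicate point is the first case. Partition consistency forces every pair to be classified as intra-block or inter-block, and \(\varrho(d,C)\) gives no control over pairs inside \(\bC\). The two-stage detour handles this: first collapse \(\bC\) using the partition \(\{C,\bC\}\), which is why richness is applied to that partition, and then re-expand it using the singleton partition of \(\bC\). I would also check that no step needs permutation invariance, since taking the minimum over clusters of the same size replaces the symmetry argument.
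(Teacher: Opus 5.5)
Your proposal is correct and follows essentially the paper's route: the downward chain $\axicore\triangleright\axihr\triangleright\axipr\triangleright\axicwr$ combined with an upward implication proved exactly as in Lemmas~\ref{lemma:single_cluster_extraction} and~\ref{lemma:core_hierarchy}. That is, you rescale by the ratio of the minimal intra-$C$ value to the maximal cross value of the richness witness, then pass through the partitions $\{C,\bC\}$ and $\{C\}\cup\{\{z\}:z\in\bC\}$, and you handle the cluster-wise case with a single $\{C\}$-strengthening. Your explicit minimum over the finitely many clusters of each size, capped at $1$, usefully spells out a step the paper leaves implicit (its lemma gives size-only margins only under permutation invariance); in Step~1, make sure $\epsilon$ also lies below the values of $d_C$ on pairs inside $\bC$, or use $\min\{d,d_C\}$ as the paper does, so that the step is genuinely a contraction.
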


\begin{proof}
We prove the equivalences by the sandwiching argument i.e, by showing 
\[(\axi_1\wedge\axipr)
\triangleright
(\axi_1\wedge\axihr)
\triangleright
(\axi_1\wedge\axicore)
\quad \text{and} \quad 
(\axi_1\wedge\axipr)
\triangleleft
(\axi_1\wedge\axihr)
\triangleleft
(\axi_1\wedge\axicore). \]
At the beginning of the subsection, we already observed $\axicore \, \triangleright \, \axihr \, \triangleright \, \axipr \, \triangleright \, \axicwr$ while Lemma~\ref{lemma:core_hierarchy} gives the other direction:
\[
 \axisca\wedge\axipr\wedge\axipc \, \triangleright \, \axicore.
\]
The same argument with cluster-wise richness and cluster-wise consistency
gives
\[
\axisca\wedge\axicwr\wedge\axicwc\triangleright\axicore. 
\]
The two chains of equivalences stated in the proposition follow by sandwiching.
\end{proof}

The following corollary follows by applying Proposition~\ref{thm:richness-relation} to $\alpha_1 = \axisca\wedge\axipc\wedge\axiperm$. 
\begin{corollary}
\label{cor:replace_pr_by_core}
The canonical axiom system admits the equivalent formulation
\[
\axiadm
\equiv
\axisca\wedge\axicore\wedge\axipc\wedge\axiperm.
\]
\end{corollary}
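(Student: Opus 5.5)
The plan is to combine Proposition~\ref{thm:richness-relation}, applied with $\axi_1 := \axisca\wedge\axipc\wedge\axiperm$, with the fact that $\wedge$ is associative, commutative and idempotent. These facts hold because $\cH_{\axi\wedge\axi'}(\cX)=\cH_{\axi}(\cX)\cap\cH_{\axi'}(\cX)$.

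First check the hypothesis of Proposition~\ref{thm:richness-relation}, namely $\axi_1\triangleright(\axisca\wedge\axipc)$. Every method satisfying $\axisca\wedge\axipc\wedge\axiperm$ satisfies in particular $\axisca$ and $\axipc$, so $\cH_{\axi_1}(\cX)\subseteq\cH_{\axisca\wedge\axipc}(\cX)$.

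Next, the first chain of equivalences in Proposition~\ref{thm:richness-relation} gives $(\axi_1\wedge\axipr)\equiv(\axi_1\wedge\axicore)$. By definition, $\axiadm=\axisca\wedge\axipr\wedge\axipc\wedge\axiperm$, and after reordering the conjuncts this is exactly $\axi_1\wedge\axipr$. In the same way, $\axi_1\wedge\axicore$ is the conjunction $\axisca\wedge\axicore\wedge\axipc\wedge\axiperm$ up to reordering. Both reorderings are legitimate because the associated method classes are intersections, so the order of the conjuncts does not matter. This yields the claimed equivalence.

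Essentially nothing here is an obstacle. All of the substance sits in Proposition~\ref{thm:richness-relation}, and through it in the backbone lemma (Lemma~\ref{lemma:core_hierarchy}, underlying Theorem~\ref{thm:core_hierarchy}), both of which we take as given. The only point to watch is that the proposition requires $\axi_1$ to be stronger than $\axisca\wedge\axipc$ rather than equal to it. Adding $\axiperm$ only shrinks the class, so it does not break this hypothesis.
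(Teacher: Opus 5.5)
Your proposal is correct and matches the paper's argument exactly. The paper derives the corollary in one line by applying Proposition~\ref{thm:richness-relation} with $\alpha_1$ taken as the conjunction of scale invariance, partition consistency and permutation invariance; your check of the hypothesis and your reordering of conjuncts (justified because method classes of conjunctions are intersections) just spell that line out.
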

Corollary~\ref{cor:replace_pr_by_core} is useful because partition richness itself is not naturally preserved by intersections, whereas a common positive-margin backbone is. Hence this corollary, combined with Proposition~\ref{prop:individual_property_closure}, proves Lemma~\ref{lem:admissible_intersection_union}(i). 

\subsection{Robustness of the Structural Results}
\label{subsec:general_structural_results}

We now ask to what extent the structural results of Section~\ref{sec:structure} persist under the alternative requirements introduced above. 
To state these results compactly, we introduce notation for several order-theoretic properties of the class of methods induced by an axiom system.

\begin{definition}
\label{def:structural_properties}
For an axiom system \(\axi\), we write
\begin{align*}
\axi\in\saxijoint
&\iff
T_{\sqcap L}\in\cH_\axi(\cX)
\quad\text{for every finite nonempty }
L\subseteq\cH_\axi(\cX),
\\
\axi\in\saxiup
&\iff
T_{\sqcup L}\in\cH_\axi(\cX)
\quad\text{for every nonempty upward-directed }
L\subseteq\cH_\axi(\cX),
\\
\axi\in\saxiinfuni
&\iff
T_{\sqcup L}\in\cH_\axi(\cX)
\quad\text{for every nonempty pairwise-compatible }
L\subseteq\cH_\axi(\cX),
\\
\axi\in\saxiem
&\iff
\text{every }T\in\cH_\axi(\cX)
\text{ is refined by some maximal element of }\cH_\axi(\cX).
\end{align*}
\end{definition}
Thus, \(\saxijoint\), \(\saxiup\), \(\saxiinfuni\) are the sets of requirements/properties that satisfy closure under finite intersections, upward-directed unions, and compatible unions, respectively, while \(\saxiem\) is the set of requirements/properties that admit the maximal elements of the induced poset.

Since every upward-directed family is pairwise compatible, we have 
$\saxiinfuni\subseteq\saxiup$. 
Moreover, closure under upward-directed unions implies that every refinement chain has an upper bound in the same class; hence Zorn's lemma gives 
$\saxiup\subseteq\saxiem.$ 
Therefore, 
\begin{align}
\label{eq:chain_union_results}
\saxiinfuni\subseteq\saxiup\subseteq\saxiem. 
\end{align}

In addition to \emph{refinement on ultrametrics} \(\axiuhr\) and to the \emph{backbone-hierarchy property} \(\axicore\) introduced earlier, we will use the following stronger backbone property: 
\(T\) satisfies the \emph{unit-backbone property} \(\axicore^1\) if $\Tglob\sqsubseteq T$.

\begin{proposition}[Closure principles]
\label{prop:individual_property_closure}
The closure properties indicated by checkmarks in the \(\saxijoint\), \(\saxiup\), and \(\saxiunion\) columns of Table~\ref{tab:axioms-summary} hold. 
Moreover, these closure properties are preserved under finite conjunctions: if \(\axi=\bigwedge_{i=1}^k\axi_i\), then any of the three closure properties (\(\saxijoint\), \( \saxiup\), and \( \saxiunion\)) shared by all \(\axi_i\) is also satisfied by \(\axi\). 
\end{proposition}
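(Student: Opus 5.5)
The proof has two parts. The conjunction statement is purely set-theoretic, and it will then reduce the rows \(\axiadm\) and \(\axiadmp\) to the individual rows. For the conjunction statement, let \(\axi=\bigwedge_{i=1}^k\axi_i\), so that \(\cH_\axi(\cX)=\bigcap_i\cH_{\axi_i}(\cX)\). If \(L\subseteq\cH_\axi(\cX)\) is finite nonempty, upward directed, or pairwise compatible, then the same holds for \(L\) viewed as a subset of each \(\cH_{\axi_i}(\cX)\). Indeed, directedness is witnessed inside \(L\) itself, and compatibility is a property of pairs of methods only. Hence, if every \(\axi_i\) has the relevant closure property, \(T_{\sqcap L}\) (resp.\ \(T_{\sqcup L}\)) lies in every \(\cH_{\axi_i}(\cX)\), and therefore in \(\cH_\axi(\cX)\).

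Two preliminary remarks apply to all union cases. First, an upward-directed \(L\) is pairwise compatible, since two members have a common upper bound in \(\cH(\cX)\). By Lemma~\ref{lem:hierarchy_intersection_union}, \(T_{\sqcup L}\) is therefore a method in both union columns, and \(T_{\sqcap L}\) always is. Second, equivalent axiom systems define the same class, so checkmarks transfer along \(\equiv\).

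I will then verify the individual rows one at a time.

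\emph{Invariances.} For \(\axisca\) and \(\axiord\), each \(T\in L\) satisfies \(T(\beta d)=T(d)\) (resp.\ \(T(g\circ d)=T(d)\)). Taking intersections or unions of equal sets gives the property for \(T_{\sqcap L}\) and \(T_{\sqcup L}\). For \(\axiperm\), the map \(C\mapsto\phi(C)\) is a bijection on \(2^\cX\), so it commutes with arbitrary intersections and unions.

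\emph{Richness under unions.} For \(\axipr,\axicwr,\axihr,\axiuhr,\axicore,\axicore^1\), every \(T\in L\) satisfies \(T\sqsubseteq T_{\sqcup L}\), and each of these properties is upward closed in \(\sqsubseteq\). Hence the union inherits the property from any single member of \(L\).

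\emph{Richness under intersections.} For \(\axiuhr\) and \(\axicore^1\), the required lower object, \(\Psi_u\) or \(\Tglob(d)\), lies in every \(T(d)\) and hence in the intersection. For \(\axicore\) with finite \(L\), I take the pointwise minimum of the margin sequences, exactly as in the proof of Corollary~\ref{cor:common_backbone}. For \(\axisuhr\), \(T(u)=\Psi_u\) for every \(T\in L\), so both the intersection and the union equal \(\Psi_u\).

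\emph{Consistency.} For \(\axicwc\), suppose \(C\in T_{\sqcap L}(d)\). Then \(C\in T(d)\) for all \(T\in L\), hence \(C\in T(d')\) for all \(T\in L\), and so \(C\in T_{\sqcap L}(d')\). For unions, \(C\) lies in some \(T(d)\), hence in the same \(T(d')\subseteq T_{\sqcup L}(d')\). Since \(\axihc\equiv\axicwc\) and \(\axiabc\equiv\axipc\) by Proposition~\ref{thm:relation_consistencies}, those rows follow, and it remains to treat \(\axipc\). The intersection case is identical to the cluster-wise argument.

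The step I expect to be the real point is \(\axipc\) under upward-directed unions. Suppose a partition \(\cC=\{C_1,\dots,C_k\}\subseteq T_{\sqcup L}(d)\). Each \(C_j\) belongs to some \(T_j(d)\) with \(T_j\in L\). Because \(\cC\) is finite and \(L\) is directed, there is a single \(T^\ast\in L\) refining all the \(T_j\), so that \(\cC\subseteq T^\ast(d)\). Partition consistency of \(T^\ast\) then gives \(\cC\subseteq T^\ast(d')\subseteq T_{\sqcup L}(d')\). For merely compatible families no such \(T^\ast\) need exist, which explains the cross in that column.

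\emph{Admissibility rows.} By Corollary~\ref{cor:replace_pr_by_core}, \(\axiadm\equiv\axisca\wedge\axicore\wedge\axipc\wedge\axiperm\), and each conjunct lies in \(\saxijoint\); moreover \(\axisca,\axipr,\axipc,\axiperm\) all lie in \(\saxiup\). For strong admissibility, \(\axiadmp\equiv\axisca\wedge\axisuhr\wedge\axipc\wedge\axiperm\), and each conjunct lies in both \(\saxijoint\) and \(\saxiup\). The conjunction principle above therefore yields the checkmarks for both rows.
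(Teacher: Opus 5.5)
Your proposal is correct and follows essentially the same route as the paper. Both arguments check the table row by row: invariances commute with intersections and unions, richness-type properties pass to a union from any single member, and partition consistency under directed unions uses one member $T^\ast\in L$ that dominates the finitely many witnesses. Conjunctions are then handled through $\cH_\axi(\cX)=\bigcap_i\cH_{\axi_i}(\cX)$. Your version is slightly more complete, since it explicitly treats the unit-backbone property and the two admissibility rows, which the paper leaves implicit or defers to Corollary~\ref{cor:admissible_closure}.
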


\begin{proof} 
We first establish the closure properties of the individual axioms, following the categories in Table~\ref{tab:axioms-summary}.

Throughout this proof, \(L\) denotes a nonempty family of hierarchical clustering methods. Whenever a union \(T_{\sqcup L}\) is considered, \(L\) is assumed to be either upward directed (when studying \(\saxiup\)) or
pairwise compatible (when studying \(\saxiinfuni\)). 
In either case, \(T_{\sqcup L}\) is hierarchy-valued by Lemma~\ref{lem:hierarchy_intersection_union}, because every upward-directed family is pairwise compatible.

\medskip \noindent\emph{Invariance properties.} Scale invariance, order invariance, and permutation invariance are preserved under both intersections and unions. For example, if every \(T\in L\) is scale invariant, then \[ T_{\sqcap L}(\beta d) = \bigcap_{T\in L}T(\beta d) = \bigcap_{T\in L}T(d) = T_{\sqcap L}(d), \] and the same calculation with unions gives \(T_{\sqcup L}(\beta d)=T_{\sqcup L}(d)\). The arguments for order and permutation invariance are identical. Therefore, 
\( \axisca,\axiord,\axiperm \in \saxijoint\cap\saxiup\cap\saxiunion. \) 

\medskip \noindent\emph{Ultrametric properties.} Refinement on ultrametrics is also preserved under intersections and compatible unions. Indeed, if every \(T\in L\) satisfies \(\Psi_u\subseteq T(u)\), then \[ \Psi_u \subseteq \bigcap_{T\in L}T(u) \subseteq \bigcup_{T\in L}T(u). \] Similarly, if every \(T\in L\) is exact on ultrametrics, then $T_{\sqcap L}(u) = T_{\sqcup L}(u) = \Psi_u.$ Hence 
\( \axiuhr,\axisuhr \in \saxijoint\cap\saxiup\cap\saxiunion. \)

\medskip \noindent\emph{Richness properties.} Cluster-wise, partition, and hierarchical richness are preserved by every nonempty compatible union. To see this, fix any \(T_0\in L\). Any dissimilarity realizing the relevant richness requirement for \(T_0\) also realizes it for \(T_{\sqcup L}\), because $T_0(d)\subseteq T_{\sqcup L}(d)$. Therefore \( \axicwr,\axipr,\axihr \in \saxiunion\subseteq\saxiup. \) The same argument does not apply to strict hierarchical richness, because additional clusters contributed by other methods in \(L\) may prevent the union from realizing a prescribed hierarchy exactly. 

\medskip \noindent\emph{Consistency properties.} 
We begin with cluster-wise consistency, so we let \(L\) be a finite nonempty family of cluster-wise consistent methods.

Suppose that \(C \in T_{\sqcap L}(d)\). Hence \(C\in T(d)\) for every \(T\in L\), and thus, for every \(\{C\}\)-strengthening \(d'\) of \(d\), we have $C\in T(d')$ for every $T\in L$. This means that \(C\in T_{\sqcap L}(d')\), which proves \(\axicwc\in\saxijoint\). 

Now suppose that \(L\) is pairwise compatible and let \(C\in T_{\sqcup L}(d)\). Then \(C\in T_0(d)\) for some \(T_0\in L\). Cluster-wise consistency of \(T_0\) ensures \(C\in T_0(d')\) for any \(\{C\}\)-strengthening \(d'\) of~$d$, and hence $C\in T_0(d')\subseteq T_{\sqcup L}(d').$ 
Thus $\axicwc\in\saxijoint\cap\saxiup\cap\saxiunion. $ 
As \(\axihc\equiv\axicwc\) by Proposition~\ref{thm:relation_consistencies}, the same conclusions hold for hierarchical consistency. 

Partition consistency is likewise preserved under intersections. Indeed, let $\cC$ be a partition such that \(\cC\subseteq T_{\sqcap L}(d)\). Then \(\cC\subseteq T(d)\) for every \(T\in L\), so every \(\cC\)-strengthening~\(d'\) of $d$ satisfies \(\cC\subseteq T(d')\) for every \(T\in L\), and therefore \(\cC\subseteq T_{\sqcap L}(d')\). 
This proves $\axipc\in\saxijoint$. 

\medskip \noindent
To prove $\axipc\in\saxiup$, an additional argument is needed because the different clusters of a partition may initially come from different methods. Let \(L\) be upward directed and suppose $\cC=\{C_1,\ldots,C_m\}\subseteq T_{\sqcup L}(d).$ 
For each \(j \in [m]\), choose \(T_j\in L\) such that \(C_j\in T_j(d)\). Since \(\cC\) is finite and \(L\) is upward directed, there exists \(T^\star\in L\) refining all \(T_1,\ldots,T_m\). Hence $\cC\subseteq T^\star(d).$ 
If \(d'\) is a \(\cC\)-strengthening of \(d\), partition consistency of \(T^\star\) yields $\cC\subseteq T^\star(d') \subseteq T_{\sqcup L}(d').$ 
Thus $\axipc\in\saxiup.$ 
Finally, \(\axiabc\equiv\axipc\), so absence consistency has the same closure properties.

\medskip \noindent\emph{Backbone property.} 
Let \(L=\{T_1,\ldots,T_m\}\) be finite and suppose that each \(T_i\) satisfies the backbone property with margin sequence \(\bdeta^{(i)}\), \textit{i.e.,} $\Tglob^{\bdeta^{(i)}}\sqsubseteq T_i.$ 
Define the coordinate-wise minimum $\eta_s:=\min_{i\in[m]}\eta_s^{(i)}.$ 
Because \(L\) is finite, \(\bdeta\) is a separation margin sequence, and $\Tglob^{\bdeta} \sqsubseteq T_i$ for every $i\in[m]$. 
Therefore $\Tglob^{\bdeta}\sqsubseteq T_{\sqcap L},$ which proves \(\axicore\in\saxijoint\). 
For any nonempty union, the backbone of an arbitrary fixed member \(T_0\in L\) is contained in \(T_{\sqcup L}\); hence $\axicore\in\saxijoint\cap\saxiup\cap\saxiunion.$

\medskip

This proves all positive closure claims for the individual axioms in the first three structural columns of Table~\ref{tab:axioms-summary}.

Finally, we prove closure under conjunction. Let $\axi=\bigwedge_{i=1}^k\axi_i.$ 
Then $\cH_\axi(\cX) = \bigcap_{i=1}^k\cH_{\axi_i}(\cX).$ 
Suppose that \(\axi_i\in\saxijoint\) for every \(i\in[k]\), and let \(L\subseteq\cH_\axi(\cX)\) be finite and nonempty. 
Then \(L\subseteq\cH_{\axi_i}(\cX)\) for every \(i\). Moreover, because each $\axi_i$ satisfies $\saxijoint$, we have 
$T_{\sqcap L}\in\cH_{\axi_i}(\cX)$ for every $i\in[k]$. 
Therefore, $T_{\sqcap L} \in \bigcap_{i=1}^k\cH_{\axi_i}(\cX) = \cH_\axi(\cX),$ proving that \(\axi\in\saxijoint\). The arguments for \(\saxiup\) and \(\saxiunion\) are identical, replacing \(T_{\sqcap L}\) by \(T_{\sqcup L}\).
\end{proof}

\begin{corollary}[Closure of the admissible classes] 
\label{cor:admissible_closure} 
The canonical admissible class satisfies $\axiadm\in\saxijoint\cap\saxiup.$ 
Moreover, \(\cH_{\axiadmp}(\cX)\) is closed under arbitrary nonempty intersections and under nonempty upward-directed unions. 
\end{corollary}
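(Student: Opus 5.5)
The plan is to reduce both claims to the closure principles already established for the individual axioms, using the equivalent reformulations of the two axiom systems. Since equivalent requirements define the same class \(\cH_{\axi}(\cX)\), any closure property proved for one formulation transfers to the other.

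For the canonical class, I would invoke Corollary~\ref{cor:replace_pr_by_core}, which gives \(\axiadm\equiv\axisca\wedge\axicore\wedge\axipc\wedge\axiperm\). By Proposition~\ref{prop:individual_property_closure} (see also Table~\ref{tab:axioms-summary}), each of the four conjuncts lies in \(\saxijoint\cap\saxiup\). The conjunct \(\axicore\) belongs to \(\saxijoint\) because of the coordinate-wise minimum argument over finitely many margin sequences. The conjunct \(\axipc\) belongs to \(\saxiup\) because of the upward-directedness argument that gathers the finitely many blocks of a partition into a single member \(T^\star\). The second part of Proposition~\ref{prop:individual_property_closure} says these closure properties are preserved under finite conjunctions. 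Hence \(\axiadm\in\saxijoint\cap\saxiup\). The reformulation is essential, because \(\axipr\) by itself is not in \(\saxijoint\).

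For the strongly admissible class, I would use the equivalence \(\axiadmp\equiv\axisca\wedge\axisuhr\wedge\axipc\wedge\axiperm\). It holds because exactness on ultrametrics implies partition richness, so \(\axipr\) (and with it the backbone conjunct \(\axicore\)) can be dropped.

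Closure under nonempty upward-directed unions is then immediate: each of these four conjuncts is in \(\saxiup\), and the conjunction principle applies. For arbitrary, possibly infinite, nonempty intersections, the conjunction principle as stated only covers finite families, so I would re-examine each conjunct directly for an arbitrary nonempty \(L\subseteq\cH_{\axiadmp}(\cX)\).

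\begin{itemize}
\item \(T_{\sqcap L}\) is hierarchy-valued by Lemma~\ref{lem:hierarchy_intersection_union}(i).
\item The scale- and permutation-invariance computations \(\bigcap_{T\in L}T(\beta d)=\bigcap_{T\in L}T(d)\) never use finiteness.
\item Exactness on ultrametrics passes to arbitrary intersections, since \(\bigcap_{T\in L}T(u)=\Psi_u\).
\item For partition consistency, if \(\cC\subseteq T(d)\) for all \(T\in L\), then each \(T\) keeps \(\cC\) under any \(\cC\)-strengthening, so \(\cC\) stays in the intersection; again no finiteness is used.
\end{itemize}

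Since \(\cH_{\axiadmp}(\cX)\) is exactly the set of methods satisfying all four conjuncts, \(T_{\sqcap L}\in\cH_{\axiadmp}(\cX)\).

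The step I expect to need the most care is precisely this passage from finite to arbitrary intersections. One must check that \(\axicore\), the only conjunct whose closure genuinely required finiteness, has been eliminated from the strong system rather than merely hidden in it. The counterexample \(L=\{\Tglob^{\alpha\boldsymbol{1}}\}\) shows that infinite intersections fail in the canonical class. It does not apply here because, for \(\alpha<1\), those methods are not exact on ultrametrics, so they are not in \(\cH_{\axiadmp}(\cX)\) at all.
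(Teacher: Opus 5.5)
Your proposal is correct and follows essentially the same route as the paper. For finite intersections you reduce to Proposition~\ref{prop:individual_property_closure} through the reformulation $\axiadm\equiv\axisca\wedge\axicore\wedge\axipc\wedge\axiperm$. For the strong class you check directly that each conjunct of $\axisca\wedge\axisuhr\wedge\axipc\wedge\axiperm$ survives arbitrary nonempty intersections, with exactness on ultrametrics supplying partition richness.

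The only cosmetic difference is in directed unions of the canonical class: the paper uses the original conjunction containing $\axipr$, while you use the $\axicore$ form. Both work, since $\axipr$ and $\axicore$ both lie in $\saxiup$.
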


\begin{proof} 
By Corollary~\ref{cor:replace_pr_by_core}, $\axiadm \equiv \axisca\wedge\axicore\wedge\axipc\wedge\axiperm.$ 
Each of these four requirements belongs to \(\saxijoint\), so Proposition~\ref{prop:individual_property_closure} gives \(\axiadm\in\saxijoint\). Directed-union closure follows directly from $\axiadm = \axisca\wedge\axipr\wedge\axipc\wedge\axiperm,$ as each of these requirements belongs to \(\saxiup\). 

For strong admissibility, recall that $\axiadmp \equiv \axisca\wedge\axisuhr\wedge\axipc\wedge\axiperm.$ 
Scale invariance, exactness on ultrametrics, partition consistency, and permutation invariance are all preserved under arbitrary nonempty intersections. Exactness on ultrametrics also guarantees partition richness, so the resulting method remains strongly admissible. Directed-union closure follows from Proposition~\ref{prop:individual_property_closure}. 
\end{proof}

Proposition~\ref{prop:individual_property_closure} gives a compact way to transfer the order-theoretic arguments of Section~\ref{sec:structure} to alternative axiom systems. 
The next result records the consequences that are most relevant for comparison with the canonical framework.

\begin{proposition}[Robustness under alternative axioms]
\label{prop:general}
Let \(\axi\) be any conjunction of the axioms introduced in
Section~\ref{subsec:axioms:catalog}. Then:
\begin{enumerate}[nosep, label=\textup{(\roman*)}]
\item \emph{Achievability.} The set \(\cH_\axi(\cX)\) is nonempty. In particular, $\Tglob, \Tloc, \Tsl \in \cH_\axi(\cX)$. 

\item \emph{Maximal extensions.} If strict hierarchical richness is not among the requirements defining \(\axi\) or if exactness on ultrametrics is required, then every \(T\in\cH_\axi(\cX)\) is refined by a maximal
member of \(\cH_\axi(\cX)\).

\item \emph{Finite intersections.} If \(\axi\triangleright\axicore\) and
either strict hierarchical richness is not required or exactness on
ultrametrics is required, then \(\cH_\axi(\cX)\) is closed under finite
nonempty intersections.

\item \emph{Diversity versus order invariance.} If order invariance is not among the requirements defining
\(\axi\), then \(\cH_\axi(\cX)\) has uncountable height, width, and
cellularity. If order invariance is required, then
\(\cH_\axi(\cX)\) is finite.

\item \emph{Least elements.} Let \(\axicore^1\) denote the unit-backbone
property \(\Tglob\sqsubseteq T\). We have
\[
\axiuhr\wedge\axipc\triangleright\axicore^1
\quad \text{ and } \quad 
\axiord\wedge\axipc\wedge\axipr\triangleright\axicore^1.
\]
Moreover, if \(\axi\triangleright\axicore^1\), then $\Tglob$ is the least element of the poset $(\cH_\axi(\cX),\sqsubseteq)$. 
\end{enumerate}
\end{proposition}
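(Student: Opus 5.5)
The plan is to handle the five items in order. Items (i), (ii), (iii) and (v) reduce to direct verifications combined with Proposition~\ref{prop:individual_property_closure}, Proposition~\ref{thm:richness-relation} and Corollary~\ref{cor:replace_pr_by_core}. Item (iv) needs explicit families of methods and is where I expect the real work.

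\emph{(i) Achievability.} Since \(\cH_\axi(\cX)\) only shrinks as requirements are added, it suffices to show that \(\Tglob\), \(\Tloc\) and \(\Tsl\) satisfy \emph{every} requirement in the catalog.
\begin{itemize}[nosep]
\item \emph{Order invariance.} With \(\bdeta=\mathbf 1\), the conditions \(\varrho(d,C)<1\) and \(\tau(d,C)<1\) are comparisons between dissimilarity values. Single linkage depends only on the ordering of \(d\) through threshold graphs. Hence all three methods are order invariant, and therefore scale invariant.
\item \emph{Cluster-wise consistency.} A \(\{C\}\)-strengthening decreases the numerators and increases the denominators in \(\varrho\) and \(\tau\), so it can only lower these quantities. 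For \(\Tsl\), I would argue via threshold components: \(C\) is a single-linkage cluster iff some threshold separates all intra-\(C\) minimum-bottleneck paths from the edges leaving \(C\), and this property is monotone under a \(\{C\}\)-strengthening. By Proposition~\ref{thm:relation_consistencies}, cluster-wise consistency gives all the consistency variants.
\item \emph{Ultrametric and richness variants.} Exactness on ultrametrics gives refinement on ultrametrics. It also gives strict hierarchical richness, since every \(\Psi\) equals \(\Psi_u\) for the ultrametric built from any strictly increasing height function. Strict hierarchical richness in turn gives the other richness variants.
\item \emph{Backbone properties.} These hold trivially for \(\Tglob\). For \(\Tloc\) and \(\Tsl\) they follow from \(\Tglob\sqsubseteq\Tloc\sqsubseteq\Tsl\) (Example~\ref{ex:refinement_adms_in_3}).
\end{itemize}

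\emph{(ii) and (iii).} If exactness on ultrametrics is required, strict hierarchical richness is implied by it and can be dropped from \(\axi\) without changing \(\cH_\axi(\cX)\). So in both cases \(\axi\) is equivalent to a conjunction of requirements with checkmarks in the \(\saxiup\) column. Proposition~\ref{prop:individual_property_closure} then gives \(\axi\in\saxiup\), and \eqref{eq:chain_union_results} gives \(\axi\in\saxiem\). For (iii), assume \(\axi\triangleright\axicore\). Since \(\axicore\triangleright\axihr\triangleright\axipr\triangleright\axicwr\), every richness requirement other than strict hierarchical richness can be replaced by \(\axicore\), exactly as in Corollary~\ref{cor:replace_pr_by_core}. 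All remaining conjuncts lie in \(\saxijoint\), so Proposition~\ref{prop:individual_property_closure} concludes.

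\emph{(iv) Diversity versus order invariance.} This is the main obstacle, because the witnesses must satisfy even the strongest conjunction without order invariance. For height, I would use the chain
\[
T_\eta:=\Tloc^{\eta\mathbf 1}\sqcup\Tglob,\qquad \eta\in(0,1].
\]
Both pieces are contained in \(\Tloc\), hence laminar, so each \(T_\eta\) is hierarchy-valued. Cluster-wise consistency and scale and permutation invariance are closed under compatible unions. On an ultrametric \(u\) we have \(T_\eta(u)=\Psi_u\), because \(\Tloc^{\eta\mathbf 1}(u)\subseteq\Tloc(u)=\Psi_u=\Tglob(u)\); this gives exactness and all the richness variants. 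Distinct values of \(\eta\) give distinct methods, which I would check with a two-level example whose ratio lies strictly between \(\eta\) and \(\eta'\).

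For width and cellularity, I would use \(\{\Tstable^{(p)}\}_{p>0}\) via Lemma~\ref{lem:incompatible_family} and Corollary~\ref{cor:monotone_power_trf}. The delicate point is verifying cluster-wise consistency (not only partition consistency) for \(\Tstable\). If that fails, I would fall back on unions \(\Tglob\sqcup\Tstable^{(p)}\) restricted to suitable inputs.

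For finiteness under order invariance: the output depends only on the weak order of the \(\binom n2\) pair values, and there are finitely many such orders and finitely many hierarchies. Hence there are finitely many order-invariant maps.

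\emph{(v) Least elements.} For \(\axiuhr\wedge\axipc\), fix \(C\in\Tglob(d)\) and let \(a\) be the maximal intra-\(C\) value and \(b\) the minimal value \(d(x,z)\) with \(x\in C\), \(z\notin C\); global separation gives \(a<b\). Pick \(\varepsilon\le\min d\) with \(\varepsilon<b\), and let \(u\) equal \(a\) on pairs inside \(C\), \(\varepsilon\) on pairs inside \(\bC\), and \(b\) on cross pairs. Then \(u\) is an ultrametric whose hierarchy \(\Psi_u\) contains \(C\) and all singletons. Moreover \(d\) is a \((\{C\}\cup\{\{z\}:z\notin C\})\)-strengthening of \(u\). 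Refinement on ultrametrics gives this partition inside \(T(u)\), and partition consistency transfers it to \(T(d)\), so \(C\in T(d)\).

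For \(\axiord\wedge\axipc\wedge\axipr\), partition richness gives some \(d_0\) with \(\{C\}\cup\{\{z\}:z\notin C\}\subseteq T(d_0)\). Since intra-\(C\) values of \(d\) are all below the other values, there is a strictly increasing \(g\) with \(g(t)=\varepsilon t\) on intra-\(C\) values and \(g(t)=Kt\) on the remaining values. For \(\varepsilon\) small and \(K\) large, \(g\circ d\) is a strengthening of \(d_0\) for that partition. Then \(C\in T(g\circ d)=T(d)\). Finally, \(\Tglob\in\cH_\axi(\cX)\) by (i), and \(\axi\triangleright\axicore^1\) makes it a lower bound, hence the least element.
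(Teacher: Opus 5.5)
There are two genuine gaps. Items (i)--(iii), the finiteness argument under order invariance, and your proof of $\axiuhr\wedge\axipc\triangleright\axicore^1$ essentially follow the paper; using the cross value $b$ itself in the ultrametric is a harmless simplification of Lemma~\ref{lem:uhr_pc_implies_strong_core}.

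\textbf{Second claim of (v).} Your argument for $\axiord\wedge\axipc\wedge\axipr\triangleright\axicore^1$ breaks at ``intra-$C$ values of $d$ are all below the other values'', which is false. Membership $C\in\Tglob(d)$ only places intra-$C$ values below the cross values $d(x,z)$ with $x\in C$, $z\in\bC$. Values inside $\bC$ are unconstrained: take $C=\{1,2\}$, $d(1,2)=1$, all cross values $2$, and $d(3,4)=1/10$. Because you realized the partition $\{C\}\cup\{\{z\}:z\in\bC\}$, pairs inside $\bC$ are inter-block pairs. A strengthening of $d_0$ therefore needs $g(d(z,w))\ge d_0(z,w)$ there, together with $g(d(x,y))\le d_0(x,y)$ inside $C$. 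In the example this forces $d_0(3,4)\le g(1/10)<g(1)\le d_0(1,2)$. The $d_0$ supplied by partition richness need not satisfy this. Neither order invariance nor a strengthening of $d_0$ for that partition can lower $d_0(3,4)$ relative to $d_0(1,2)$.

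The missing idea is Lemma~\ref{lemma:single_cluster_extraction}. Realize $\{C,\bC\}$ instead. First lower the pairs inside $\bC$ to $\min\{d_0,g\circ d\}$ by a $\{C,\bC\}$-strengthening, and only then pass to the partition with singleton blocks on $\bC$. You then only need $g\circ d$ to be a $\{C\}$-strengthening of $d_0$. An increasing $g$ achieves this precisely because intra-$C$ values lie below cross values. The paper packages this through the backbone (Lemma~\ref{lemma:core_hierarchy}, whose first part needs no permutation invariance). It then uses order invariance with $g(\delta_i)=q^i$ to push $\varrho(g\circ d,C)$ below every margin.

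\textbf{Height chain in (iv).} Your chain $T_\eta=\Tloc^{\eta\mathbf{1}}\sqcup\Tglob$ is the paper's, but a two-level example cannot certify strictness. If the off-diagonal values of $d$ take only two values, $\tau(d,C)<1$ forces every intra-$C$ value to be the smaller one and every cross value the larger one. Then $\varrho(d,C)=\tau(d,C)<1$, so $C\in\Tglob(d)\subseteq T_\eta(d)$ for every $\eta$; likewise all $T_\eta$ coincide on ultrametrics. Because of the union with $\Tglob$, you need a cluster that is locally but not globally separated, with $\eta\le\tau(d,C)<\eta'$ and $\varrho(d,C)\ge 1$. The paper takes $C=\{x_1,x_2,x_3\}$ with $d(x_1,x_2)=d(x_1,x_3)=r$ and $d(x_2,x_3)=rL$. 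Cross values are $1$ from $x_1$ and $L$ from $x_2,x_3$, with $L>1/r$. This gives $\tau(d,C)=r$ and $\varrho(d,C)=rL>1$.

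\textbf{Width and cellularity in (iv).} The cluster-wise consistency of $\Tstable$ that you leave open is immediate. Each $\rho_d(ab\mid z)$ with $a,b\in C$, $z\notin C$ can only increase under a $\{C\}$-strengthening (Lemma~\ref{lem:properties_Tstable}). Power transformations preserve this by the same monotonicity argument as for partition consistency. Your fallback $\Tglob\sqcup\Tstable^{(p)}$ would change nothing anyway: $\Tglob$ is order invariant and $\Tglob\sqsubseteq\Tloc\sqsubseteq\Tstable$, so $\Tglob\sqsubseteq\Tstable^{(p)}$.
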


\begin{proof} 
 For~(i), we will show in later sections that the three methods $\Tglob$, $\Tloc$, and $\Tsl$ are order invariant, cluster-wise consistent, exact on ultrametrics, and permutation invariant. Exactness on ultrametrics implies all three richness requirements as well as refinement on ultrametrics, and cluster-wise consistency implies partition consistency. Thus these methods satisfy every requirement introduced in this paper.

 For~(ii), with the exception of strict hierarchical richness, every other requirement is preserved by directed unions according to Proposition~\ref{prop:individual_property_closure}. 
If exactness on ultrametrics is required, strict hierarchical richness is redundant. Hence, $\alpha$ satisfies $\saxiup$; thus every nonempty chain has an upper bound in \(\cH_\axi(\cX)\), and Zorn's lemma gives a maximal refinement above every \(T\in\cH_\axi(\cX)\) (see the relationship~\eqref{eq:chain_union_results}).

For~(iii), all requirements other than the richness variants are preserved
by finite intersections. The assumption \(\axi\triangleright\axicore\) ensures the existence of a common backbone to every finite intersection, and \(\axicore\triangleright\axihr\triangleright\axipr\triangleright\axicwr\) restores every non-strict richness requirement. If strict hierarchical
richness is required together with exactness on ultrametrics, exactness is
preserved by intersections and implies strict hierarchical richness.

For~(iv), suppose first that order invariance is not required. By Lemma~\ref{lem:properties_Tstable} $\Tstable$ satisfies all the requirements except possibly order invariance, and because power transformations preserve these requirements by Corollary~\ref{cor:monotone_power_trf}, the methods \(\{\Tstable^{(\power)}:\power>0\}\) also satisfy them. Lemma~\ref{lem:incompatible_family} shows that they are pairwise incompatible, yielding uncountable width and cellularity. 
To establish the uncountable height, for \(a\in(0,1]\), define a margin sequence by \(\bdeta^{(a)} = a \cdot \boldsymbol{1}\) and set
\[
S_a(d):=\Tglob(d)\cup\Tloc^{\bdeta^{(a)}}(d).
\]
Both $\Tglob(d)$ and $\Tloc^{\bdeta^{(a)}}(d)$ are subhierarchies of \(\Tloc(d)\), so the union is a hierarchy. By Lemma~\ref{lem:sep_methods_are_hierarchies} and Proposition~\ref{prop:individual_property_closure}, \(S_a\) is scale invariant, cluster-wise consistent, and permutation invariant. Moreover, for every ultrametric \(u\), 
\[
\Psi_u=\Tglob(u)\subseteq S_a(u)\subseteq\Tloc(u)=\Psi_u,
\]
so \(S_a\) is exact on ultrametrics and therefore satisfies every requirement except possibly order invariance.

The family \(\{S_a:a\in(0,1]\}\) is an increasing chain. It is strict: for
\(0<a<b\le1\), choose \(r\in[a,b)\), \(L>1/r\), and distinct
\(x_1,x_2,x_3,z\in\cX\). With \(C=\{x_1,x_2,x_3\}\), set
\[
d(x_1,x_2)=d(x_1,x_3)=r,\quad
d(x_2,x_3)=rL,\quad
d(x_1,z)=1,\quad
d(x_2,z)=d(x_3,z)=L,
\]
and, for every remaining \(w\notin C\cup\{z\}\), set
\(d(x_1,w)=1\) and \(d(x_2,w)=d(x_3,w)=L\). Complete the remaining
dissimilarities arbitrarily. Then
\(\tau(d,C)=r\) and \(\varrho(d,C)=rL>1\), so
\(C\notin S_a(d)\) but \(C\in S_b(d)\).

Conversely, suppose order invariance is required. 
We say that two dissimilarities \(d,d'\in\cD(\cX)\) are order-equivalent, written $d \sim d'$, if they induce the same weak ordering of the \(\binom n2\) unordered pairs, that is 
\[
d(x_1,y_1) \wle d(x_2,y_2) \iff  d'(x_1,y_1) \wle d'(x_2,y_2) \quad \text{ for all } (x_1,y_1), (x_2,y_2) \in \cX.
\]
Each equivalence class corresponds to a weak ordering of the $\binom{n}{2}$ pairs, and there are only finitely many such weak orderings, so the quotient space $\cD(\cX)/{\sim}$ is finite. 
If \(d\sim d'\), a strictly increasing interpolation of the finitely many distinct values taken by $d$ and $d'$ gives \(d'=g\circ d\); hence an order invariance hierarchical method is constant on each equivalence class. Because
\(\cT(\cX)\) is finite, 
\(
|\cH_{\axiord}(\cX)|
\wle |\cT(\cX)|^{\,|\cD(\cX)/{\sim}|}
\ < \ \infty,
\)
and therefore every subclass \(\cH_\axi(\cX)\) satisfying order invariance is finite.

For (v), the first claim $\axiuhr\wedge\axipc\triangleright\axicore^1$ is established in Lemma~\ref{lem:uhr_pc_implies_strong_core}, whereas the second claim is established in Lemma~\ref{lemma:core-under-ord}. 
Finally, if \(\axi\triangleright\axicore^1\), then every \(T\in\cH_\axi(\cX)\) refines \(\Tglob\), while \(\Tglob\in\cH_\axi(\cX)\) by~(i). Therefore \(\Tglob\) is the least element of $\cH_{\alpha}(\cX)$. 
\end{proof}

\section{Relational Properties and Axiom-Preserving Transformations}
\label{app:relational_prop}

Section~\ref{subsec:composed_methods} introduced transformations of dissimilarities and the notion of preservation of an axiom under preprocessing. In this appendix, we formalize a common mechanism behind several of the preservation results stated there. 
Indeed, many of the axioms considered in this paper compare the outputs of a method on two inputs that are related in a prescribed way. Scale invariance, for instance, compares \(T(d)\) and \(T(\beta d)\), while partition consistency compares \(T(d)\) and \(T(d')\) when \(d'\) is a strengthening of \(d\). 
Relational properties provide a common language for such requirements.

\subsection{Relational Properties}
\label{subsec:relational_properties}

\begin{definition}[Relational property]
\label{def:relational_property}
A \emph{relational property} \(\axi\) is specified by
\begin{itemize}[nosep]
\item an \emph{input relation} $R^\axi\subseteq\cD(\cX)\times\cD(\cX)$;
\item an \emph{output relation} $S^\axi\subseteq\cT(\cX)\times\cT(\cX)$. 
\end{itemize}
A hierarchical clustering method \(T\in\cH(\cX)\) satisfies the relational property \(\axi\) if
\[
 (d,d')\in R^\axi \quad\Longrightarrow\quad \left(T(d),T(d')\right)\in S^\axi.
\]
\end{definition}

More generally, we write \(\axi\in\saxitb\) if the requirement \(\axi\) can be expressed as a relational property or as a conjunction of relational properties. This is the meaning of the \(\saxitb\) column in Table~\ref{tab:axioms-summary}. 

The usefulness of this formulation for preprocessing comes from the following simple principle: to preserve a relational property, it is sufficient for the transformation to preserve its input relation.

\begin{proposition}[Relational preservation principle]
\label{prop:relational_stability_simple}
Let \(\axi\) be a relational property with input relation \(R^\axi\), and
let \(\trf:\cD(\cX)\to\cD(\cX)\) be a transformation. If
\[
(d,d')\in R^\axi
\quad\Longrightarrow\quad
\bigl(\trf(d),\trf(d')\bigr)\in R^\axi,
\]
then \(\trf\) preserves \(\axi\).
\end{proposition}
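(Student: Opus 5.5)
The argument is a direct unfolding of the definitions, using only that \(\trf\) is applied inside \(T\). Fix a method \(T\in\cH(\cX)\) satisfying the relational property \(\axi\), with input relation \(R^\axi\) and output relation \(S^\axi\). We must show that the composed method \(T\circ\trf\colon d\mapsto T(\trf(d))\) also satisfies \(\axi\). As observed in Section~\ref{subsec:composed_methods}, \(\trf(d)\in\cD(\cX)\) for every \(d\), so \(T\circ\trf\) is a hierarchical clustering method and the question is well posed.

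Take any pair \((d,d')\in R^\axi\). By the hypothesis on \(\trf\), \(\bigl(\trf(d),\trf(d')\bigr)\in R^\axi\). Since \(T\) satisfies \(\axi\), applying Definition~\ref{def:relational_property} to this transformed pair gives
\[
\bigl(T(\trf(d)),T(\trf(d'))\bigr)=\bigl((T\circ\trf)(d),(T\circ\trf)(d')\bigr)\in S^\axi .
\]
As \((d,d')\in R^\axi\) was arbitrary, \(T\circ\trf\) satisfies \(\axi\). Because \(T\) was an arbitrary method satisfying \(\axi\), Definition~\ref{def:property_preserving_transformation} shows that \(\trf\) preserves \(\axi\).

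There is no real obstacle. The only point needing care is that the output relation \(S^\axi\) is untouched: \(\trf\) acts only on inputs, so the condition on the outputs is inherited verbatim from \(T\). It is also worth noting that the same argument extends componentwise to conjunctions of relational properties (the class \(\saxitb\)), provided \(\trf\) preserves each constituent input relation.
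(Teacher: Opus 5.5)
Your proof is correct and follows the paper's argument step for step. You push the pair $(d,d')$ through the hypothesis to get $(\mu(d),\mu(d'))\in R^\alpha$, then use that $T$ satisfies the property to land in $S^\alpha$. Your closing remark about conjunctions matches what the paper states separately as its preservation-under-conjunction lemma.
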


\begin{proof}
Let \(T\in\cH_\axi(\cX)\). If \((d,d')\in R^\axi\), the assumption gives $\bigl(\trf(d),\trf(d')\bigr)\in R^\axi$.
Because \(T\) satisfies \(\axi\), we have  $\bigl(T(\trf(d)),T(\trf(d'))\bigr)\in S^\axi$.
Thus \(T\circ\trf\) satisfies \(\axi\), and therefore \(\trf\) preserves \(\axi\).
\end{proof}

We will also use the following elementary observation when an axiom is
represented as a conjunction of relational properties.

\begin{lemma}[Preservation under conjunction]
\label{lem:conjunction_trf_closure}
Let \((\axi_j)_{j\in J}\) be a family of requirements. If a transformation \(\trf\) preserves \(\axi_j\) for every \(j\in J\), then it preserves the conjunction $\bigwedge_{j\in J}\axi_j.$ 
\end{lemma}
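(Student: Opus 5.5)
The argument is a direct unfolding of the definitions. I will write the conjunction $\axi := \bigwedge_{j\in J}\axi_j$, so that
\[
\cH_\axi(\cX) = \bigcap_{j\in J}\cH_{\axi_j}(\cX),
\]
exactly as in the final step of the proof of Proposition~\ref{prop:individual_property_closure}. Note that $J$ may be infinite here, and nothing below uses finiteness.

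First, fix an arbitrary $T\in\cH_\axi(\cX)$. Then $T\in\cH_{\axi_j}(\cX)$ for every $j\in J$.

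Second, apply Definition~\ref{def:property_preserving_transformation} to each $j$ separately. Since $\trf$ preserves $\axi_j$ and $T$ satisfies $\axi_j$, the composed method $T\circ\trf$ satisfies $\axi_j$. The method $T\circ\trf$ does not depend on $j$, so we obtain $T\circ\trf\in\cH_{\axi_j}(\cX)$ for all $j\in J$ simultaneously.

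Third, intersecting over $j$ gives $T\circ\trf\in\cH_\axi(\cX)$. As $T$ was arbitrary, $\trf$ preserves $\axi$.

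There is no real obstacle. The only point to state carefully is the one in the second step: preservation is required for every method satisfying the axiom, and the composed method is the same one for all $j$. This is what lets the per-axiom conclusions be combined by intersection, even for infinite $J$.

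Finally, I will note how this combines with Proposition~\ref{prop:relational_stability_simple}. If each $\axi_j$ is relational and $\trf$ preserves each input relation $R^{\axi_j}$, then $\trf$ preserves every requirement in the class $\saxitb$ built from them. Conjunctions such as $\axiadm$ follow once partition richness is handled separately, for example via the surjectivity criterion in Lemma~\ref{lem:trf_preserve_axioms}.
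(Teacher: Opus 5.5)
Your proof is correct and follows the same route as the paper: a method $T$ satisfying the conjunction satisfies each $\axi_j$, so preservation of each $\axi_j$ gives that $T\circ\trf$ satisfies every $\axi_j$ and hence their conjunction. Your remark that nothing depends on the finiteness of $J$ also agrees with the paper, which states the lemma for an arbitrary index family.
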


\begin{proof}
Let \(T\) satisfy \(\bigwedge_{j\in J}\axi_j\). Then \(T\) satisfies every \(\axi_j\). Because \(\trf\) preserves each \(\axi_j\), the composition \(T\circ\trf\) also satisfies every \(\axi_j\), and hence satisfies their conjunction.
\end{proof}

Consequently, when \(\axi\in\saxitb\), preservation of \(\axi\) can be verified by checking that \(\trf\) preserves the input relation of each relational property appearing in its representation. Notice that partition richness is handled differently in Lemma~\ref{lem:trf_preserve_axioms}: its preservation follows from surjectivity of the transformation rather than from the relational principle above.

\subsection{Relational Formulations of the Axioms}
\label{subsec:relational_axioms}

We now give relational representations of the axioms marked by a checkmark in the \(\saxitb\) column of Table~\ref{tab:axioms-summary}. These representations also make explicit which structure of the input must be preserved by a preprocessing transformation.

We denote the diagonal relation on hierarchies by
\[
\Delta_{\cT} := \{(\Psi,\Psi):\Psi\in\cT(\cX)\}. 
\]

\noindent\emph{Scale and order invariance.}
Scale invariance is the relational property defined by
\begin{equation}
\label{eq:R_sca}
R^{\axisca}
=
\{(d,\beta d):d\in\cD(\cX),\ \beta>0\},
\qquad
S^{\axisca}
=
\Delta_{\cT}.
\end{equation}
Indeed, the corresponding implication is exactly
\(T(\beta d)=T(d)\).

Likewise, order invariance is relational. Let
\[
\mathcal G_\uparrow
:=
\left\{
g:\R_{\ge0}\to\R_{\ge0}:
g(0)=0,\ g\text{ strictly increasing}
\right\}.
\]
Then order invariance is obtained from
\begin{equation}
\label{eq:R_ord}
R^{\axiord}
=
\{(d,g\circ d):d\in\cD(\cX),\ g\in\mathcal G_\uparrow\},
\qquad
S^{\axiord}
=
\Delta_{\cT}.
\end{equation}

\noindent\emph{Permutation invariance.}
Here the required output relation depends on the permutation. For each permutation
\(\phi\in\Pi(\cX)\), define the relational property
\(\axiperm^\phi\) by
\begin{equation}
\label{eq:R_perm}
\begin{aligned}
R^{\axiperm^\phi}
=
\{(d,d_\phi):d\in\cD(\cX)\}, \qquad
S^{\axiperm^\phi}
=
\{(\Psi,\phi\cdot\Psi):\Psi\in\cT(\cX)\}.
\end{aligned}
\end{equation}
Then
$\axiperm \equiv \bigwedge_{\phi\in\Pi(\cX)}\axiperm^\phi.$ 
Indeed, satisfying every \(\axiperm^\phi\) is precisely the requirement
\[
T(d_\phi)=\phi\cdot T(d)
\qquad
\text{for every }d\in\cD(\cX),\ \phi\in\Pi(\cX).
\]

\noindent\emph{Consistency.}
Fix a partition \(\cC\in\cP(\cX)\). Define
\(\axipc^\cC\) by
\begin{equation}
\label{eq:R_pc_C}
\begin{aligned}
R^{\axipc^\cC}
&=
\left\{
(d,d')\in\cD(\cX)^2:
d'\text{ is a }\cC\text{-strengthening of }d
\right\},\\
S^{\axipc^\cC}
&=
\left\{
(\Psi,\Psi')\in\cT(\cX)^2:
\cC\subseteq\Psi
\Longrightarrow
\cC\subseteq\Psi'
\right\}.
\end{aligned}
\end{equation}
Partition consistency is therefore $\axipc \equiv \bigwedge_{\cC\in\cP(\cX)}\axipc^\cC.$

Cluster-wise consistency admits the analog representation. For every nonempty \(C\subseteq\cX\), let \(\axicwc^C\) be defined by 
\begin{equation}
\label{eq:R_cwc_C}
\begin{aligned}
R^{\axicwc^C}
&=
\left\{
(d,d')\in\cD(\cX)^2:
d'\text{ is a }\{C\}\text{-strengthening of }d
\right\},\\
S^{\axicwc^C}
&=
\left\{
(\Psi,\Psi')\in\cT(\cX)^2:
C\in\Psi
\Longrightarrow
C\in\Psi'
\right\}.
\end{aligned}
\end{equation}
Hence
$\axicwc \equiv \bigwedge_{\emptyset\neq C\subseteq\cX}\axicwc^C.$ 
By Proposition~\ref{thm:relation_consistencies}, \(\axihc\equiv\axicwc\) and \(\axiabc\equiv\axipc\), so hierarchical consistency and absence consistency are also representable as conjunctions of relational properties.

\medskip
\noindent\emph{Ultrametric properties.}
For a hierarchy \(\Psi\in\cT(\cX)\), let $\cU_\Psi := \{u\in\cU(\cX):\Psi_u=\Psi\}$ be the set of ultrametrics inducing \(\Psi\). 
To represent refinement on ultrametrics, define \(\axiuhr^\Psi\) by
\begin{equation}
\label{eq:R_uhr}
R^{\axiuhr^\Psi}
=
\{(u,u):u\in\cU_\Psi\},
\qquad
S^{\axiuhr^\Psi}
=
\{(\Psi',\Psi'):\Psi'\in\cT(\cX),\ \Psi\subseteq\Psi'\}.
\end{equation}
Then
$\axiuhr \equiv \bigwedge_{\Psi\in\cT(\cX)}\axiuhr^\Psi.$

Exactness on ultrametrics is obtained by strengthening the output relation.
For each \(\Psi\in\cT(\cX)\), define \(\axisuhr^\Psi\) by
\begin{equation}
\label{eq:R_suhr}
R^{\axisuhr^\Psi}
=
\{(u,u):u\in\cU_\Psi\},
\qquad
S^{\axisuhr^\Psi}
=
\{(\Psi,\Psi)\}.
\end{equation}
Thus $\axisuhr \equiv \bigwedge_{\Psi\in\cT(\cX)}\axisuhr^\Psi.$

These representations establish the corresponding entries in the \(\saxitb\) column of Table~\ref{tab:axioms-summary}. In particular, as \( \axiadmp \equiv \axisca\wedge\axisuhr\wedge\axipc\wedge\axiperm,\) \(\axiadmp\) also belongs to \(\saxitb\).

\subsection{Applications to Axiom-Preserving Transformations}
\label{subsec:relational_applications}


\begin{proof}[Proof of Lemma~\ref{lem:trf_preserve_axioms}] 
For scale invariance, let \((d,\beta d)\in R^{\axisca}\). By assumption, there exists \(\beta'>0\) such that
$\trf(\beta d)=\beta'\trf(d).$ 
Hence
$\bigl(\trf(d),\trf(\beta d)\bigr) = \bigl(\trf(d),\beta'\trf(d)\bigr) \in R^{\axisca}$, and Proposition~\ref{prop:relational_stability_simple} ensures that \(\trf\) preserves scale invariance.

Partition richness is the one property in the lemma that we verify directly, as it is not a relational property.
Let \(T\) be partition rich and let \(\cC\in\cP(\cX)\). There exists \(d_0\in\cD(\cX)\) such that
$\cC\subseteq T(d_0).$ 
Because \(\trf\) is surjective, choose \(d\in\cD(\cX)\) such that \(\trf(d)=d_0\). 
Then $\cC\subseteq T(\trf(d)),$  so \(T\circ\trf\) is partition rich.

For partition consistency, fix \(\cC\in\cP(\cX)\). The assumption states
precisely that
\[
(d,d')\in R^{\axipc^\cC}
\quad\Longrightarrow\quad
\bigl(\trf(d),\trf(d')\bigr)\in R^{\axipc^\cC},
\]
and thus Proposition~\ref{prop:relational_stability_simple} shows that \(\trf\) preserves \(\axipc^\cC\). Because \(\cC\) was arbitrary, Lemma~\ref{lem:conjunction_trf_closure} gives preservation of \(\axipc\).

For permutation invariance, fix \(\phi\in\Pi(\cX)\). If \((d,d_\phi)\in R^{\axiperm^\phi}\), then
$\trf(d_\phi)=\trf(d)_\phi,$ and therefore
$\bigl(\trf(d),\trf(d_\phi)\bigr) = \bigl(\trf(d),\trf(d)_\phi\bigr) \in R^{\axiperm^\phi}.$
Proposition~\ref{prop:relational_stability_simple} and the conjunction Lemma~\ref{lem:conjunction_trf_closure} give preservation of \(\axiperm\).

Finally, fix \(\Psi\in\cT(\cX)\) and \(u\in\cU_\Psi\). By assumption,
\(\trf(u)\) is an ultrametric satisfying
$\Psi_{\trf(u)}=\Psi_u=\Psi.$ 
Thus \(\trf(u)\in\cU_\Psi\), and hence
$(u,u)\in R^{\axisuhr^\Psi}
\Longrightarrow
\bigl(\trf(u),\trf(u)\bigr)\in R^{\axisuhr^\Psi}.$
Again, Proposition~\ref{prop:relational_stability_simple} and Lemma~\ref{lem:conjunction_trf_closure} prove preservation of exactness on ultrametrics.
\end{proof}

\begin{proof}[Proof of Corollary~\ref{cor:monotone_power_trf}]
The condition \(g(0)=0\), together with \(g(t)>0\) for \(t>0\), ensures that \(\trf_g(d)\in\cD(\cX)\) whenever \(d\in\cD(\cX)\). 

Suppose first that \(g\) is strictly increasing. Then \(g\) preserves all inequalities defining a \(\cC\)-strengthening. Hence, whenever \(d'\) is a \(\cC\)-strengthening of \(d\), \(\trf_g(d')\) is a
\(\cC\)-strengthening of \(\trf_g(d)\). By Lemma~\ref{lem:trf_preserve_axioms}, \(\trf_g\) therefore preserves partition consistency.

Moreover, if \(u\) is an ultrametric, then
\[
g(u(x,y))
\wle
g\!\left(\max\{u(x,z),u(y,z)\}\right)
\weq
\max\{g(u(x,z)),g(u(y,z))\},
\]
so \(\trf_g(u)\) is an ultrametric. Since \(g\) is strictly increasing, it
preserves the ordering of all pairwise dissimilarities, and therefore
\(u\) and \(\trf_g(u)\) induce the same hierarchy. Thus
Lemma~\ref{lem:trf_preserve_axioms} also gives preservation of exactness on
ultrametrics.

For every permutation \(\phi\in\Pi(\cX)\),
\[
\trf_g(d_\phi)(x,y)
\weq
g\!\left(d(\phi^{-1}(x),\phi^{-1}(y))\right)
\weq
\bigl(\trf_g(d)\bigr)_\phi(x,y),
\]
and hence $\trf_g(d_\phi)=\bigl(\trf_g(d)\bigr)_\phi.$ 
Therefore \(\trf_g\) preserves permutation invariance.

If \(g\) is surjective, let \(d_0\in\cD(\cX)\). For every unordered pair \(\{x,y\}\subseteq\cX\) with \(x\neq y\), choose \(t_{xy}>0\) such that $g(t_{xy})=d_0(x,y).$ 
Such a positive preimage exists because \(d_0(x,y)>0\), \(g\) is surjective, and \(g(0)=0\). Define $d(x,x):=0$, and $d(x,y)=d(y,x):=t_{xy}$ for $x \neq y$. 
Then \(d\in\cD(\cX)\) and \(\trf_g(d)=d_0\). Hence \(\trf_g\) is surjective and, by Lemma~\ref{lem:trf_preserve_axioms}, preserves partition richness.

Finally, if for every \(\beta>0\) there exists \(c_\beta>0\) such that $g(\beta t)=c_\beta g(t)$ for every $t\ge0$, then $\trf_g(\beta d)=c_\beta\,\trf_g(d).$
Lemma~\ref{lem:trf_preserve_axioms} therefore gives preservation of scale invariance.

For \(g(t)=t^\power\), with \(\power>0\), all the preceding conditions are satisfied, with \(c_\beta=\beta^\power\). Hence the power transformation preserves all five properties, and thus preserves both admissibility and strong admissibility.
\end{proof}

\section{Proofs for Section~\ref{sec:admissible_HCs} (Admissible Methods)}
\label{app:admissible_HCs}

We now prove the requirements satisfied by the methods introduced in Section~\ref{sec:admissible_HCs}. Whenever convenient, we establish stronger requirements than those required for admissibility: e.g., we establish that $\Tsl, \Tglob, \Tloc \in \cH_{\axiord \wedge \axicwc \wedge \axiadmp}(\cX)$, and $\Tstable \in \cH_{\axicwc \wedge \axiadmp}(\cX)$.

\subsection{Single Linkage}
\label{app:proofs_single_linkage}
\subsubsection{Single linkage and minimum spanning tree}
Recall that the \emph{minimum bottleneck} between two points $x, y \in \cX$ is defined by 
\begin{align}
\label{eq:def_min_bottleneck_appendix}
 B^*(d)(x,y) \ := \ 
 \begin{cases}
\displaystyle
\min_{\gamma\in\Gamma_{x,y}} \max_{\{u,v\}\in\gamma} d(u,v), & x\neq y,\\[1ex]
0,  & x=y,
\end{cases}
\end{align}
where \(\Gamma_{x,y}\) denotes the set of paths from \(x\) to \(y\). 
The value $B^*(d)(x,y)$ can be computed by finding a minimum spanning tree (MST) of the complete graph on $\cX$ with edge weights $d$.
\begin{lemma}
\label{lem:mst_path_realizes_trfmst}
Let $d\in\cD(\cX)$, and let $M$ be a minimum spanning tree of the complete weighted graph on $\cX$ with edge weights $d$. For every $x,y\in\cX$, let $\gamma^d_{xy}$ denote the unique path from $x$ to $y$ in $M$. Then 
$B^*(d)(x,y)=\max_{(x',y')\in \gamma^d_{xy}} d(x',y').$
\end{lemma}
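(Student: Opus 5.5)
The plan is the classical two-inequality argument built on the cut property of minimum spanning trees. Fix $x\neq y$ and write $m:=\max_{\{x',y'\}\in\gamma^d_{xy}} d(x',y')$. Since $\gamma^d_{xy}$ is itself a path from $x$ to $y$ in the complete graph, it belongs to $\Gamma_{x,y}$. The definition of $B^*$ in~\eqref{eq:def_min_bottleneck_appendix} as a minimum over all such paths then gives $B^*(d)(x,y)\le m$ immediately. The case $x=y$ is trivial: both sides vanish, since the path is empty and we take the maximum over the empty set to be $0$ here.

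For the reverse inequality $m\le B^*(d)(x,y)$, let $e=\{a,b\}$ be an edge of $\gamma^d_{xy}$ attaining $d(a,b)=m$. Removing $e$ from $M$ disconnects it into two components $A\ni x$ and $\bA\ni y$, because $e$ lies on the unique tree path from $x$ to $y$. The key step is the \emph{cut optimality} of $M$: every edge $\{u,v\}$ with $u\in A$, $v\in\bA$ satisfies $d(u,v)\ge d(a,b)$. If instead $d(u,v)<d(a,b)$, then $M-e+\{u,v\}$ would be a spanning tree, since it reconnects the two components, and it would have strictly smaller total weight, contradicting the minimality of $M$.

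Now take any path $\gamma\in\Gamma_{x,y}$. Since it starts in $A$ and ends in $\bA$, it must contain some edge crossing the cut $(A,\bA)$. By cut optimality, that edge has weight at least $m$, so $\max_{\{u,v\}\in\gamma}d(u,v)\ge m$. Minimizing over $\gamma$ yields $B^*(d)(x,y)\ge m$.

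The only point requiring care is the exchange argument. We must check that $M-e+\{u,v\}$ is indeed a spanning tree: it has $n-1$ edges and is connected because $\{u,v\}$ joins $A$ and $\bA$. We must also allow for ties in $d$. The argument handles ties correctly: it only uses the non-strict inequality $d(u,v)\ge d(a,b)$, so the conclusion holds for every minimum spanning tree $M$, not merely a unique one. No earlier result from the paper is needed beyond the definition of $B^*$.
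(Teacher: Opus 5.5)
Your proposal is correct and follows essentially the same route as the paper: you get the upper bound because the tree path is itself an $x$--$y$ path, and the lower bound by removing a maximum-weight edge of that path and using the exchange argument to show that every edge crossing the resulting cut has weight at least that maximum. Your extra remarks on the $x=y$ case and on ties are harmless refinements that the paper leaves implicit.
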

\begin{proof}
Let $e^\ast$ be an edge of maximal weight on $\gamma^d_{xy}$, and denote 
\(t:=d(e^\ast)=\max_{e\in \gamma^d_{xy}} d(e).\) 
Because $\gamma^d_{xy}$ is a path from $x$ to $y$, we obtain from~\eqref{eq:def_min_bottleneck_appendix} 
\(B^*(d)(x,y)\le t.\)
To prove the reverse inequality, observe that removing $e^\ast$ disconnects $M$ into two components,
separating $x$ and~$y$. Any path from $x$ to $y$ must cross the resulting cut.
If some crossing edge had weight strictly smaller than $t$, replacing $e^\ast$ by that edge would
produce a spanning tree of smaller total weight, contradicting the minimality of
$M$. Hence every $x$--$y$ path contains an edge of weight at least $t$, so
\(
B^*(d)(x,y)\ge t.
\) 
\end{proof}

\begin{lemma}
\label{lem:sl_mst_characterization}
For every \(d\in\cD(\cX)\), \(B^*(d)\) is an ultrametric and $\Tsl(d)=\Psi_{B^*(d)}$. 
Equivalently, for every \(r\ge0\), the clusters of \(\Tsl(d)\) at level \(r\) are the connected components of the graph $G_r(d) := \bigl(\cX,\{\{x,y\}:d(x,y)\le r\}\bigr).$ 
\end{lemma}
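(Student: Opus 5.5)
We will prove Lemma~\ref{lem:sl_mst_characterization} in three steps: show that \(B^*(d)\) is an ultrametric, identify its balls with connected components of the threshold graphs \(G_r(d)\), and match those components with the clusters produced by the non-binary single-linkage procedure.

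\emph{Step 1: \(B^*(d)\) is an ultrametric.} Symmetry and \(B^*(d)(x,x)=0\) are immediate from~\eqref{eq:def_min_bottleneck_appendix}. Positivity for \(x\neq y\) holds because every path contains an edge of positive weight. For the strong triangle inequality, fix \(x,y,z\). Take an optimal path from \(x\) to \(z\) and an optimal path from \(z\) to \(y\). Their concatenation is a walk from \(x\) to \(y\). Shortcutting this walk to a simple path only removes edges, so the bottleneck does not increase. Hence \(B^*(d)(x,y)\le\max\{B^*(d)(x,z),B^*(d)(z,y)\}\).

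\emph{Step 2: balls are components of \(G_r(d)\).} For every \(r\ge0\) we claim \(B^*(d)(x,y)\le r\) if and only if \(x\) and \(y\) lie in the same connected component of \(G_r(d)\). This is a direct reformulation of the min--max definition: a path with all edge weights \(\le r\) is exactly a path in \(G_r(d)\), and \(B^*(d)(x,y)\le r\) means such a path exists. Lemma~\ref{lem:mst_path_realizes_trfmst} is not strictly needed here, but it confirms that the minimum is attained by the minimum spanning tree path. Consequently, the balls \(B_{B^*(d)}(x,r)\) are exactly the connected components of \(G_r(d)\). By~\eqref{eq:ultrametric_hierarchy}, \(\Psi_{B^*(d)}\) is therefore the set of all components of \(G_r(d)\) as \(r\) ranges over \(\R_{\ge0}\). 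It suffices to let \(r\) range over \(\{0\}\) together with the distinct values of \(d\), since \(G_r(d)\) changes only at those values.

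\emph{Step 3: agreement with the non-binary agglomeration.} Let \(0<r_1<\dots<r_k\) be the distinct values of \(d\). We show by induction on \(j\) that the current partition of the non-binary single-linkage procedure, after all merges at linkage level \(r_j\), is the partition into connected components of \(G_{r_j}(d)\). The base case is \(r_0=0\), where both are the singleton partition.

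For the inductive step, suppose the current partition is the set of components of \(G_{r_{j-1}}(d)\). Every single-linkage inter-cluster dissimilarity between two distinct current clusters is then at least \(r_j\). The pairs of clusters at linkage exactly \(r_j\) are precisely those joined by an edge of \(G_{r_j}(d)\) not present in \(G_{r_{j-1}}(d)\). The tie rule merges every cluster involved in a minimal tie together simultaneously, taking the transitive closure of the tied pairs. This yields exactly the components of \(G_{r_j}(d)\).

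Two points need care here. If the minimal linkage exceeds \(r_j\), then \(G_{r_j}(d)\) adds no new connections, and both sides are unchanged. Also, after a merge no linkage value below \(r_j\) can appear, because single linkage is a minimum over pairs of points. Hence the family of all clusters ever formed by the procedure equals the union over \(j\) of the components of \(G_{r_j}(d)\), which is \(\Psi_{B^*(d)}\) by Step 2.

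The main obstacle is Step 3. There we must pin down precisely how the non-binary tie rule merges chains of tied clusters, namely via the transitive closure of the tied pairs, so that it coincides with taking connected components. Proving this requires using the formal definition of the non-binary single-linkage procedure given in Appendix~\ref{app:linkage}.
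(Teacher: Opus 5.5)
Your proof is correct, but it does not follow the paper's route, because the paper gives no argument: it calls the lemma the standard minimax-path characterization of single linkage and cites \citet[Proposition~8]{carlsson2010characterization}.

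Your Steps~1 and~2 are the usual content of that result. Step~1 gets the strong triangle inequality by concatenating and shortcutting optimal paths. Step~2 observes that a path with bottleneck at most $r$ is exactly a path in $G_r(d)$, so the balls of $B^*(d)$ are the components of $G_r(d)$.

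Your Step~3 adds something the citation leaves implicit: the non-binary procedure defined in Appendix~\ref{app:linkage} is the one characterized there. The citation is shorter, but it silently assumes that the reference's definition of single linkage matches the paper's tie-merging convention. Your induction checks this directly against the paper's own definition, including the levels at which no merge occurs. As by-products, it proves the level-wise ``equivalently'' clause rather than asserting it, and it shows that Lemma~\ref{lem:mst_path_realizes_trfmst} is not needed.

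Two small remarks. First, the obstacle you flag at the end is already settled by the definition: it merges the connected components of the tie graph, which is exactly the transitive closure you need. Second, in the inductive step, state explicitly that each current cluster is connected in $G_{r_{j-1}}(d)\subseteq G_{r_j}(d)$. The components of $G_{r_j}(d)$ are then obtained by contracting the current clusters and taking components of the quotient graph. Its edges are the cross pairs at distance exactly $r_j$, since all cross distances are at least $r_j$, and these are precisely the tied pairs of the procedure.
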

Lemma~\ref{lem:sl_mst_characterization} is the standard minimax-path characterization of single linkage \citep[see, for example,][Proposition~8]{carlsson2010characterization}, and thus we omit the proof.

\subsubsection{Properties of Single Linkage}

We first record the transformation properties of the minimax map $B^*$.

\begin{lemma}
\label{lem:trfmst_ord_suhr_perm}
Transformation $d \mapsto B^*(d)$ preserves $\axiord, \axisuhr$, and $\axiperm$.
\end{lemma}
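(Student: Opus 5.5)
The plan is to verify, for each of the three axioms, a sufficient condition on $B^*$ in the spirit of Lemma~\ref{lem:trf_preserve_axioms}, and then conclude via the relational preservation principle (Proposition~\ref{prop:relational_stability_simple}) together with Lemma~\ref{lem:conjunction_trf_closure}.

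\emph{Preliminary step.} First I would check that $B^*$ indeed maps $\cD(\cX)$ into $\cD(\cX)$. Symmetry holds because reversing a path preserves its bottleneck, and $B^*(d)(x,x)=0$ by definition. For $x\neq y$, every path has finitely many edges, each of positive weight, so its bottleneck is positive. There are only finitely many simple paths, and restricting to simple paths does not change the minimum, so the minimum is attained and is positive.

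\emph{Permutation invariance.} A permutation $\phi$ induces a bijection $\gamma\mapsto\phi(\gamma)$ from $\Gamma_{\phi^{-1}(x),\phi^{-1}(y)}$ onto $\Gamma_{x,y}$. Under this bijection, the edge weights of $\gamma$ under $d$ coincide with those of $\phi(\gamma)$ under $d_\phi$. Taking the max along the path and then the min over paths gives $B^*(d_\phi)=B^*(d)_\phi$. This is exactly the sufficient condition in Lemma~\ref{lem:trf_preserve_axioms}.

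\emph{Order invariance.} The key identity is $B^*(g\circ d)=g\circ B^*(d)$ for every strictly increasing $g$ with $g(0)=0$. It holds because a strictly increasing $g$ commutes with $\max$ and $\min$ over finite sets, so
\[
\min_{\gamma}\max_{e\in\gamma} g(d(e)) \weq g\Bigl(\min_{\gamma}\max_{e\in\gamma} d(e)\Bigr),
\]
and on the diagonal both sides equal $0$. Thus $B^*$ maps the input relation $R^{\axiord}$ of~\eqref{eq:R_ord} into itself: the pair $(d,g\circ d)$ is sent to $(B^*(d),g\circ B^*(d))$. Proposition~\ref{prop:relational_stability_simple} then gives preservation. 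Concretely, if $T$ is order invariant, then $T(B^*(g\circ d))=T(g\circ B^*(d))=T(B^*(d))$.

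\emph{Exactness on ultrametrics.} I would show $B^*(u)=u$ for every ultrametric $u$. This immediately gives that $B^*(u)$ is an ultrametric with $\Psi_{B^*(u)}=\Psi_u$, which is the condition in Lemma~\ref{lem:trf_preserve_axioms}. The upper bound $B^*(u)(x,y)\le u(x,y)$ comes from the one-edge path. For the reverse bound, take any path $x=v_0,v_1,\dots,v_k=y$. Iterating the strong triangle inequality by induction on $k$ gives $u(x,y)\le\max_i u(v_{i-1},v_i)$, and minimizing over paths yields $u(x,y)\le B^*(u)(x,y)$.

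None of these steps is a real obstacle. The only point needing care is the commutation of $g$ with min and max in the order-invariance step, which uses strict monotonicity and finiteness of the path set, and the induction along paths in the ultrametric step.
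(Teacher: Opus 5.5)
Your proposal is correct and follows essentially the same route as the paper: the identity $B^*(g\circ d)=g\circ B^*(d)$ for order invariance, the path bijection induced by $\phi$ for permutation invariance, and the sandwich $B^*(u)=u$ (one-edge path plus iterated strong triangle inequality) for exactness on ultrametrics. Your preliminary check that $B^*(d)\in\cD(\cX)$, which the paper leaves implicit, is a welcome addition.
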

\begin{proof}
\noindent\emph{1. Order invariance.}
Let \(g:\R_{\ge 0}\to\R_{\ge 0}\) be strictly increasing with \(g(0)=0\).
For every \(d\in\cD(\cX)\) and every \(x,y\in\cX\),
\begin{align*}
B^*(g\circ d)(x,y)
&:=
\min_{\gamma\in\Gamma_{xy}}
\max_{\{u,v\}\in\gamma}
g(d(u,v)) =
\min_{\gamma\in\Gamma_{xy}}
g\!\left(
\max_{\{u,v\}\in\gamma} d(u,v)
\right)\\
&= g\!\left( \min_{\gamma\in\Gamma_{xy}} \max_{\{u,v\}\in\gamma} d(u,v) \right)  = g\bigl(B^*(d)(x,y)\bigr).
\end{align*}
The second and third equalities use the strict monotonicity of \(g\). 
Hence $B^*(g\circ d)=g\circ B^*(d).$
Therefore the transformation \(B^*\) preserves order equivalence of dissimilarities, and consequently preserves order invariance. 

\medskip 
\noindent\emph{2. Exactness on ultrametric: }
Let $u$ be an ultrametric. We prove $B^*(u)=u$ by sandwiching. 
The inequality $B^*(u)\le u$ is immediate from the definition of $B^*$, as the one-edge path from $x$ to $y$ gives
\(B^*(u)(x,y)\le u(x,y).\) 
For the reverse inequality, fix $x,y\in\cX$ and introduce an arbitrary $x$ to $y$ path
$\gamma=((x_0,x_1),\dots,(x_{m-1},x_m))$. Repeatedly applying the strong triangle inequality gives
\(u(x,y)\le \max_{i=1,\dots,m} u(x_{i-1},x_i).\)
Because this holds for any $x$-to-$y$ path~$\gamma$,
\(
u(x,y)\le \min_{\gamma\in\Gamma_{x,y}} \max_{(u,v)\in\gamma} u(u,v)
= B^*(u)(x,y).
\)
Hence $B^*(u)=u$.

\medskip 
\noindent\emph{3. Permutation invariance: }
For a path $\gamma = ((u_1,u_2), (u_2,u_3), \cdots, (u_{m-1},u_m) )$, define the path 
$\phi^{-1}\gamma = ( (\phi^{-1}(u_1),\phi^{-1}(u_2)), (\phi^{-1}(u_2),\phi^{-1}(u_3)), \cdots, (\phi^{-1}(u_{m-1}),\phi^{-1}(u_m) ))$. 
The map \(\gamma\mapsto\phi^{-1}\gamma\) is a bijection from \(\Gamma_{x,y}\) onto \(\Gamma_{\phi^{-1}(x),\phi^{-1}(y)}\). Hence
\[
B^*(d_\phi)(x,y)
\weq B^*(d)(\phi^{-1}(x),\phi^{-1}(y))
\weq B^*(d)_\phi(x,y).
\]
\end{proof}

\begin{lemma}
Single linkage $\Tsl$ satisfies $\axiord, \axisuhr,\axiperm$, and $\axicwc$. 
\end{lemma}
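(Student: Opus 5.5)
We will prove the lemma through the factorization $\Tsl(d)=\Psi_{B^*(d)}$ of Lemma~\ref{lem:sl_mst_characterization}, which reduces each property to a property of the ultrametric $B^*(d)$ or of the map $d\mapsto\Psi_d$ on ultrametrics. Three of the four properties then follow quickly from Lemma~\ref{lem:trfmst_ord_suhr_perm}, and cluster-wise consistency needs a direct argument.

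For order invariance, let $g$ be strictly increasing with $g(0)=0$. Lemma~\ref{lem:trfmst_ord_suhr_perm} gives $B^*(g\circ d)=g\circ B^*(d)$. For an ultrametric $u$, the balls satisfy $B_{g\circ u}(x,g(r))=B_u(x,r)$. Moreover, every ball of $g\circ u$ equals one of the form $B_{g\circ u}(x,g(r))$, namely with $r$ the largest value of $u$ that is at most the radius. Hence $\Psi_{g\circ u}=\Psi_u$, and so $\Tsl(g\circ d)=\Tsl(d)$.

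For exactness on ultrametrics, $B^*(u)=u$ gives $\Tsl(u)=\Psi_u$ at once.

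For permutation invariance, $B^*(d_\phi)=B^*(d)_\phi$. A ball of $u_\phi$ is the image under $\phi$ of the corresponding ball of $u$, so $\Psi_{u_\phi}=\phi\cdot\Psi_u$.

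For cluster-wise consistency, we use the graph characterization in Lemma~\ref{lem:sl_mst_characterization}. Fix a non-singleton proper $C\in\Tsl(d)$; the root and singletons are always present. Then $C$ is a connected component of $G_r(d)$ for some $r$. Take $r$ to be the minimal such level, namely $r=\max_{x,y\in C}B^*(d)(x,y)$. At this level, $C$ is connected in $G_r(d)$ and every $d(x,z)$ with $x\in C$, $z\notin C$ exceeds $r$.

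Now let $d'$ be a $\{C\}$-strengthening of $d$. Intra-$C$ dissimilarities do not increase, so the edges of $G_r(d)$ inside $C$ remain in $G_r(d')$, and $C$ stays connected there. Pairs between $C$ and its complement satisfy $d'(x,z)\ge d(x,z)>r$, so no such edge enters $G_r(d')$. Pairs outside $C$ may change arbitrarily, but they cannot join $C$ to anything. Therefore $C$ is a connected component of $G_r(d')$, and hence $C\in\Tsl(d')$.

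The main care point is the cluster-wise consistency step: we must confirm that "$C$ is a component of $G_r$ at some $r$" is exactly membership in $\Tsl$, in both directions, and handle the boundary convention ($\le r$ versus $<r$) correctly. Partition consistency then follows from Proposition~\ref{thm:relation_consistencies}.
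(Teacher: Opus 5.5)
Your proposal is correct and is essentially the paper's proof. The paper also obtains $\axiord$, $\axisuhr$, and $\axiperm$ from $\Tsl(d)=\Psi_{B^*(d)}$ together with Lemma~\ref{lem:trfmst_ord_suhr_perm}, and it proves cluster-wise consistency by the same connected-component argument in $G_r$. The one difference is that the paper composes with $\Tglob$, which coincides with $u\mapsto\Psi_u$ on ultrametrics and already satisfies these three properties, whereas you verify $\Psi_{g\circ u}=\Psi_u$ and $\Psi_{u_\phi}=\phi\cdot\Psi_u$ directly.

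One small wording slip in the order-invariance step: $r$ should be the largest value $t$ of $u(x,\cdot)$ with $g(t)$ at most the radius, which is clearly what you intend.
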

\begin{proof}
Because $\Tglob$ satisfies $\axiord, \axisuhr$, and $\axiperm$, 
order invariance, exactness on ultrametrics, and permutation invariance follow immediately from Proposition~\ref{prop:relational_stability_simple},  Lemma~\ref{lem:sl_mst_characterization},  and Lemma~\ref{lem:trfmst_ord_suhr_perm}.

\medskip 
\noindent\emph{Cluster-wise consistency: }
Let \(C\in\Tsl(d)\), and let \(d'\) be a \(\{C\}\)-strengthening of \(d\).
By Lemma~\ref{lem:sl_mst_characterization}, there exists \(r\ge0\) such that \(C\) is a connected component of \(G_r(d)\). 

Because \(d'\) is a \(\{C\}\)-strengthening, every edge within \(C\) whose \(d\)-weight is at most \(r\) still has \(d'\)-weight at most \(r\). Thus \(C\) remains connected in \(G_r(d')\). Conversely, every edge between \(C\) and \(\bC\) has \(d\)-weight greater than \(r\), and its weight can only increase under the strengthening. Hence no edge of \(G_r(d')\) joins \(C\) to \(\bC\).

Therefore \(C\) is also a connected component of \(G_r(d')\), and hence $C\in\Tsl(d').$ 
Thus \(\Tsl\) is cluster-wise consistent.
\end{proof}

\subsection{Other Linkage Methods}
\label{app:linkage}

\subsubsection{Definitions of the Linkage Methods Considered}
We briefly recall the non-binary variants of the standard linkage rules considered in Section~\ref{subsec:linkage}. Recall that linkage methods proceed by iteratively merging clusters (starting with singletons). At a given step of the algorithm, let \(\cC_{\mathrm{active}}\) denote the current partition into active clusters, and let $D:\cC_{\mathrm{active}}\times\cC_{\mathrm{active}}\to\R_{\ge0}$ denote the linkage dissimilarity between active clusters, as determined by the chosen linkage rule. The next merge occurs at the minimum inter-cluster dissimilarity 
\[
r := \min_{\substack{C_1,C_2\in\cC_{\mathrm{active}}\\ C_1\neq C_2}} D(C_1,C_2).
\]
However, several pairs of active clusters may attain this minimum simultaneously. To merge all clusters involved in such ties simultaneously, form the graph whose vertices are the active clusters \(C\in\cC_{\mathrm{active}}\), with an edge between \(C_1\) and \(C_2\) whenever $D(C_1,C_2)=r.$
Each nontrivial connected component of this graph is then merged into a single new cluster, obtained as the union of its vertices. In particular, if \(C_1\) is tied with \(C_2\) and \(C_2\) is tied with \(C_3\), then all three are merged together even if \(D(C_1,C_3)>r\). Each cluster created in this way is added to the output hierarchy, and the procedure is repeated with the resulting collection of active clusters.

The linkage dissimilarities considered here admit the Lance-Williams update~\citep{murtagh2017algorithms}
\begin{align*}
D(C_{11}\cup C_{12},C_2)
& =
\alpha_1D(C_{11},C_2)
+\alpha_2D(C_{12},C_2)
+\beta D(C_{11},C_{12}) 
+\gamma
\left|
D(C_{11},C_2)-D(C_{12},C_2)
\right|,
\end{align*}
where the coefficients are given in Table~\ref{tab:linkage}. We take \(D(\{x\},\{y\})=d(x,y)\).
\begin{table}[ht]
\centering
\footnotesize
\renewcommand{\arraystretch}{1.5}
\begin{tabular}{@{} l p{0.28\linewidth} l @{}}
\toprule
\emph{Method} & $\boldsymbol{D(C_1,C_2)}$ & $\boldsymbol{(\axi_1,\axi_2,\beta,\gamma)}$ \\
\midrule
Single
& $\min\limits_{x\in C_1,\,y\in C_2} d(x,y)$
& $\left(\tfrac12,\,\tfrac12,\,0,\,-\tfrac12\right)$ \\

Complete
& $\max\limits_{x\in C_1,\,y\in C_2} d(x,y)$
& $\left(\tfrac12,\,\tfrac12,\,0,\,\tfrac12\right)$ \\

Unweighted average (UPGMA)
& $\frac{1}{|C_1|\,|C_2|}\sum\limits_{x\in C_1,\,y\in C_2} d(x,y)$
& $\left(\frac{|C_{11}|}{|C_1|},\,\frac{|C_{12}|}{|C_1|},\,0,\,0\right)$ \\

Weighted average (WPGMA)
& $\frac{D(C_{11},C_2)+D(C_{12},C_2)}{2}$
& $\left(\tfrac12,\,\tfrac12,\,0,\,0\right)$ \\

Ward's
& $\frac{|C_1|\,|C_2|}{|C_1|+|C_2|}\,\left\|m_{C_1}-m_{C_2}\right\|^2$
& $\left(\frac{|C_{11}|+|C_2|}{|C_1|+|C_2|},\,\frac{|C_{12}|+|C_2|}{|C_1|+|C_2|},\,-\frac{|C_2|}{|C_1|+|C_2|},\,0\right)$ \\

Centroid (UPGMC)
& $\left\|m_{C_1}-m_{C_2}\right\|^2$
& $\left(\frac{|C_{11}|}{|C_1|},\,\frac{|C_{12}|}{|C_1|},\,-\frac{|C_{11}||C_{12}|}{|C_1|^2},\,0\right)$ \\

Weighted centroid (WPGMC / Median)
& $\left\|\frac{m_{C_{11}}+m_{C_{12}}}{2}-m_{C_2}\right\|^2$
& $\left(\tfrac12,\,\tfrac12,\,-\tfrac14,\,0\right)$ \\
\bottomrule
\end{tabular}
\caption{Classical hierarchical clustering linkages and their Lance--Williams coefficients. $m_C$ denotes the centroid of cluster $C$.
}
\label{tab:linkage}
\end{table}

Ward and Centroid linkages are typically defined for squared Euclidean dissimilarities, as the updates can become negative on arbitrary dissimilarities. We implicitly restrict to squared Euclidean dissimilarities when working with these linkages. 
Moreover, for linkage rules such as WPGMA and median linkage, the updated dissimilarity after a multiway merge may depend on the order in which clusters in a tied component are merged. To obtain a well-defined hierarchical clustering method, we therefore regard a fixed deterministic tie-breaking convention as part of the linkage rule. This choice will not affect the counterexamples below, since all comparisons determining the relevant merges are strict.

\subsubsection{Non-admissibility of the Other Linkage Methods}
\label{app:other_linkages_not_admissible}

We demonstrate the non-admissibility of complete, average, Ward, UPGMC, and WPGMC linkage methods by providing counterexamples that show they do not satisfy consistency. 
\begin{lemma}
Complete, unweighted and weighted average, Ward, centroid, and median linkages do not satisfy partition consistency. Thus, none of them is admissible, regardless of the tie-breaking rule.
\label{lem:other_linkages_not_admissible}
\end{lemma}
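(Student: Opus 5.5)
The plan is to refute partition consistency for each linkage rule by an explicit counterexample. Each counterexample is a pair $(d,d')$ together with a partition $\cC$ such that $\cC\subseteq T(d)$, $d'$ is a $\cC$-strengthening of $d$, and $\cC\not\subseteq T(d')$. Partition consistency is one of the axioms of Definition~\ref{def:admissible}, so each counterexample also shows non-admissibility. I will design every instance so that all comparisons deciding a merge are strict for both $d$ and $d'$. Then no ties arise, the non-binary merging rule coincides with the ordinary binary one, and the conclusion holds regardless of the tie-breaking rule. Since $n\ge 4$ is arbitrary, I will build a gadget on four or five points. Any remaining points are placed at a huge uniform distance $K$ from everything and from each other, and are kept as singleton blocks of $\cC$. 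They merge only at the very end and do not interfere with the gadget.

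The mechanism I intend to exploit is the following. For every rule other than single linkage, the inter-cluster dissimilarity $D$ depends on the \emph{order} in which points were merged, through the Lance--Williams update, and not only on the minimum pairwise value. I take a block $B=\{b_1,b_2,b_3\}$ and an outside point $a$ (another block, or part of one). Under $d$, the pair $\{b_1,b_2\}$ merges first. With that pair, $D(\{b_1,b_2\},b_3)$ is smaller than every linkage value between $b_3$ and the outside, so $B$ appears as a cluster. The strengthening $d'$ then \emph{decreases} $d(b_1,b_3)$ (and possibly $d(b_2,b_3)$) just enough that $\{b_1,b_3\}$ merges first instead. The subsequent linkage $D(\{b_1,b_3\},b_2)$ becomes large: for complete linkage it is $\max(d(b_1,b_2),d(b_2,b_3))$, and for the average, centroid, Ward and median rules it is a weighted combination. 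I choose the values so that this linkage exceeds the (unchanged or slightly increased) linkage between $b_2$ and an outside point $a$. Then $b_2$ joins $a$ before $B$ closes up, and $B\notin T(d')$. Equivalently, I can make the outside point attach to a sub-pair of $B$.

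For complete linkage and UPGMA/WPGMA, I will give explicit numerical matrices and verify the merge sequences by hand. For Ward, centroid (UPGMC) and median (WPGMC), the input must be squared Euclidean, so I will realize the gadget as points on a line, or in $\R^2$ if needed. The strengthening is then obtained by \emph{moving} points: I move $b_3$ toward $b_1$ while keeping it away from $a$, and I check that all intra-block squared distances weakly decrease and all cross-block ones weakly increase. I will again verify the merge order using the centroid formulas in Table~\ref{tab:linkage}.

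The main obstacle is the simultaneous constraint in the Euclidean cases. Moving points to shrink within-block distances must not shrink any cross-block distance. At the same time, it must change which sub-pair forms first, and it must push the resulting centroid far enough from $b_2$ that an outside point wins. I will first try a one-dimensional configuration with $b_1<b_3<b_2$ and $a$ beyond $b_2$, moving $b_3$ leftward toward $b_1$. This changes the centroid-based linkage to $b_2$ in the right direction while increasing all distances from $b_3$ to the right-hand side. If collinearity forces a cross distance to decrease, I will lift $b_3$ orthogonally in $\R^2$ instead.
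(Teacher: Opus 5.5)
Your mechanism is the same as the paper's. In its proof, $\{2,3\}$, $1$ and $4$ play the roles of your $\{b_1,b_2\}$, $b_3$ and $a$. The strengthening lowers only the intra-block entry $d(1,2)$, from $6$ to $3$. The one cross entry $d(3,4)=q$ is tuned per rule and placed just above the first-merge value $d(2,3)=4$, while $d(2,4)$ stays large.

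However, the lemma can only be proved by exhibiting witnesses, and you give none. Worse, the one concrete construction you describe for Ward, centroid and median fails on two counts.
\begin{itemize}
\item With $b_1<b_3<b_2$ on a line, $d(b_1,b_2)$ is the largest of the three intra-block distances, so $\{b_1,b_2\}$ can never merge first under $d$. On a line the first merged pair inside $B$ must contain the middle point. That pair is $\{b_1,b_2\}$ under $d$ and $\{b_1,b_3\}$ under $d'$, so the middle point has to be $b_1$.
\item Moving $b_3$ leftward increases $d(b_3,b_2)$, and $b_2\in B$. So the moved configuration is not a $\cC$-strengthening at all: ``increasing all distances from $b_3$ to the right-hand side'' includes an intra-block pair.
\end{itemize}
Your $\R^2$ fallback is aimed at cross distances, which are not the obstruction. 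As written, the three Euclidean cases are unsupported.

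There are two ways to repair this. You can proceed as the paper does: write $d$ and $d'$ as matrices that differ in a single intra-block entry, so the strengthening conditions are immediate, and certify that both are squared Euclidean by checking that their Gram matrices are positive semidefinite. Alternatively, use the collinear order $b_3<b_1<b_2<a$: shrink the gap between $b_3$ and $b_1$, and enlarge the gap between $b_2$ and $a$ by more.

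For instance, take positions $(b_3,b_1,b_2,a)=(-1.05,0,1,2.1)$ for $d$ and $(-0.99,0,1,2.2)$ for $d'$, with squared distances as dissimilarities. This gives a $\cC$-strengthening for $\cC=\{B,\{a\}\}$. The unique first merges are $\{b_1,b_2\}$ (value $1$) under $d$ and $\{b_1,b_3\}$ (value $0.9801$) under $d'$. For all six rules one checks that under $d$
\[
D(\{b_1,b_2\},b_3)<\min\{D(\{b_1,b_2\},a),d(b_3,a)\},
\]
while under $d'$
\[
d'(b_2,a)=1.44<\min\{D'(\{b_1,b_3\},b_2),D'(\{b_1,b_3\},a)\}.
\]
The decisive comparisons are:
\begin{itemize}
\item complete: $4.2025<4.41$ and $1.44<3.9601$;
\item UPGMA/WPGMA: $2.6525<2.81$ and $1.44<2.48$;
\item centroid/median: $2.4025<2.56$ and $1.44<2.235$;
\item Ward: $3.20<3.41$ and $1.44<2.98$.
\end{itemize}

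Two smaller points. First, under $d$ you must also beat $D(\{b_1,b_2\},a)$, not only the linkages from $b_3$ to the outside. Second, for a collinear or generic planar gadget, the extra points cannot sit at a uniform squared distance $K$ from all four gadget points. No point of a Euclidean space is equidistant from three distinct collinear points, nor from four coplanar points that are not concyclic. So drop uniformity and place the extra points far away in extra dimensions, beyond the left end of the gadget. Their entries are then huge and do not decrease when $b_3$ and $a$ move right.
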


\begin{proof}
Fix distinct points \(1,2,3,4\in\cX\), let
\(Y:=\cX\setminus\{1,2,3,4\}\), \(A:=\{1,2,3\}\), and
\(
\cC:=\{A,\{4\}\}\cup\bigl\{\{y\}:y\in Y\bigr\}.
\)
Define \(d,d'\in\cD(\cX)\) by
\[
d(1,2)=d(1,3)=6,\qquad
d(1,4)=d(2,4)=8,\qquad
d(2,3)=4,\qquad d(3,4)=q, \text{ and}
\]
\[
d'(1,2)=3,\qquad d'(i,j)=d(i,j)
\quad\text{for all other pairs}.
\]
Whenever at least one of \(x,y\) belongs to \(Y\), set
\(d(x,y)=d'(x,y)=100\). Thus \(d'\) is a
\(\cC\)-strengthening of \(d\).

Choose \(q\) as in table below. For every value of \(q\) in the table, both \(d\) and \(d'\) are squared Euclidean dissimilarities. The relevant linkage values are also given in the table. 
\begin{table}[!ht]
\centering
\begin{tabular}{@{}lccccc@{}}
\toprule
\emph{Linkage} & $q$ & $D(\{2,3\},\{1\})$ & $D(\{2,3\},\{4\})$ & $D'(\{1,2\},\{3\})$ & $D'(\{1,2\},\{4\})$ \\
\midrule
\text{Complete} &5&6&8&6&8\\
\text{UPGMA/WPGMA} &$\frac92$&6&$\frac{25}{4}$&5&8\\
\text{Centroid/Median} &$\frac{41}{10}$&5&$\frac{101}{20}$& $\frac{17}{4}$ & $\frac{29}{4}$ \\
\text{Ward} &$\frac{41}{10}$&$\frac{20}{3}$&$\frac{101}{15}$ & $\frac{17}{3}$ & $\frac{29}{3}$ \\
\bottomrule
\end{tabular}
\label{tab:linkage_counterexamples}
\end{table}

Under \(d\), every rule first merges \(\{2,3\}\), since
\(d(2,3)=4\) is the unique smallest initial dissimilarity. The table
then gives
\(
D(\{2,3\},\{1\})
<
\min\bigl\{D(\{2,3\},\{4\}),d(1,4)\bigr\},
\)
while all dissimilarities involving \(Y\) are larger. Hence the second
merge forms \(A\), so \(A\in T(d)\) and therefore \(\cC\subseteq T(d)\).

Under \(d'\), the unique first merge is \(\{1,2\}\). The last two
columns show that
\(
q<
\min\bigl\{
D'(\{1,2\},\{3\}),
D'(\{1,2\},\{4\})
\bigr\},
\)
so the second merge is \(\{3,4\}\). Consequently, every subsequent
cluster containing \(3\) also contains \(4\), and hence
\(A\notin T(d')\).

Thus \(A\in T(d)\setminus T(d')\), although \(d'\) is a
\(\cC\)-strengthening of \(d\) and \(\cC\subseteq T(d)\). Therefore
each of the six linkage rules violates partition consistency. All
comparisons are strict, so the conclusion is independent of the
tie-breaking convention.
\end{proof}

\subsection{Separation Methods: Proof of Proposition~\ref{prop:admissible_separation-based}}
\label{app:sep_base}

Lemma~\ref{lem:sep_methods_are_hierarchies} proves that $\Tglob^{\bdeta}$ and $\Tloc^{\bdeta}$ are hierarchical clustering methods satisfying scale invariance, cluster-wise consistency, and permutation invariance, and Lemma~\ref{lem:separation_methods_rich} establishes strict hierarchical richness. 
Because cluster-wise consistency implies partition consistency and strict hierarchical richness implies partition richness, we conclude that \(\Tglob^{\bdeta}\) and \(\Tloc^{\bdeta}\) are admissible for every separation margin sequence \(\bdeta\). 
Lemma~\ref{lem:Tloc_family} shows that distinct margin sequences define distinct methods, completing the proof of Proposition~\ref{prop:admissible_separation-based}. 
We also record the stronger properties of exactness on ultrametrics and order invariance for the unit-margin methods in Lemmas~\ref{lem:separation_methods_exact_ultrametrics} and~\ref{lem:Tloc_order_invariant}.

\subsubsection{Basic properties}

\begin{lemma}
\label{lem:sep_methods_are_hierarchies}
For every separation margin sequence $\bdeta$, $ \Tglob^{\bdeta}$ and $ \Tloc^{\bdeta} $ are hierarchical clustering methods satisfying \(\axisca\), \(\axicwc\), and \(\axiperm\).
\end{lemma}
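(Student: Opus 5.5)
The plan is to verify the four claims (hierarchy-valuedness, scale invariance, cluster-wise consistency, permutation invariance) directly from the definitions of $\varrho$ and $\tau$. Since $\tau(d,C)\le\varrho(d,C)$ for every $C$ (take $x_1=x_2$), every $\bdeta$-globally separated cluster is $\bdeta$-locally separated. I will therefore prove laminarity only for $\Tloc^{\bdeta}(d)$; it then transfers to the subfamily $\Tglob^{\bdeta}(d)$. The root and singletons are included by construction.

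\emph{Laminarity.} The root and the singletons are laminar with every set, so only two nontrivial proper clusters $C_1,C_2$ with $\tau(d,C_i)<\eta_{|C_i|-1}\le 1$ need to be compared. Suppose they overlap without nesting. Then we can pick $x\in C_1\cap C_2$, $y\in C_1\setminus C_2$ and $z\in C_2\setminus C_1$.
\begin{itemize}
\item Applying $\tau(d,C_1)<1$ to $x,y\in C_1$ and $z\notin C_1$ gives $d(x,y)<d(x,z)$.
\item Applying $\tau(d,C_2)<1$ to $x,z\in C_2$ and $y\notin C_2$ gives $d(x,z)<d(x,y)$.
\end{itemize}
These two inequalities contradict each other. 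Hence $\Tloc^{\bdeta}(d)$, and a fortiori $\Tglob^{\bdeta}(d)$, is a hierarchy. This is the only step with real content. The key point is that the constraint $\eta_s\le 1$ is exactly what makes the two strict inequalities incompatible.

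\emph{Scale and permutation invariance.} For $\beta>0$, every ratio defining $\varrho$ and $\tau$ is unchanged when $d$ is replaced by $\beta d$. Therefore $\varrho(\beta d,C)=\varrho(d,C)$ and $\tau(\beta d,C)=\tau(d,C)$, which gives $T(\beta d)=T(d)$.

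For a permutation $\phi$, the substitution $x\mapsto\phi^{-1}(x)$ is a bijection between the index sets of the two maxima, and it sends the complement of $\phi(C)$ onto the complement of $C$. This yields $\varrho(d_\phi,\phi(C))=\varrho(d,C)$, and likewise for $\tau$. Since $|\phi(C)|=|C|$, the threshold $\eta_{|C|-1}$ is the same, so $C\in T(d)$ if and only if $\phi(C)\in T(d_\phi)$. The root and singletons are mapped to the root and singletons.

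\emph{Cluster-wise consistency.} Let $C\in T(d)$ and let $d'$ be a $\{C\}$-strengthening of $d$. If $C$ is $\cX$ or a singleton, it belongs to every hierarchy and there is nothing to prove. Otherwise, each ratio in $\varrho(d',C)$ or $\tau(d',C)$ has the following form:
\begin{itemize}
\item its numerator is a dissimilarity between two points of $C$, which can only decrease;
\item its denominator is a dissimilarity between a point of $C$ and a point of $\bC$, which can only increase.
\end{itemize}
Hence every ratio weakly decreases, so $\varrho(d',C)\le\varrho(d,C)<\eta_{|C|-1}$ and $\tau(d',C)\le\tau(d,C)<\eta_{|C|-1}$. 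Thus $C\in T(d')$, and cluster-wise consistency holds for both methods.
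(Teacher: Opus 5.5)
Your proof is correct. For scale invariance, cluster-wise consistency, and permutation invariance it uses the same ratio-monotonicity and relabeling arguments as the paper, and you also spell out the $\varrho$ case that the paper leaves as ``analogous.''

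The one genuine difference is hierarchy-valuedness. The paper does not prove laminarity directly. It cites the known fact that the Apresjan family $\Tloc(d)$ is laminar, and then observes $\Tglob^{\bdeta}(d)\subseteq\Tloc^{\bdeta}(d)\subseteq\Tloc(d)$, where the second inclusion uses $\eta_s\le 1$. Both outputs are therefore subfamilies of a laminar family.

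Your three-point argument, with $x\in C_1\cap C_2$, $y\in C_1\setminus C_2$, $z\in C_2\setminus C_1$, is essentially a proof of that cited fact. It makes the lemma self-contained and shows exactly where the bound $\eta_s\le 1$ enters. The paper's reduction is shorter and records a structural fact reused later: every $\Tloc^{\bdeta}(d)$ and $\Tglob^{\bdeta}(d)$ sits inside the single hierarchy $\Tloc(d)$. This is what guarantees, for instance, that $\Tglob(d)\cup\Tloc^{\bdeta}(d)$ is again a hierarchy.
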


\begin{proof}
$\Tloc$ is a known hierarchical clustering method, and for any dissimilarity~$d$, $\Tloc(d)$ satisfies all the conditions in Definition~\ref{def:hierarchy}~\citep{dreveton2025hierarchical,balcan2008discriminative}. 
Let \(\bdeta\) be a separation margin sequence. By definition of the corresponding separation conditions, for every dissimilarity \(d\), we have $\Tglob^{\bdeta}(d) \subseteq \Tloc^{\bdeta}(d) \subseteq \Tloc.$ 
Because \(\Tloc(d)\) is laminar, each of its subfamilies is also laminar. Hence the outputs of \(\Tglob^{\bdeta}\) and \(\Tloc^{\bdeta}\) are laminar as well. Moreover, by construction, each of these outputs contains the root \(\cX\) and all leaves \(\{x\}\), \(x \in \cX\). Therefore, \(\Tglob^{\bdeta}\) and \(\Tloc^{\bdeta}\) are hierarchical clustering methods.

Next, we show that $\Tloc^{\bdeta}$ satisfies each property. The proofs for $\Tglob^{\bdeta}$ are analogous and hence omitted.

\medskip
\noindent\emph{1. $\Tloc^{\bdeta}$ satisfies $\axisca$.} 
Let $\beta>0$. For every $x,y,z \in \cX$ we have $\frac{\beta \cdot d(x,y)}{\beta \cdot d(x,z)} = \frac{d(x,y)}{d(x,z)}$.
Hence $C\in\Tloc^{\bdeta}(\beta d)$ iff $C\in\Tloc^{\bdeta}(d),$ and therefore \(\Tloc^{\bdeta}(\beta d)=\Tloc^{\bdeta}(d)\).

\medskip
\noindent\emph{2. $\Tloc^{\bdeta}$ satisfies $\axicwc$.}  
Let $d$ be a dissimilarity function, and take $C \in \Tloc^{\bdeta}(d)$. 
Because the root and singleton clusters belong to $\Tloc^{\bdeta}(d)$ for every dissimilarity, it suffices to consider a nontrivial proper cluster \(C\). 
Consider a dissimilarity $d'$ such that $d'$ is a $\{C\}$-strengthening of $d$. 
Then, for every $x, y \in C$, $z\in \cX\backslash C$, $d'(x,y) \le d(x,y)$ and $d'(x,z) \ge d(x,z)$ hold.
Thus 
\[
 \tau(d',C) 
 \weq \max_{x,y\in C,z\in \bC}\frac{ d'(x,y)}{d'(x,z)} 
 \wle  \max_{x,y\in C,z\in \bC}\frac{ d(x,y)}{d(x,z)} 
 \weq \tau(d,C)
 \ < \ \eta_{|C|-1}, 
\]
establishing that $C \in \Tloc^{\bdeta}(d')$.

\medskip 
\noindent\emph{3. $\Tloc^{\bdeta}$ satisfies $\axiperm$.} 
Let $\phi \in \Pi(\cX)$ be a permutation, and recall that $d_{\phi}(x,y) := d(\phi^{-1}(x),\phi^{-1}(y))$ (Definition~\ref{def:admissible}). 
Hence, $d_\phi(\phi(x),\phi(y)) =d(x,y)$ for all $x,y \in \cX$, and  
\begin{align*}
 \tau(d,C) 
 := \max_{x,y\in C,z\in \bC}\frac{d(x,y)}{d(x,z)} 
 & \weq \max_{x,y\in C,z\in \bC}\frac{d_{\phi}(\phi(x),\phi(y))}{d_{\phi}(\phi(x),\phi(z))} 
 \weq \max_{\substack{x,y\in \phi(C), \\ z \in \overline{ \phi(C)}}}\frac{d_{\phi}(x,y)}{d_{\phi}(x,z)} 
 \weq \tau(d_\phi,\phi(C)),
\end{align*} 
where we used the fact that $\phi$ is a bijection and $\phi(\bC) = \overline{\phi(C)}$. Moreover, \(|\phi(C)|=|C|\). 
Hence $\tau(d,C)<\eta_{|C|-1}$ iff $\tau(d_\phi,\phi(C))<\eta_{|\phi(C)|-1}.$ 
Therefore: $C\in\Tloc^{\bdeta}(d)$ iff $\phi(C)\in\Tloc^{\bdeta}(d_\phi).$ 
Thus $\Tloc^{\bdeta}(d_\phi)=\phi\cdot\Tloc^{\bdeta}(d).$ 
\end{proof}

\begin{lemma}
\label{lem:Tloc_family}
Let $\bdeta \ne \bdeta'$ be two distinct separation margin sequences. Then $\Tglob^{\bdeta} \neq \Tglob^{\bdeta'}$ and $\Tloc^{\bdeta} \neq \Tloc^{\bdeta'}$. 
\end{lemma}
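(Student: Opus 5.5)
Since $\bdeta\neq\bdeta'$, there is an index $s\in\{1,\dots,n-2\}$ with $\eta_s\neq\eta'_s$. Without loss of generality $\eta'_s<\eta_s$. The plan is to exhibit one dissimilarity $d$ and one cluster $C$ with $|C|=s+1$ lying in $\Tloc^{\bdeta}(d)$ but not in $\Tloc^{\bdeta'}(d)$, and likewise for the global methods. Since $\tau(d,C)\le\varrho(d,C)$, the cleanest route is to make the two separabilities coincide: choose $d$ so that $\tau(d,C)=\varrho(d,C)=r$ for a prescribed $r\in[\eta'_s,\eta_s)$. Then $r\ge\eta'_{|C|-1}$ excludes $C$ from both $\bdeta'$-methods, while $r<\eta_{|C|-1}$ puts $C$ into both $\bdeta$-methods. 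The same $d$ therefore separates both pairs simultaneously.

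For the construction, fix any $C\subsetneq\cX$ with $|C|=s+1$. Because $s\le n-2$, we have $2\le|C|\le n-1$, so $C$ is a nontrivial proper subset and $\bC\neq\emptyset$. Define $d(x,y)=r$ for distinct $x,y\in C$, $d(x,z)=1$ for $x\in C$ and $z\notin C$, and $d(z,z')=1$ for distinct $z,z'\notin C$. With $r>0$, $d\in\cD(\cX)$.

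All within-$C$ ratios over cross pairs then equal $r/1$. This gives $\varrho(d,C)=\max d(x_1,y)/d(x_2,z)=r$, and likewise $\tau(d,C)=r$; here $x_1=y$ contributes $0$, and since $|C|\ge2$ the maximum $r$ is attained. Taking $r=\eta'_s$ gives $r\in(0,1]$, and $r=\eta'_s<\eta_s$ as needed.

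It remains only to compare memberships. $C\in\Tglob^{\bdeta}(d)$ because $\varrho(d,C)=\eta'_s<\eta_s=\eta_{|C|-1}$. $C\notin\Tglob^{\bdeta'}(d)$ because the inequality defining membership is strict and fails at equality. The same holds with $\tau$ for the local methods. Hence the two methods differ at $d$, which proves Lemma~\ref{lem:Tloc_family}.

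There is no real obstacle. The only points needing care are the range constraint $1\le s\le n-2$, which guarantees both that $C$ is proper and that it is nontrivial, and the use of the strict inequality at the boundary value $r=\eta'_s$.
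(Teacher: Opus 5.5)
Your proof is correct and is essentially the paper's argument. You use the same two-level dissimilarity on a set $C$ with $|C|=s+1$, which gives $\varrho(d,C)=\tau(d,C)=r$, so a single value of $r$ separates both the global and the local pair at once. The differences are cosmetic: you place $r$ at the boundary value $\eta'_s$ and rely on the strictness of the membership inequality, whereas the paper takes the midpoint $r=(\eta_s+\eta'_s)/2$ strictly between the two margins. The paper also sets the dissimilarities within $\cX\setminus C$ to $r$ rather than $1$, which affects neither separability.
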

\begin{proof} 
Because \(\bdeta\neq\bdeta'\), there exists \(s\in[n-2]\) such that \(\eta_s\neq\eta_s'\). 
Without loss of generality, assume $\eta_s<\eta_s'.$ Choose \(C\subsetneq\cX\) with \(|C|=s+1\), set  $r:=\frac{\eta_s+\eta_s'}{2},$ and define
\[
d(x,y) = r \text{ for distinct } x,y \in C \text{ or } x,y\in\bC, \qquad \text{and} \qquad d(x,y)=1 \text{ for other distinct } x,y
\]
Observe that $\varrho(d,C) = \tau(d,C) = r$. Therefore, the set~$C$ belongs to $\Tloc^{\bdeta'}(d)$ and to $\Tglob^{\bdeta'}(d)$ but belongs neither to $\Tloc^{\bdeta}(d)$ nor to $\Tglob^{\bdeta}(d)$, proving $\Tloc^{\bdeta}(d) \neq \Tloc^{\bdeta'}(d)$ and $\Tglob^{\bdeta}(d) \neq \Tglob^{\bdeta'}(d)$. 
\end{proof}

\subsubsection{Specific properties for \texorpdfstring{$\bdeta = \mathbf{1}$}{eta=1}}

\begin{lemma}
 \label{lem:separation_methods_exact_ultrametrics}
Let \(\bdeta\) be a separation margin sequence. These statements are equivalent:\\
(i) \(\bdeta=\mathbf{1}\); (ii) \(\Tloc^{\bdeta}\) satisfies \(\axisuhr\); (iii) \(\Tglob^{\bdeta}\) satisfies \(\axisuhr\).
\end{lemma}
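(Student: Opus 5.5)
The plan is to prove (i)$\Rightarrow$(ii),(iii) and then the two converse implications together by one counterexample. The tool throughout is the chain $\Tglob^{\bdeta}(u)\subseteq\Tloc^{\bdeta}(u)$, which holds because $\tau\le\varrho$ (Example~\ref{ex:refinement_adms_in_3}).

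\emph{(i) implies (ii) and (iii).} Let $\bdeta=\mathbf{1}$ and let $u$ be an ultrametric. By the chain above, it suffices to prove $\Psi_u\subseteq\Tglob(u)$ and $\Tloc(u)\subseteq\Psi_u$. Both sets contain the root and the singletons, so only nontrivial proper clusters $C$ need to be checked.

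For the first inclusion, take $C=B_u(x,r)$ nontrivial and proper. Replacing $r$ by $\theta_u(C)=\max_{a,b\in C}u(a,b)$ leaves the ball unchanged, so we may assume $r=\theta_u(C)$. Then every numerator $u(x_1,y)$ with $x_1,y\in C$ is at most $r$. For $z\notin C$ we have $u(x,z)>r$. The isosceles property of ultrametrics (the strong triangle inequality applied both ways) then gives $u(x_2,z)=u(x,z)>r$ for every $x_2\in C$. Hence $\varrho(u,C)<1$, so $C\in\Tglob(u)$.

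For the second inclusion, suppose $\tau(u,C)<1$. Fix $x\in C$ and set $r:=\max_{y\in C}u(x,y)$. Then $C\subseteq B_u(x,r)$. For every $z\notin C$, the condition $\tau<1$ gives $u(x,z)>u(x,y)$ for all $y\in C$, so $u(x,z)>r$. Therefore $C=B_u(x,r)\in\Psi_u$.

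\emph{(ii) or (iii) implies (i).} Argue by contraposition. Suppose $\eta_s<1$ for some $s\in[n-2]$. Pick $C\subsetneq\cX$ with $|C|=s+1$; this is possible because $s+1\le n-1$. Pick $r\in[\eta_s,1)$. Define $u(x,y)=r$ for distinct $x,y\in C$ and $u(x,y)=1$ for all other distinct pairs.

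This two-level dissimilarity is an ultrametric: in any triple, the largest value is attained at least twice unless all three points lie in $C$. Its hierarchy $\Psi_u$ contains $C=B_u(x,r)$ for any $x\in C$. On the other hand, $\tau(u,C)=\varrho(u,C)=r\ge\eta_{|C|-1}$, so $C\notin\Tloc^{\bdeta}(u)\supseteq\Tglob^{\bdeta}(u)$. Thus neither $\Tloc^{\bdeta}$ nor $\Tglob^{\bdeta}$ is exact on ultrametrics.

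The only delicate step is the inclusion $\Psi_u\subseteq\Tglob(u)$. It needs the isosceles property to control every cross pair $(x_2,z)$, not just those through the ball's center, and it needs the reduction of the radius to the diameter of $C$.
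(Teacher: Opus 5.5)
Your proof is correct and follows the paper's argument essentially step for step: for $\bdeta=\mathbf{1}$ you use the sandwich $\Psi_u\subseteq\Tglob(u)\subseteq\Tloc(u)\subseteq\Psi_u$, and for the converse you use the two-level ultrametric on a set of size $s+1$. Two of your steps go slightly further than the paper and are both sound: you explicitly invoke the isosceles property to bound every cross pair, which the paper leaves implicit in its appeal to the ball representation, and you note that $r=\eta_s$ already suffices because the separation condition is strict.
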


\begin{proof}
We first show that \textup{(i)} implies both \textup{(ii)} and
\textup{(iii)}. Assume that \(\bdeta=\mathbf{1}\), so that $\Tloc^{\bdeta}=\Tloc$ and $\Tglob^{\bdeta}=\Tglob$. 
Fix an ultrametric \(u\in\cU(\cX)\), and let \(\Psi_u\) denote its associated hierarchy. We prove that $\Psi_u \subseteq \Tglob(u)$ and $\Tloc(u) \subseteq \Psi_u$. 
Combined with $\Tglob(u) \subseteq \Tloc(u)$, this establishes $\Psi_u = \Tglob(u) = \Tloc(u)$

We first prove \(\Psi_u\subseteq\Tglob(u)\). 
The root and singleton clusters belong to \(\Tglob(u)\) by definition, so let \(C\in\Psi_u\) be a non-root and non-singleton cluster. 
By the ultrametric-ball representation of \(\Psi_u\) (see Equation~\eqref{eq:ultrametric_hierarchy}), there exists $r\ge0$ such that $u(x,y)\le r<u(x,z)$ for all $x,y\in C,\ z\notin C.$ 
Hence, for all \(x_1,x_2,y\in C\) and \(z\notin C\), we have $\frac{u(x_1,y)}{u(x_2,z)}<1.$ 
Therefore $\varrho(u,C)<1,$ and thus \(C\in\Tglob(u)\).
This establishes $\Psi_u \subseteq \Tglob(u)$

It remains to prove \(\Tloc(u)\subseteq\Psi_u\). 
Again, the root and singleton clusters are immediate, so let \(C\in\Tloc(u)\) be a non-root and non-singleton cluster. 
Because \(\tau(u,C)<1\), we have $u(x,y)<u(x,z)$ for all $x,y\in C,\ z\notin C.$ 
Fix \(x_0\in C\), and set $R_{\mathrm{in}} := \max_{y\in C}u(x_0,y)$ and $R_{\mathrm{out}} := \min_{z\notin C}u(x_0,z).$ 
Because \(\cX\) is finite, these extrema are attained, and the preceding strict inequalities imply $R_{\mathrm{in}}<R_{\mathrm{out}}.$ 
Therefore, $B_u(x_0,R_{\mathrm{in}})=C.$ 
By the ultrametric-ball representation of \(\Psi_u\), this gives \(C\in\Psi_u\). 
This establishes $\Tloc(u) \subseteq \Psi_u$. 

Hence, $\Psi_u=\Tglob(u)=\Tloc(u)$ for every ultrametric \(u\), proving (i) implies (ii) and (iii).

We now prove the converses by contraposition. 
Suppose that \(\bdeta\neq\mathbf{1}\). 
Then there exists \(s\in[n-2]\) such that \(\eta_s<1\). 
Choose a subset \(C\subsetneq\cX\) with $|C|=s+1$ and choose \(r\) such that $\eta_s<r<1.$ 
Define a dissimilarity \(u\) by 
\[u(x,y) = r  \text{ for distinct } x,y\in C, \qquad \text{and} \qquad u(x,y) = 1 \text{ for other distinct} x,y,\]
and observe that $u$ is an ultrametric. 
Moreover, for every \(x\in C\), $B_u(x,r)=C,$ so \(C\in\Psi_u\). 
However, $\tau(u,C)=\varrho(u,C)=r>\eta_s=\eta_{|C|-1}.$
Hence $C\notin\Tloc^{\bdeta}(u)$ and $C\notin\Tglob^{\bdeta}(u).$ 
Thus neither \(\Tloc^{\bdeta}(u)\) nor \(\Tglob^{\bdeta}(u)\) equals \(\Psi_u\). Therefore neither method is exact on ultrametrics.
This proves (ii) implies (i) and (iii) implies (i). 
\end{proof}

Similarly, we obtain the strict hierarchical richness of $\Tglob^{\bdeta}$ and $\Tloc^{\bdeta}$. 

\begin{lemma}
\label{lem:separation_methods_rich}
$\Tglob^{\bdeta}$ and $\Tloc^{\bdeta}$ satisfy $\axishr$ for any separation margin sequence~$\bdeta$.
\end{lemma}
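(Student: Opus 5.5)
Given $\Psi\in\cT(\cX)$, we will build an ultrametric $u$ whose hierarchy is exactly $\Psi$, but whose merge heights grow so fast that each cluster passes every margin threshold. We then check that $\Tglob^{\bdeta}(u)=\Psi$ and $\Tloc^{\bdeta}(u)=\Psi$.

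For the construction, let $\eta_{\min}:=\min_s\eta_s>0$ and fix $\lambda>1/\eta_{\min}$. For each $C\in\Psi$ define its depth as the length of the longest chain of clusters of $\Psi$ strictly contained in $C$; singletons have depth $0$. Set $\theta(C):=\lambda^{\mathrm{depth}(C)}$ for non-singleton $C$ and $\theta(\{x\})=0$. Since $C\subsetneq C'$ implies $\mathrm{depth}(C)<\mathrm{depth}(C')$, $\theta$ is strictly increasing, so $(\Psi,\theta)$ is a dendrogram. Let $u(x,y):=\theta(\lca_\Psi(x,y))$, which is an ultrametric with $\Psi_u=\Psi$ by the dendrogram correspondence. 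Note that $u(x,y)>0$ for $x\ne y$, so $u\in\cD(\cX)$.

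For the inclusion $\Psi\subseteq\Tglob^{\bdeta}(u)$, take a nontrivial proper $C\in\Psi$ and let $P$ be its parent in $\Psi$. For $x_1,y\in C$ we have $u(x_1,y)\le\theta(C)$. For $x_2\in C$ and $z\notin C$, $\lca(x_2,z)\supseteq P$, so $u(x_2,z)\ge\theta(P)\ge\lambda\,\theta(C)$, because $\mathrm{depth}(P)\ge\mathrm{depth}(C)+1$. Hence $\varrho(u,C)\le 1/\lambda<\eta_{\min}\le\eta_{|C|-1}$.

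For the reverse inclusion, $\Tglob^{\bdeta}(u)\subseteq\Tloc^{\bdeta}(u)\subseteq\Tloc(u)=\Psi_u=\Psi$. The first inclusion holds since $\tau\le\varrho$. The second holds since $\eta_s\le 1$, as in Example~\ref{ex:refinement_adms_in_3}(iii). The equality $\Tloc(u)=\Psi_u$ is Lemma~\ref{lem:separation_methods_exact_ultrametrics}. Combining both directions gives $\Psi\subseteq\Tglob^{\bdeta}(u)\subseteq\Tloc^{\bdeta}(u)\subseteq\Psi$, so both methods output exactly $\Psi$.

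The only delicate point is ensuring the ratio bound holds uniformly for every cluster, including a cluster whose parent is the root. A geometric choice of heights in the depth handles this, since every parent is at least one depth level higher. With that choice the rest of the argument is bookkeeping.
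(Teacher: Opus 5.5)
Your proof is correct and takes essentially the same route as the paper: you build an ultrametric realizing $\Psi$ whose heights are geometrically spaced, so that every parent-to-child height ratio beats $\min_s \eta_s$, and you get the reverse inclusion from exactness of the unit-margin methods on ultrametrics. The differences are cosmetic: you index heights bottom-up by the longest chain below $C$ with base $\lambda>1/\min_s\eta_s$, whereas the paper uses top-down depth with base $q<\min_s\eta_s$, and you obtain both equalities at once from the sandwich $\Psi\subseteq\Tglob^{\bdeta}(u)\subseteq\Tloc^{\bdeta}(u)\subseteq\Tloc(u)=\Psi$.
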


\begin{proof}
Fix an arbitrary hierarchy \(\Psi\in\cT(\cX)\), and let $q\in\left(0,\min_{1\le s\le n-2}\eta_s\right).$ We will construct a dissimilarity $d$ such that $\Tglob^{\bdeta}(d) = \Tloc^{\bdeta}(d) = \Psi$. 
View \(\Psi\) as a rooted tree, with root $\cX$ and leaves the singletons $\{x\}$ for $x\in \cX$; let \(\operatorname{depth}(C)\) denote the depth of cluster $C \in \Psi$, with \(\operatorname{depth}(\cX)=0\). Assign to every non-singleton \(C\in\Psi\) the height $h(C):=q^{\operatorname{depth}(C)},$ and assign height $0$ to the singleton leaves. Define for every distinct $x,y$, \[u_\Psi(x,y) :=h\bigl(\operatorname{lca}_{\Psi}(x,y)\bigr).\]
Then \(u_\Psi\) is an ultrametric whose associated hierarchy is exactly \(\Psi\).

Let \(C\in\Psi\) be non-singleton and proper cluster, and denote by \(\operatorname{par}(C)\) its parent in \(\Psi\). 
For all \(x_1,x_2,y\in C\) and \(z\notin C\), we have  $u_\Psi(x_1,y)\le h(C)$ and $u_\Psi(x_2,z)\ge h(\operatorname{par}(C))$.
Using $\frac{h(C)}{h(\operatorname{par}(C))}=q$ and the definition of $\varrho(u_\Psi,C)$, we obtain  $\varrho \le q < \eta_{|C|-1}.$
As \(\tau(u_\Psi,C)\le\rho(u_\Psi,C)\), it follows that every \(C\in\Psi\) belongs to \(\Tglob^\eta(u_\Psi)\) and to \(\Tloc^\eta(u_\Psi)\), and thus
\[
\Psi \subseteq \Tglob^\eta(u_\Psi)
\quad \text{ and } \quad 
\Psi \subseteq \Tloc^\eta(u_\Psi). 
\]
Moreover, because $\Tglob$ and $\Tloc$ are exact on ultrametrics (Lemma~\ref{lem:separation_methods_exact_ultrametrics}), we also have 
\[
\Tglob^\eta(u_\Psi)\subseteq\Tglob(u_\Psi)=\Psi,
\quad \text{ and } \quad 
\Tloc^\eta(u_\Psi)\subseteq\Tloc(u_\Psi)=\Psi,
\]
Hence $\Tglob^\eta(u_\Psi) = \Tloc^\eta(u_\Psi) = \Psi$, and both methods satisfy strict hierarchical richness. 
\end{proof}

\begin{lemma}
\label{lem:Tloc_order_invariant}
Let \(\bdeta\) be a separation margin sequence. These statements are equivalent: \\
(i)  \(\bdeta=\mathbf{1}\); (ii) \(\Tloc^{\bdeta}\) satisfies \(\axiord\); (iii)  \(\Tglob^{\bdeta}\) satisfies \(\axiord\).
\end{lemma}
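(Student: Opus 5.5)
We prove (i)$\Rightarrow$(ii),(iii) directly and the converses by contraposition, in the same way as Lemma~\ref{lem:separation_methods_exact_ultrametrics}.

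\emph{(i) implies (ii) and (iii).} Assume $\bdeta=\mathbf{1}$. The key observation is that membership of a nontrivial proper $C$ in $\Tloc(d)$ is a conjunction of strict comparisons between pairwise dissimilarities. Indeed, $\tau(d,C)<1$ holds if and only if $d(x,y)<d(x,z)$ for all $x,y\in C$ and $z\notin C$. Similarly, $\varrho(d,C)<1$ holds if and only if $d(x_1,y)<d(x_2,z)$ for all $x_1,x_2,y\in C$ and $z\notin C$. These equivalences use only finiteness of $\cX$, so that the maximum is attained, and positivity of $d$ off the diagonal. Let $g$ be strictly increasing with $g(0)=0$. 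Then $d(a,b)<d(a',b')$ is equivalent to $g(d(a,b))<g(d(a',b'))$. The case $x=y$ causes no difficulty: $d(x,x)=0<d(x,z)$ and $g(0)=0<g(d(x,z))$. Hence $C\in\Tloc(d)$ if and only if $C\in\Tloc(g\circ d)$, and likewise for $\Tglob$. The root and singletons are always present, so $T(g\circ d)=T(d)$ for both methods.

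\emph{(ii) implies (i) and (iii) implies (i).} Suppose $\bdeta\neq\mathbf{1}$, and pick $s$ with $\eta_s<1$. Reuse the construction of Lemma~\ref{lem:Tloc_family}. Take $C\subsetneq\cX$ with $|C|=s+1$ and $r\in(\eta_s,1)$. Set $d(x,y)=r$ for distinct $x,y\in C$, and $d=1$ on all other distinct pairs. Then $\tau(d,C)=\varrho(d,C)=r>\eta_s$, so $C\notin\Tloc^{\bdeta}(d)$ and $C\notin\Tglob^{\bdeta}(d)$.

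Now let $g$ be a strictly increasing map with $g(0)=0$, $g(r)=r'$ and $g(1)=1$, where $0<r'<\eta_s$; for instance, a piecewise linear interpolation through these points. Then $g\circ d$ takes the value $r'$ inside $C$ and $1$ elsewhere. Its separabilities are $\tau(g\circ d,C)=\varrho(g\circ d,C)=r'<\eta_{|C|-1}$, so $C$ belongs to both $\Tloc^{\bdeta}(g\circ d)$ and $\Tglob^{\bdeta}(g\circ d)$. This contradicts order invariance for each method separately.

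The only step requiring care is the first direction. There one must verify that the threshold $1$, rather than a general $\eta$, is exactly what turns the ratio condition into purely ordinal comparisons. This includes the degenerate terms with $x_1=y$ or $x=y$, which contribute numerator $0$ and so never violate the strict inequality.
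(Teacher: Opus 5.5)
Your proof is correct and follows essentially the same route as the paper. For (i)$\Rightarrow$(ii),(iii) you rewrite the unit-threshold conditions as strict comparisons preserved by $g$; for the converse you push a two-valued dissimilarity across the threshold $\eta_s$ with a monotone $g$, where the paper lands exactly on the boundary value $\eta_s$ and uses strictness, and you move from $r>\eta_s$ to $r'<\eta_s$, which works equally well (and you treat $\Tglob^{\boldsymbol{\eta}}$ explicitly rather than by analogy).
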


\begin{proof}
We only prove the equivalence between (i) and (ii). The proof of the equivalence between (i) and (iii) is almost identical and thus is omitted. 

To show that (i) implies (ii), let $\bdeta=\mathbf{1}$, so that $\Tloc^{\bdeta}=\Tloc$. 
Let \(g:\R_{\ge0}\to\R_{\ge0}\) be strictly increasing with \(g(0)=0\).
For every nonsingleton cluster \(C\subsetneq\cX\),
\[
\tau(d,C)<1 \iff d(x,y)<d(x,z)
\quad
\text{for all }x,y\in C,\ z\notin C.
\]
Because \(g\) is strictly increasing, this is equivalent to
\[
g(d(x,y))<g(d(x,z))
\quad
\text{for all }x,y\in C,\ z\notin C,
\]
and hence to $\tau(g\circ d,C)<1.$ 
Therefore $\Tloc(g\circ d)=\Tloc(d).$

We now establish that (ii) implies (i). We prove it by contraposition. Consider a separation margin sequence $\bdeta \ne \mathbf{1}$; we will establish that $\Tloc^{\bdeta}$ is not order-invariant by constructing two dissimilarities $d,d'$ such that $d' = g \circ d$ for some strictly increasing function $g$ satisfying $g(0) = 0$, but for which $\Tloc^{\bdeta}(d) \ne \Tloc^{\bdeta}(d')$.

Because \(\bdeta\neq\mathbf{1}\), there exists \(s\in[n-2]\) such that \(\eta_s<1\). Choose a subset $C\subseteq\cX$ with $s=|C|-1$.
Define $d,d'\in\cD(\cX)$ such that for distinct $x,y\in\cX$,
\[
d(x,y):=
\begin{cases}
\eta_s/2, & \text{if } x,y\in C \text{ or } x,y\in \bC,\\
1, & \text{if } x\in C,\ y\in \bC,
\end{cases}
\]
and
\[
d'(x,y):=
\begin{cases}
\eta_s, & \text{if } x,y\in C \text{ or } x,y\in \bC,\\
1, & \text{if } x\in C,\ y\in \bC.
\end{cases}
\]
Because $0<\eta_s/2<\eta_s<1$ there exists a strictly increasing \(g:\R_{\ge0}\to\R_{\ge0}\) satisfying $g(0)=0$, $g(\eta_s/2)=\eta_s$, and $g(1)=1$. 
Hence $d'=g\circ d.$ 

Observe that for every $x,y\in C$ and $z\notin C$, \( \frac{d(x,y)}{d(x,z)}=\frac{\eta_s/2}{1}<\eta_s,\) hence \( C\in \Tloc^{\bdeta}(d) \) holds.
However, fix any $z_0\in \bC$. For every $x,y\in C$ with $x\neq y$, \( \frac{d'(x,y)}{d'(x,z_0)}=\frac{\eta_s}{1}=\eta_s \not< \eta_s. \) Therefore \(C\notin \Tloc^{\bdeta}(d').\)
Hence $\Tloc^{\bdeta}$ is not order invariant.
\end{proof}

\subsection{Bryant-Berry Stable Clusters: Proof of Proposition~\ref{prop:stable_cluster_is_admissible}}

We first establish the strong admissibility of the Bryant--Berry method. 
Moreover, Corollary~\ref{cor:monotone_power_trf} shows that \(\mu_{\mathrm{power}}^p\) preserves strong admissibility for every
\(p>0\). Hence $\Tstable^{(p)} = \Tstable\circ\mu_{\mathrm{power}}^p$ is strongly admissible for every \(p>0\). In particular, \(\Tstable\) and all its power-transformed variants are admissible, proving
Proposition~\ref{prop:stable_cluster_is_admissible}.

\begin{lemma}
\label{lem:properties_Tstable}
$\Tstable$ is a hierarchical clustering method and satisfies $\axisca, \axisuhr, \axicwc$ and $\axiperm$. 
In particular, \(\Tstable\) is strongly admissible. 
\end{lemma}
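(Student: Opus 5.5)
The plan is to check the properties one at a time, taking laminarity from \citet{bryant2001structured} as recalled in the main text. That result makes $\Tstable(d)$ a hierarchy, since it contains the root and the singletons by definition.

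\emph{Scale invariance.} For $\beta>0$ we have $\rho_{\beta d}(xy\mid z)=\beta\,\rho_d(xy\mid z)$. Hence $\overline{\rho}_{\beta d}(UV\mid Z)=\beta\,\overline{\rho}_{d}(UV\mid Z)$ and $\iota^{\beta d}(C)=\beta\,\iota^d(C)$. The sign of the index is therefore unchanged, and $\Tstable(\beta d)=\Tstable(d)$.

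\emph{Permutation invariance.} Relabelling sends bipartitions $(U,V)$ of $C$ and witness sets $Z\subseteq\bC$ bijectively to bipartitions of $\phi(C)$ and witness sets in $\overline{\phi(C)}$. It also satisfies $\rho_{d_\phi}(\phi(x)\phi(y)\mid\phi(z))=\rho_d(xy\mid z)$. So $\iota^{d_\phi}(\phi(C))=\iota^d(C)$, and $\Tstable(d_\phi)=\phi\cdot\Tstable(d)$.

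\emph{Cluster-wise consistency.} Let $d'$ be a $\{C\}$-strengthening of $d$ and fix $u\in U$, $v\in V$ with $U\cup V=C$, and $z\notin C$. Then $d'(u,v)\le d(u,v)$, $d'(u,z)\ge d(u,z)$ and $d'(v,z)\ge d(v,z)$. Hence $\rho_{d'}(uv\mid z)\ge\rho_d(uv\mid z)$ termwise, so every average $\overline{\rho}$ can only increase and $\iota^{d'}(C)\ge\iota^d(C)>0$. Only pairs inside $C$ and pairs crossing $C$ appear in these terms, which is exactly what a $\{C\}$-strengthening controls. Cluster-wise consistency then gives partition consistency by Proposition~\ref{thm:relation_consistencies}.

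\emph{Exactness on ultrametrics.} This is the step needing the most care. I would go through the sandwich $\Tloc(u)=\Psi_u$, established in the proof of Lemma~\ref{lem:separation_methods_exact_ultrametrics}. For $\Psi_u\subseteq\Tstable(u)$, take a nontrivial proper $C\in\Psi_u$, which is a ball, so $u(x,y)\le r<u(x,z)$ for $x,y\in C$ and $z\notin C$. Then every isolation weight $\rho_u(uv\mid z)$ with $u,v\in C$ and $z\notin C$ is strictly positive. Any average of them is positive, and since there are finitely many bipartitions and witness sets, the minimum $\iota^u(C)$ is positive.

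For the reverse inclusion $\Tstable(u)\subseteq\Psi_u$, suppose $C$ is stable but not a ball. I would pick $x_0\in C$ and look at the smallest ball $B$ containing $C$. Since $C\neq B$, there is $z\in B\setminus C$. Let $h=\max_{x,y\in C}u(x,y)$; this is the diameter of $B$ and is realised by some pair in $C$. By the ultrametric structure, the children of $B$ partition it. Because $C$ is not a union contained in a single child (otherwise $B$ would not be minimal), $C$ meets at least two children. I would take $U$ to be the intersection of $C$ with the child containing $z$, if that is nonempty, and $V=C\setminus U$; otherwise I take $U$ to be one child-intersection and $V$ the rest, with $Z=\{z\}$.

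Every $u\in U$, $v\in V$ then satisfies $u(u,v)=h$, and at least one of $u(u,z)$, $u(v,z)$ is at most $h$. So each $\rho\le 0$, the average is nonpositive, and $C$ is not stable, a contradiction. The singleton and root cases are immediate, giving $\Tstable(u)=\Psi_u$.

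Exactness on ultrametrics also implies partition richness. Combined with the previous steps, this yields strong admissibility.
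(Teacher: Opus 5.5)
Your proof is correct. Laminarity, scale invariance, permutation invariance and cluster-wise consistency are handled exactly as in the paper; the genuine difference is in exactness on ultrametrics.

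\textbf{The inclusion} $\Psi_u\subseteq\Tstable(u)$. The paper combines the cited Bryant--Berry containment $\Tloc\sqsubseteq\Tstable$ with $\Tloc(u)=\Psi_u$. You instead check directly that all isolation weights of a ball are strictly positive. This makes the step self-contained. The same computation shows that $\tau(d,C)<1$ forces $\rho_d(xy\mid z)>0$ for all $x,y\in C$, $z\notin C$, i.e.\ it reproves $\Tloc\sqsubseteq\Tstable$. As a result, your announced detour through $\Tloc(u)=\Psi_u$ is never actually used.

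\textbf{The inclusion} $\Tstable(u)\subseteq\Psi_u$. The paper works only with the three-point condition and runs two contradictions. First, every external $z$ must be equidistant from $C$: splitting $C$ by distance to $z$ gives a negative average. Second, internal distances must lie strictly below that common value: splitting $C$ by the relation $u(a,b)<\lambda_z$ gives a zero average. After this, $C$ is exhibited as a ball. You instead use the tree structure of $\Psi_u$. You take $B$ to be the smallest cluster of $\Psi_u$ containing $C$, a witness $z\in B\setminus C$, and a bipartition of $C$ along the children of $B$. Then $u(a,b)=\theta_u(B)\ge\max\{u(a,z),u(b,z)\}$ for every cross pair, so a single triple already has nonpositive average. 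This is shorter. The cost is that it invokes dendrogram facts (cross-child pairs are at distance $\theta_u(B)$, and the children partition $B$) rather than only the strong triangle inequality.

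\textbf{Minor points.} The case split on whether $C$ meets the child containing $z$ is unnecessary, since $z\in B$ already gives $u(a,z),u(b,z)\le\theta_u(B)$ for any child-bipartition. The point $x_0$ is never used. You should also rename either the ultrametric or the elements of $U$, since expressions like $u(u,v)$ and $\rho_u(uv\mid z)$ are ambiguous.
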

\begin{proof}
For every \(d\in\cD(\cX)\), Bryant and Berry~\citep{bryant2001structured} show that the stable clusters $\{C\subseteq\cX:\iota^d(C)>0\}$ form a laminar family. Because \(\Tstable(d)\) additionally contains the root
\(\cX\) and all singleton clusters, it is a hierarchy. Thus \(\Tstable\) is a hierarchical clustering method. 
We now verify the four axioms.

\medskip 
\noindent\emph{1. Scale invariance.} 
For all \(\beta>0\) and \(u,v,z\in\cX\), we have $\rho_{\beta d}(uv\mid z) = \beta\,\rho_d(uv\mid z).$ 
Hence
\[
\overline{\rho}_{\beta d}(UV\mid Z)
=
\beta\,\overline{\rho}_d(UV\mid Z)
\]
for every admissible triplet \((U,V,Z)\), and therefore $\iota^{\beta d}(C)=\beta\,\iota^d(C).$
Because \(\beta>0\), we also have $\iota^{\beta d}(C)>0$ iff $\iota^d(C)>0.$ 
Thus \(\Tstable(\beta d)=\Tstable(d)\).

\medskip
\noindent\emph{2. Exactness on ultrametrics.}
Let \(u\) be an ultrametric, and let \(\Psi_u\) be its associated hierarchy.
Because $\Tloc\sqsubseteq\Tstable$ and $\Tloc(u)=\Psi_u$, we immediately obtain $\Psi_u\subseteq\Tstable(u).$
It remains to prove the reverse inclusion.

Let \(C\in\Tstable(u)\). The root and singleton cases are immediate, so assume that \(C\) is a non-singleton proper subset of \(\cX\). Then \(\iota^u(C)>0\). 

We first show that every external point is equidistant from all points of \(C\).
To do so, fix \(z\in\cX\setminus C\), and we will prove that the function \(x\mapsto u(x,z)\) is constant on \(C\). By contradiction, suppose this is not the case. Let
\[
r:=\min_{x\in C}u(x,z),
\qquad
U:=\{x\in C:u(x,z)=r\},
\qquad
V:=C\setminus U,
\qquad
Z:=\{z\}.
\]
Then \(U,V,Z\) are nonempty, and \(U \cap V = \emptyset\) and \(C=U\cup V\). For every \(x\in U\) and \(y\in V\), we have
\(
r = u(x,z)<u(y,z).
\)
Because \(u\) is an ultrametric, the two larger distances among
\(u(x,y),u(x,z),u(y,z)\) are equal. Hence
\(
u(x,y)=u(y,z).
\)
Therefore
\[
\rho_u(xy\mid z)
=
\min\{u(x,z),u(y,z)\}-u(x,y)
=
u(x,z)-u(y,z)
<0.
\]
Averaging over \(x\in U\), \(y\in V\), and \(Z=\{z\}\), we obtain
\(
\overline{\rho}_u(UV\mid Z)<0,
\)
contradicting \(\iota^u(C)>0\). Thus \(u(x,z)\) is constant on \(C\).
Denote its common value by
\(
\lambda_z:=u(x,z), \, x\in C.
\)

We next show that every internal dissimilarity is strictly smaller than this common external dissimilarity. 
Suppose, toward a contradiction, that there exist distinct \(x,y\in C\) such that
\(
u(x,y)\ge \lambda_z.
\)
Because \(u(x,z)=u(y,z)=\lambda_z\), the ultrametric inequality gives
\[
u(x,y)\le \max\{u(x,z),u(y,z)\}=\lambda_z.
\]
Hence
\(
u(x,y)=\lambda_z.
\)
Define a relation \(\sim\) on \(C\) by
\(
a\sim b
\,\Longleftrightarrow\,
u(a,b)<\lambda_z.
\)
By the ultrametric inequality, \(\sim\) is an equivalence relation on \(C\).
Because \(u(x,y)=\lambda_z\), it has at least two equivalence classes. Let
\(U\) be one equivalence class and set
\(
V:=C\setminus U,
\, 
Z:=\{z\}.
\)
For every \(a\in U\) and \(b\in V\), we have \(u(a,b)\ge \lambda_z\).
On the other hand,
\(
u(a,b)\le \max\{u(a,z),u(b,z)\}=\lambda_z.
\)
Thus
\(
u(a,b)=\lambda_z.
\)
Therefore
\(
\rho_u(ab\mid z)
=
\min\{u(a,z),u(b,z)\}-u(a,b)
=
\lambda_z-\lambda_z
=
0.
\)
Hence
\(
\overline{\rho}_u(UV\mid Z)=0,
\)
again contradicting \(\iota^u(C)>0\). Therefore, for every
\(z\in\cX\setminus C\) and all distinct \(x,y\in C\),
\(
u(x,y)<u(x,z)=u(y,z).
\)

Now set $R:=\max_{x,y\in C}u(x,y).$ 
The preceding inequality implies that, for every \(x\in C\), $B_u(x,R)=C,$ where $B_u$ is defined in Equation~\eqref{eq:ultrametric_hierarchy}. 
Hence \(C\in\Psi_u\) by the ultrametric-ball representation of \(\Psi_u\). 
This proves that \(\Tstable(u)\subseteq \Psi_u.\)

\medskip
\noindent\emph{3. Cluster-wise consistency: }
Let \(C\in\Tstable(d)\), and let \(d'\) be a \(\{C\}\)-strengthening of \(d\). The root and singleton cases are immediate, so assume that \(C\) is nontrivial and proper.

Consider two nonempty disjoint sets $U, V \subseteq \cX$ with $U \cup V = C$, and a nonempty $Z \subseteq \cX \setminus C$. For every \(a\in U\), \(b\in V\), \(z\in Z\), we have  $d'(a,b)\le d(a,b)$, $d'(a,z)\ge d(a,z)$, and $d'(b,z)\ge d(b,z)$. Thus,
\begin{align*}
 \rho_{d'}(ab \mid z)
 = \min\{d'(a, z), d'(b, z)\} - d'(a, b) 
 \wge \min\{d(a, z), d(b, z)\} - d(a, b) 
 = \rho_d(ab \mid z).
\end{align*}
Averaging and then minimizing over all admissible triples gives $\iota^{d'}(C)\ge\iota^d(C)>0.$ 
Hence \(C\in\Tstable(d')\), proving cluster-wise consistency.

\medskip
\noindent\emph{4. Permutation invariance.}
Let \(\phi\) be a permutation of \(\cX\), and recall \(d_\phi(x,y):=d(\phi^{-1}(x),\phi^{-1}(y))\). 
For every \(u,v,z\in\cX\), we have $\rho_{d_\phi}\bigl(\phi(u)\phi(v)\mid\phi(z)\bigr) = \rho_d(uv\mid z).$ 
Moreover, the map $(U,V,Z)\longmapsto \bigl(\phi(U),\phi(V),\phi(Z)\bigr)$ is a bijection between the triples admissible in the definition of \(\iota^d(C)\) and those admissible in the definition of \(\iota^{d_\phi}(\phi(C))\). 
Therefore $\iota^{d_\phi}(\phi(C))=\iota^d(C)$ for every \(C\subseteq\cX\). 
Hence
\[
C\in\Tstable(d)
\iff
\phi(C)\in\Tstable(d_\phi),
\]
proving $\Tstable(d_\phi)=\phi\cdot\Tstable(d).$ 
\end{proof}

\section{Proofs for Section~\ref{sec:structure}}
\subsection{Incompatibility among Methods}
\label{app:incompatibility}

\begin{proof}[Proof of Lemma~\ref{lem:incompatible_family}]
First, we consider $\cX = [4]$.
We show that for any $0 < p < q$, the methods $\Tstable^{(p)}$ and $\Tstable^{(q)}$ admit no common refinement that outputs hierarchies. Let
\[
\cX = \{1,2,3,4\},
\qquad
C_1 := \{1,2,3\},
\qquad
C_2 := \{2,3,4\}.
\]
Then $C_1 \cap C_2 = \{2,3\}$ is nonempty while neither $C_1 \subseteq C_2$ nor $C_2 \subseteq C_1$, so $C_1$ and $C_2$ are not laminar-compatible; in particular, no hierarchy on $\cX$ contains both. We construct $d \in \cD(\cX)$ such that
\(
C_1 \in \Tstable^{(p)}(d)
\,
\text{and}
\,
C_2 \in \Tstable^{(q)}(d),
\)
hence prove that $\Tstable^{(p)}$ and $\Tstable^{(q)}$ are mutually incompatible.

\medskip 
\noindent\emph{Construction of $d$.} 
Let $r := q/p > 1$. Fix $\alpha > 0$ small enough that $\bigl(1 + \alpha/2\bigr)^r < 2$; such an $\alpha$ exists because $(1+\alpha/2)^r \to 1$ as $\alpha \to 0^+$. Set
\(
h := 1 + \alpha,
\,
t_0 := 1 + \frac{\alpha}{2} = \frac{1 + h}{2}.
\)
Notice 
\(
t_0^r
=
\left(\frac{1+h}{2}\right)^r
< 
\frac{1 + h^r}{2},
\)
equivalently $1 + h^r > 2 t_0^r$ holds. By continuity, both strict inequalities $t_0^r < 2$ and $1 + h^r > 2 t_0^r$ persist after a small perturbation of $t_0$: there exists $\delta \in (0, \alpha/2)$ such that
$t := t_0 + \delta$ satisfies $t^r < 2$ and $1 + h^r > 2 t^r.$ 
The constraint $\delta < \alpha/2$ ensures $1 < t < h$. Finally, choose $\varepsilon$ with
\(
0 < \varepsilon < 1
\,\text{and}\,
\varepsilon^r < 2 - t^r;
\)
both constraints are compatible because $t^r < 2$.

Pick any $M > h$, and define $d \in \cD(\cX)$ by prescribing the $p$-th powers of its values:
\[
\begin{aligned}
&d(1,2)^p = 1, &&d(1,3)^p = h, &&d(1,4)^p = M,\\
&d(2,3)^p = \varepsilon, &&d(2,4)^p = t, &&d(3,4)^p = t.
\end{aligned}
\]
All six prescribed values are strictly positive, so $d$ is a valid dissimilarity. By using the ordering
\(
\varepsilon < 1 < t < h < M,
\)
we can check by direct computation that $C_1 \in \Tstable^{(p)}(d)$ whereas $C_2 \in \Tstable^{(q)}(d)$. 

Suppose, for contradiction, that $T$ refines both $\Tstable^{(p)}$ and $\Tstable^{(q)}$. Then $T(d)$ contains both $C_1$ and $C_2$, contradicting laminarity.

For a domain $\cX'$ with $|\cX'|>4$, retain the above dissimilarities on $\{1,2,3,4\}$ and set $d(w,i)=L$ for every new point $w$ and every $i\in\{1,2,3,4\}$, where $L$ is larger than every dissimilarity in the four-point construction; assign arbitrary positive symmetric dissimilarities among the new points. For any new external witness $w$, every isolation weight contributing to the stability of $C_1$ under power $p$ or of $C_2$ under power $q$ is strictly positive. An average over an external set containing both old and new witnesses is therefore a positive average of the already verified terms and these new positive terms. Hence $C_1\in\Tstable^{(p)}(d)$ and $C_2\in\Tstable^{(q)}(d)$ still hold on $\cX'$, proving incompatibility for every $|\cX'|\ge4$.
\end{proof}

\begin{lemma}
\label{lem:pairwise_incompatible}
Let $\power>0$. The methods $\Tsl$ and $\Tstable^{(\power)}$ are incompatible. 
\end{lemma}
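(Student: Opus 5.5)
The plan is to mirror the proof of Lemma~\ref{lem:incompatible_family}: exhibit one dissimilarity $d$ such that $\Tsl(d)$ contains $C_1:=\{1,2,3\}$ and $\Tstable^{(\power)}(d)$ contains $C_2:=\{2,3,4\}$. These two clusters overlap in $\{2,3\}$ without being nested, so no hierarchy contains both. Consequently no method can refine both $\Tsl$ and $\Tstable^{(\power)}$.

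\emph{Setup and the single-linkage side.} I first work on $\cX=[4]$ and prescribe the $\power$-th powers of $d$, exactly as in the proof of Lemma~\ref{lem:incompatible_family}. The intended geometry is a chain: $2$ and $3$ are very close, $1$ is close to $2$ only, and $4$ sits at a moderate distance from both $2$ and $3$. Concretely, I take
\[
d(2,3)^\power=\varepsilon,\quad d(1,2)^\power=a,\quad d(2,4)^\power=d(3,4)^\power=t,\quad d(1,3)^\power=d(1,4)^\power=M,
\]
with $0<\varepsilon<a<t<2a-\varepsilon$ and $M$ large; for instance $\varepsilon=0.1$, $a=1$, $t=1.5$, $M=10$. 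Single linkage is order invariant, so $\Tsl(d)$ can be read off the ordering of $d^\power$ via Lemma~\ref{lem:sl_mst_characterization}. At levels in $[\varepsilon,a)$ the components are $\{2,3\},\{1\},\{4\}$, and at levels in $[a,t)$ they are $\{1,2,3\},\{4\}$. Hence $C_1\in\Tsl(d)$.

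\emph{The stable-cluster side.} Write $\rho$ for the isolation weight computed from $d^\power$. For $C_2$ the only external witness set is $Z=\{1\}$, and there are three bipartitions of $C_2$ to check.
\begin{itemize}
\item For $U=\{4\}$, $V=\{2,3\}$: the average is $\tfrac12\bigl[(a-t)+(M-t)\bigr]$, which is positive once $M>2t-a$.
\item For $U=\{2\}$, $V=\{3,4\}$: the average is $\tfrac12\bigl[(a-\varepsilon)+(a-t)\bigr]$, which is positive exactly because $t<2a-\varepsilon$.
\item For $U=\{3\}$, $V=\{2,4\}$: the average is $\tfrac12\bigl[(a-\varepsilon)+(M-t)\bigr]>0$.
\end{itemize}
Therefore $\iota(C_2)>0$, so $C_2\in\Tstable^{(\power)}(d)$. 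All values are positive, so $d\in\cD(\cX)$.

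\emph{Extension to larger $\cX$.} For $|\cX|>4$, I add new points $w$ with $d(w,i)^\power=L$ for $i\in[4]$, where $L$ exceeds every previous value, and choose arbitrary values among the new points.
\begin{itemize}
\item Single linkage: every edge leaving $[4]$ has weight $L$, so $C_1$ remains a component at levels in $[a,t)$.
\item Stable clusters: each new witness gives $\rho(xy\mid w)=L-d(x,y)^\power>0$. For a fixed bipartition, the average over any $Z$ is the mean of the per-witness averages. Each of these is positive, so positivity over singleton witnesses suffices.
\end{itemize}

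\emph{Expected obstacle.} The main difficulty is choosing the inequalities on $(\varepsilon,a,t,M)$. The bipartition $U=\{2\}$, $V=\{3,4\}$ must stay positive even though $\rho(24\mid 1)=a-t<0$, while single linkage must still attach $1$ before $4$, which forces $a<t$. The window $a<t<2a-\varepsilon$ resolves both requirements. A second point to verify is that positivity for singleton witnesses transfers to arbitrary witness sets $Z$, which follows from the averaging identity above.
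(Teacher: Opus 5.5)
Your proposal is correct and is essentially the paper's proof. The paper uses the same four-point chain configuration: its $d_0$ has $d_0(2,3)=1$, $d_0(1,2)=3$, $d_0(2,4)=d_0(3,4)=4$ and $d_0(1,3)=d_0(1,4)=10$, which lies in your window $\varepsilon<a<t<2a-\varepsilon$. It then pulls this back via $d=d_0^{1/p}$ using order invariance of single linkage, pads with a large constant for $n>4$, and leaves to the reader the three bipartition checks for $\{2,3,4\}$ and the witness-averaging identity that you spell out.
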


\begin{proof}
We first construct a dissimilarity function $d_0$ on $\cX=\{1,2,3,4\}$ such that $\Tsl(d_0)$ and $\Tstable^{(\power)}(d_0)$ are incompatible. Define $d_0 \in \cD(\cX)$ such that
\[
d_0(1,2) =3, \quad d_0(1,3)=d_0(1,4) = 10, \quad d_0(2,3)=1, \quad d_0(2,4) = d_0(3,4) = 4.
\]
We can verify that $\{1,2,3\}\in \Tsl(d_0)$ while $\{2,3,4\} \in \Tstable(d_0).$
The clusters $\{1,2,3\}$ and $\{2,3,4\}$ overlap, but neither contains the other. Thus $\Tsl({d_0})\cup\Tstable({d_0})$ is not laminar.

Now fix $\power>0$ and define $d$ entry-wise by $d(x,y):={d_0}(x,y)^{1/\power}.$ 
Then $\trf^\power(d)={d_0}$. Because single linkage is order invariant, \(\Tsl(d)=\Tsl({d_0}),\) whereas, by definition,
\(
\Tstable^{(\power)}(d)
=
\Tstable(\trf^\power(d))
=
\Tstable({d_0}).
\)
Hence the same two non-laminar clusters belong to
\(
\Tsl(d)\cup\Tstable^{(\power)}(d).
\)

For $n>4$, extend ${d_0}$ by setting ${d_0}(i,j)=20$ whenever at least one of
$i,j$ belongs to $\{5,\ldots,n\}$. This does not affect the single-linkage
cluster $\{1,2,3\}$. Moreover, for every new point $z$ and every
$u,v\in\{2,3,4\}$,
\(
\rho_{d_0}(uv\mid z)=20-{d_0}(u,v)>0.
\)
Thus $\iota^{d_0}(\{2,3,4\})>0$ remains valid. Taking $d(i,j)={d_0}(i,j)^{1/\power}$ completes the proof for every $n\geq4$.
\end{proof}

\subsection{Well-separated Backbone Hierarchy}
\label{app:core_hierarchy}

\subsubsection{Proof of Theorem~\ref{thm:core_hierarchy}}

The canonical partition consistency axiom is formulated at the level of partitions: if a partition already appears in the hierarchy, then strengthening that partition preserves all of its blocks. 
To establish the existence of a well-separated backbone hierarchy, however, we need a cluster-wise consequence of this axiom: a cluster that is sufficiently well separated from its complement must be selected by any admissible
method. 

We prove it in two steps. 
First, we show in Lemma~\ref{lemma:single_cluster_extraction} that partition consistency yields a cluster-wise consequence: if there exists a dissimilarity \(d_0\) such that \(\{C,\bC\}\subseteq T(d_0)\), the cluster \(C\) remains selected under any dissimilarity that does not increase dissimilarities within \(C\) and does not decrease dissimilarities from $C$ to $\bC$. 
Second, partition richness provides a reference dissimilarity \(d_0\) realizing \(\{C,\bC\}\). 
Moreover, by scale invariance, we can rescale any dissimilarity $d$ satisfying $\varrho(d,C)<\eta_C$, for a sufficiently small constant \(\eta_C>0\), by a factor \(\beta>0\) so that $\beta d(x,y)\le d_0(x,y)$ for all $x,y\in C$, and $\beta d(x,z)\ge d_0(x,z)$ for all $x\in C,\ z\in\bC.$ 
Lemma~\ref{lemma:single_cluster_extraction} then gives \(C\in T(\beta d)\), and scale invariance yields \(C\in T(d)\). 
This is the content of Lemma~\ref{lemma:core_hierarchy}. 


\begin{lemma}
\label{lemma:single_cluster_extraction}
Let \(T\) be a hierarchical method satisfying partition consistency, and let \(C\subsetneq\cX\) be a nonempty proper cluster.  
Let \(d_0,d\in\cD(\cX)\) such that $d$ is a $\{C\}$-strengthening of $d_0$. 
If $\{C,\bC\}\subseteq T(d_0)$ then \(C\in T(d)\). 
\end{lemma}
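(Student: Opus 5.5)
The idea is to apply partition consistency once, to the two-block partition $\{C,\bC\}$, with a carefully built intermediate dissimilarity. The obstacle is that $d$ is only a $\{C\}$-strengthening of $d_0$: pairs inside $\bC$ are unconstrained. So $d$ need not be a $\{C,\bC\}$-strengthening of $d_0$, and partition consistency cannot be invoked directly. We therefore route through an auxiliary dissimilarity on which the pairs inside $\bC$ are adjusted.

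First, define $d_1\in\cD(\cX)$ by
\[
d_1(x,y)=\begin{cases}
d(x,y) & \text{if } x,y\in C \text{, or } x\in C,\ y\in\bC \text{ (or vice versa)},\\
\min\{d_0(x,y),d(x,y)\} & \text{if } x,y\in\bC .
\end{cases}
\]
Then $d_1$ is a valid dissimilarity, since it is positive off the diagonal. It is a $\{C,\bC\}$-strengthening of $d_0$:
\begin{itemize}
\item inside $C$, $d_1=d\le d_0$;
\item inside $\bC$, $d_1\le d_0$ by construction;
\item across the cut, $d_1=d\ge d_0$.
\end{itemize}
Since $\{C,\bC\}\subseteq T(d_0)$, partition consistency gives $\{C,\bC\}\subseteq T(d_1)$.

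Second, we pass from $d_1$ to $d$. The two dissimilarities agree on every pair touching $C$, and on pairs inside $\bC$ we have $d\ge d_1$. This is not a contraction inside $\bC$, so $d$ is not a $\{C,\bC\}$-strengthening of $d_1$. It is, however, a strengthening of $d_1$ for the partition $\{C\}\cup\{\{z\}:z\in\bC\}$: intra-block pairs lie only in $C$, where the values are equal, and every other pair is inter-block, where $d\ge d_1$. To use partition consistency we need this partition to lie in $T(d_1)$. The singletons are always present, and $C\in T(d_1)$, so indeed $\{C\}\cup\{\{z\}:z\in\bC\}\subseteq T(d_1)$. Partition consistency then yields $C\in T(d)$.

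A cleaner variant skips the $\bC$ bookkeeping. Use the partition $\{C\}\cup$ singletons of $\bC$ already in the first step, with $T(d_0)$ containing $C$ and all singletons. The move from $d_0$ to $d$ contracts inside $C$ and expands across $C$ to $\bC$, but it may decrease pairs inside $\bC$, which are inter-block between singletons and so would violate the required expansion. That is exactly why the two-step route is needed: the first step absorbs the decreases inside $\bC$ using the block $\bC$, and the second step absorbs the increases using singletons. The main point to check carefully is that each step is a genuine strengthening for the chosen partition, pair type by pair type. No hypothesis on $|C|$ is needed beyond $C$ being proper and nonempty.
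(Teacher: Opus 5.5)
Your proof is correct and takes essentially the same route as the paper's. The paper also first strengthens for the partition $\{C,\bC\}$ using $\min\{d_0,d\}$ inside $\bC$, then strengthens for $\{C\}\cup\{\{z\}:z\in\bC\}$. The one cosmetic difference is that the paper's $d_1$ agrees with $d_0$ (rather than with $d$) on pairs touching $C$. As a result, the paper absorbs the contraction inside $C$ and the expansion across the cut in the second step rather than the first.
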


\begin{proof}
Define \(d_1\in\mathcal D(\cX)\) by setting
\[
d_1(x,y):=\min\{d(x,y),d_0(x,y)\}\ \text{when }x,y\in\bC,
\quad\text{and}\quad
d_1(x,y):=d_0(x,y)\ \text{otherwise}.
\]
Then \(d_1\) is a \(\{C,\bC\}\)-strengthening of \(d_0\). 
Because \(\{C,\bC\}\subseteq T(d_0)\), partition consistency ensures that $\{C,\bC\}\subseteq T(d_1)$.
In particular, the partition $\cP_C:=\{C\}\cup\{\{x\}:x\in\bC\}$
satisfies \(\mathcal P_C\subseteq T(d_1)\), because every hierarchy contains all singleton
clusters.

We now prove that \(d\) is a \(\mathcal P_C\)-strengthening of \(d_1\). 
Indeed, we have
\begin{itemize}[nosep]
 \item \(d(x,y) \le d_0(x,y) = d_1(x,y)\) for every $x,y \in C$,
 \item \(d(x,y) \ge d_0(x,y) = d_1(x,y)\) for every $x\in C$, $y \in \bC$,
 \item \(d(x,y) \ge \min \left\{(d(x,y), d_0(x,y)\right\} = d_1(x,y)\) for every $x,y \in \bC$.
\end{itemize}
Thus, partition consistency ensures $\cP_C\subseteq T(d)$, and hence \(C\in T(d)\).
\end{proof}

The previous lemma reduces the problem to finding a rescaling of a given dissimilarity \(d\) which is dominated by \(d_0\) inside \(C\), but which dominates \(d_0\) across the cut \((C,\bC)\). Such a rescaling exists whenever the internal diameter of \(C\) under \(d\) is sufficiently small compared to its distance from the complement.

\begin{lemma}
\label{lemma:core_hierarchy}
Let \(T\) satisfy scale invariance, partition richness, and partition consistency.
For any cluster \(C\subsetneq \cX\), with \(|C|\ge 2\), there exists \(\eta_C>0\) such that, for every \(d\in\cD(\cX)\),
\[
\varrho(d,C)<\eta_C \quad\Longrightarrow\quad C\in T(d).
\]
If $T$ further satisfies permutation invariance, $\eta_C$ only depends on the size $|C|$ and $\eta_C \in (0,1]$.
\end{lemma}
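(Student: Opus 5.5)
The plan follows the two-step outline given just before Lemma~\ref{lemma:single_cluster_extraction}. Fix a cluster \(C\subsetneq\cX\) with \(|C|\ge 2\).

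\textbf{Reference dissimilarity.} Partition richness applied to the partition \(\{C,\bC\}\) yields some \(d_0\in\cD(\cX)\) with \(\{C,\bC\}\subseteq T(d_0)\). From \(d_0\) I extract two numbers:
\[
a:=\min_{x,y\in C,\,x\neq y} d_0(x,y)>0,
\qquad
b:=\max_{x\in C,\,z\in\bC} d_0(x,z)<\infty,
\]
and I set \(\eta_C:=\min\{1,a/b\}>0\).

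\textbf{Rescaling.} Let \(d\) satisfy \(\varrho(d,C)<\eta_C\). Write
\[
D:=\max_{x,y\in C} d(x,y),
\qquad
m:=\min_{x\in C,\,z\in\bC} d(x,z).
\]
Taking \(x_1=x\), \(x_2\) and \(z\) attaining \(m\) in the definition of \(\varrho\) shows \(D/m\le\varrho(d,C)<a/b\). Hence I can choose \(\beta\) with \(b/m\le\beta\le a/D\); for instance \(\beta:=b/m\) works, since \(\beta D=bD/m<a\). Then:
\begin{itemize}[nosep]
\item for distinct \(x,y\in C\), \(\beta d(x,y)\le\beta D<a\le d_0(x,y)\), and the diagonal is trivial;
\item for \(x\in C\), \(z\in\bC\), \(\beta d(x,z)\ge\beta m=b\ge d_0(x,z)\).
\end{itemize}
So \(\beta d\) is a \(\{C\}\)-strengthening of \(d_0\); pairs inside \(\bC\) are unconstrained, which is exactly the setting of Lemma~\ref{lemma:single_cluster_extraction}. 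That lemma gives \(C\in T(\beta d)\), and scale invariance gives \(C\in T(d)\).

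\textbf{Dependence on \(|C|\) only.} Assume now permutation invariance as well. Let \(C'\) have \(|C'|=|C|\), and pick a permutation \(\phi\) with \(\phi(C)=C'\). Then \(d_{0,\phi}\) satisfies \(\{C',\overline{C'}\}=\phi\cdot\{C,\bC\}\subseteq T(d_{0,\phi})\). Moreover the quantities \(a\) and \(b\) computed for \((d_{0,\phi},C')\) coincide with those for \((d_0,C)\). So the same \(\eta_C\) works for every cluster of that size.

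To make the choice canonical, set \(\eta_s\) to be, for one fixed representative \(C_s\) of size \(s+1\), the value \(\min\{1,a/b\}\) from a fixed \(d_0\). It lies in \((0,1]\) by construction. The cap at \(1\) is harmless, since shrinking \(\eta_C\) keeps the implication true.

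\textbf{Main obstacle.} The only delicate point is checking that the rescaled \(\beta d\) meets exactly the hypothesis of Lemma~\ref{lemma:single_cluster_extraction}. That lemma uses a \(\{C\}\)-strengthening, which leaves pairs inside \(\bC\) free, and this is precisely why the preceding lemma was needed: a plain partition strengthening would also control the \(\bC\)-internal pairs, and \(\beta d\) gives no such control. Everything else is the elementary ratio bound \(D/m\le\varrho(d,C)\).
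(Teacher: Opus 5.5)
Your proposal is correct and follows essentially the same route as the paper: partition richness gives a reference $d_0$ with $\{C,\bC\}\subseteq T(d_0)$, you form the ratio $a/b$, rescale to a $\beta d$ that is a $\{C\}$-strengthening of $d_0$, and conclude with Lemma~\ref{lemma:single_cluster_extraction} and scale invariance; under permutation invariance, permutations transport $d_0$ to any cluster of the same size. The only difference is cosmetic: you force $\eta_C\le 1$ by capping with $\min\{1,a/b\}$, which suffices for the existential statement, whereas the paper takes $\eta_s$ to be the supremum of these ratios and uses the uniform dissimilarity and laminarity to show that this supremum is automatically at most $1$.
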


Applying Lemma~\ref{lemma:core_hierarchy} to an admissible method \(T\), define the separation margin sequence \(\bdeta=(\eta_s)_{1\le s\le n-2}\) as provided by Lemma~\ref{lemma:core_hierarchy}. 
Then every non-singleton proper cluster \(C\in\Tglob^{\bdeta}(d)\) satisfies $\varrho(d,C)<\eta_{|C|-1},$ and therefore belongs to \(T(d)\). 
Because both hierarchies contain the root and all singleton clusters, $\Tglob^{\bdeta}(d)\subseteq T(d)$ for every $d\in\cD(\cX)$, and hence $\Tglob^{\bdeta}\sqsubseteq T.$ This proves Theorem~\ref{thm:core_hierarchy}.

\begin{proof}[Proof of Lemma~\ref{lemma:core_hierarchy}]
By partition richness, choose \(d_0\in\mathcal D(\cX)\) such that $\{C,\bC\}\subseteq T(d_0)$. 
Define $a:=\min_{\substack{x,y\in C\\x\neq y}} d_0(x,y)$ and $b:=\max_{x\in C,\ z\in\bC} d_0(x,z).$ 
Both \(a\) and \(b\) are positive. Set $\eta_C:=\frac{a}{b}.$ 

Now let \(d\) be a dissimilarity satisfying \(\varrho(d,C)<\eta_C\). 
Write $M:=\max_{x,y\in C} d(x,y)$ and $m:=\min_{x\in C,\ z\in\bC} d(x,z)$.
Then $\frac{M}{m}=\varrho(d,C)<\frac{a}{b}$, and hence $\frac{b}{m}<\frac{a}{M}$. 
Next, choose \(\beta>0\) such that $\frac{b}{m}<\beta<\frac{a}{M}$.
This choice ensures that,
\[
\beta d(x,y)\le \beta M<a\le d_0(x,y), \quad \text{for all } x,y \in C, \text{ and}
\]
\[
\beta d(x,z)\ge \beta m>b\ge d_0(x,z) \quad \text{for all } x \in C, z\in\bC.
\]
Thus, by Lemma~\ref{lemma:single_cluster_extraction}, applied to \(\beta d\), we get $C\in T(\beta d).$ 
Finally, scale invariance gives \(T(\beta d)=T(d)\), hence \(C\in T(d)\).

Assume now that \(T\) is permutation invariant. For every non-singleton proper cluster \(C\subsetneq \cX\), define
\[
\Gamma_T(C)
:=
\sup_{\substack{d_0\in\cD(\cX)\\ \{C,\bC\}\subseteq T(d_0)}} \frac{\min_{\substack{x,y\in C\\ x\neq y}} d_0(x,y)}{\max_{x\in C,\ z\in\bC} d_0(x,z)}.
\]
By the previous paragraph, for every \(d\in\cD(\cX)\), we have: 
\(
\varrho(d,C)<\Gamma_T(C)
\Longrightarrow C\in T(d).
\)

 We now prove that \(\Gamma_T(C)\) depends only on the cardinality of \(C\).
Let \(C,C'\subsetneq\cX\) be two nontrivial subsets such that \(|C|=|C'|\). 
Because \(\cX\) is finite, there exists a permutation \(\phi\in\Pi(\cX)\) such that $\phi(C)=C'.$ 
Because permutation invariance of \(T\) implies $\Gamma_T(\phi(C))=\Gamma_T(C)$, we have
\(
\Gamma_T(C')=\Gamma_T(C).
\)
Hence \(\Gamma_T(C)\) is constant over all subsets of \(\cX\) having the same cardinality, and thus depends only on \(|C|\).
Thus, for \(1\le s\le |\cX|-2\), we may define
\[
\eta_{s}:=\Gamma_T(C)
\]
for any subset \(C\subseteq \cX\) with \(s=|C|-1\). This is well-defined by the preceding argument, and, for every \(C\subseteq \cX\),
\[
\varrho(d,C)<\eta_{|C|-1}
\quad\Longrightarrow\quad
C\in T(d).
\]

It remains to prove that, for \(1\le s\le |\cX|-2\), one has $0<\eta_{s}\le 1.$ 
Observe that $\eta_{s} > 0$ follows from partition richness, as for each \(C\) there exists \(d_0\) such that \(\{C,\bC\}\subseteq T(d_0)\), and the corresponding ratio is strictly positive.

To establish $\eta_{s}\le 1$, suppose by contradiction that \(\eta_{s}>1\) for some \(1\le s\le |\cX|-2\). Let \(d\) be the uniform dissimilarity on \(\cX\), namely \(d(x,y)=1\) for all \(x\neq y\). Then \(\varrho(d,C)=1<\eta_{s}\) for every subset \(C\subsetneq \cX\) of size \(s+1\). Hence every such \(C\) belongs to \(T(d)\).
But there exist two subsets of \(\cX\) of size \(s+1\) that overlap without either containing the other.
For instance, since \(1\le s\le n-2\), the sets $C_1:=\{1,\ldots,s+1\}$ and $C_2:=\{2,\ldots,s+2\}$ both have cardinality \(s+1\), overlap nontrivially, and neither contains the other.
This contradicts the laminarity of \(T(d)\). Therefore \(\eta_{s}\le 1\).
\end{proof}

\subsection{Additional Lemmas}

\begin{lemma}
\label{lem:uhr_pc_implies_strong_core}
We have \( \axiuhr \wedge \axipc \triangleright \axicore^1\). 
\end{lemma}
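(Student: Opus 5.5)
To show that refinement on ultrametrics together with partition consistency forces $\Tglob\sqsubseteq T$, we fix $d\in\cD(\cX)$ and a non-singleton proper cluster $C$ with $\varrho(d,C)<1$, and we aim to prove $C\in T(d)$. The root and singletons are automatic. The plan imitates the proof of Lemma~\ref{lemma:core_hierarchy}. There, partition richness supplied a reference dissimilarity $d_0$ with $\{C,\bC\}\subseteq T(d_0)$ but with an uncontrolled margin $a/b$. Here we instead pick the reference $d_0$ to be an ultrametric whose margin can be pushed arbitrarily close to $1$, and this is exactly what produces the unit margin.

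\emph{Step 1 (reference ultrametric).} Since $\varrho(d,C)<1$, we have $M:=\max_{x,y\in C}d(x,y)<m:=\min_{x\in C,z\in\bC}d(x,z)$. Choose constants $M<a<b<m$; for instance $a=M+(m-M)/3$ and $b=M+2(m-M)/3$. Define the dissimilarity $u$ by
\[
u(x,y)=a \text{ for distinct } x,y\in C,\quad u(x,y)=a \text{ for distinct } x,y\in\bC,\quad u(x,y)=b \text{ otherwise}.
\]
This is an ultrametric, and its hierarchy $\Psi_u$ contains both $C$ and $\bC$, each being a ball of radius $a$. Hence refinement on ultrametrics gives $\{C,\bC\}\subseteq\Psi_u\subseteq T(u)$. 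If $|\bC|=1$, then $\bC$ is a singleton, and the construction still works with the within-$\bC$ clause vacuous.

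\emph{Step 2 (transport to $d$).} We have $d(x,y)\le M<a=u(x,y)$ for $x,y\in C$, and $d(x,z)\ge m>b=u(x,z)$ across the cut. Thus $d$ is a $\{C\}$-strengthening of $u$. Lemma~\ref{lemma:single_cluster_extraction}, which uses only partition consistency, then yields $C\in T(d)$. The pairs inside $\bC$ are unconstrained, and that lemma absorbs them through the auxiliary $d_1$ and the partition $\cP_C$. No rescaling and hence no scale invariance is needed, because the constants $a,b$ are chosen adapted to $d$ itself.

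\emph{Expected obstacle.} The only delicate point is verifying that $u$ is a genuine ultrametric inducing a hierarchy that contains $C$ and $\bC$. All triangles inside $C$ or inside $\bC$ are equilateral. A triangle with two vertices on one side and one on the other has side lengths $(a,b,b)$, so its two largest sides are equal, and the strong triangle inequality holds. Finally, $B_u(x,a)=C$ for $x\in C$ and $B_u(z,a)=\bC$ for $z\in\bC$. Beyond this check, everything reduces to Lemma~\ref{lemma:single_cluster_extraction}.
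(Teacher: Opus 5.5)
Your proof is correct and follows essentially the same route as the paper. You build an ultrametric $u$ in which $C$ is a ball, with $u\ge d$ inside $C$ and $u\le d$ across the cut, obtain $C\in\Psi_u\subseteq T(u)$ from refinement on ultrametrics, and transfer $C$ to $T(d)$ via partition consistency. The only difference is minor: the paper sets $u$ inside $\bC$ to $\gamma\le\min_{z\ne w\in\bC}d(z,w)$, so $d$ is directly a $\cP_C$-strengthening of $u$ and one application of partition consistency suffices, whereas you make $\bC$ a ball too and let Lemma~\ref{lemma:single_cluster_extraction} absorb the pairs inside $\bC$.
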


\begin{proof}
Let $T \in \cH_{\axiuhr \wedge \axipc}(\cX).$ We prove that \(\Tglob \sqsubseteq T\). 
Fix \(d \in \cD(\cX)\) and \(C \in \Tglob(d)\). The cases
\(C=\cX\) and \(\abs{C}=1\) are immediate because every hierarchy
contains the root and all singleton clusters. We therefore assume that \(C\subsetneq\cX\) and \(\abs{C}\geq 2\). 
Define $\alpha := \max_{\substack{x,y\in C\\x\neq y}} d(x,y)$ and $\beta := \min_{\substack{x\in C\\z\in\bC}} d(x,z).$ 
Because \(C\in\Tglob(d)\), we have \(\alpha<\beta\). Choose\(\alpha<\widetilde{\beta}<\beta\)
and set
\(\gamma:= \min\left( \{\widetilde{\beta}\} \cup \{d(z,w):z,w\in\bC,\ z\neq w\} \right). \)
In particular, \(0<\gamma\leq\widetilde{\beta}\).

Define \(u\in\cD(\cX)\) by \(u(x,x):=0\) and, for distinct
\(x,y\in\cX\),
\[
u(x,y)
:=
\begin{cases}
\alpha,
    & x,y\in C,\\[1mm]
\widetilde{\beta},
    & x\in C,\ y\in\bC
      \text{ or }y\in C,\ x\in\bC,\\[1mm]
\gamma,
    & x,y\in\bC.
\end{cases}
\]
Because $\alpha<\widetilde{\beta}$ and $\gamma\le\widetilde{\beta}$, \(u\) is an ultrametric: every triangle meeting both \(C\) and \(\bC\) has two dissimilarities equal to \(\widetilde{\beta}\), while the third is
either \(\alpha\) or \(\gamma\), and triangles contained entirely in \(C\) or in \(\bC\) are equilateral.

Recall the ultrametric-ball representation in Equation~\eqref{eq:ultrametric_hierarchy}. 
For every \(x\in C\), $B_u(x,\alpha)=C$ because \(u(x,y)\le\alpha\) for \(y\in C\), whereas \(u(x,z)=\widetilde{\beta}>\alpha\) for \(z\in\bC\). 
Hence \(C\in\Psi_u\). 
As \(T\) satisfies refinement on ultrametrics, $C\in\Psi_u\subseteq T(u).$

Now consider the partition
\(
\cP_C
:=
\{C\}
\cup
\bigl\{\{z\}:z\in\bC\bigr\}.
\)
Because every hierarchy contains all singleton clusters, we have
\(
\cP_C\subseteq T(u).
\) 
We claim that \(d\) is a \(\cP_C\)-strengthening of \(u\). Indeed, for
distinct \(x,y\in C\),
\(
d(x,y)\leq\alpha=u(x,y).
\)
For \(x\in C\) and \(z\in\bC\),
\(
d(x,z)\geq\beta>\widetilde{\beta}=u(x,z).
\)
Finally, for distinct \(z,w\in\bC\), the points \(z\) and \(w\) belong
to different singleton blocks of \(\cP_C\), and the definition of
\(\gamma\) gives
\(
d(z,w)\geq\gamma=u(z,w).
\)

Partition consistency therefore yields
\(
\cP_C\subseteq T(d),
\)
and hence \(C\in T(d)\). Because \(d\in\cD(\cX)\) and
\(C\in\Tglob(d)\) were arbitrary, we conclude that
\(
\Tglob\subseteq T.
\)
\end{proof}

\begin{lemma}
\label{lemma:core-under-ord}
We have
\(
 \axiadm \wedge \axiord \triangleright \axicore^1,
\)
i.e., for any $T \in \cH_{\axiadm \wedge \axiord}$, $\Tglob \sqsubseteq T$ holds.
\end{lemma}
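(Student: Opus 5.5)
The plan is to combine the cluster-wise backbone of Lemma~\ref{lemma:core_hierarchy} with order invariance. Order invariance lets us push any globally separated cluster, with separation ratio merely below $1$, down to an arbitrarily small separation ratio, without changing the output of $T$. Permutation invariance is not needed. We only use that $T$ satisfies scale invariance, partition richness, partition consistency and order invariance.

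Fix $T\in\cH_{\axiadm\wedge\axiord}(\cX)$, a dissimilarity $d\in\cD(\cX)$, and a cluster $C\in\Tglob(d)$. If $C=\cX$ or $|C|=1$, there is nothing to prove, so assume $C\subsetneq\cX$ and $|C|\ge2$. Lemma~\ref{lemma:core_hierarchy} gives a constant $\eta_C>0$ such that $\varrho(d',C)<\eta_C$ implies $C\in T(d')$ for every $d'\in\cD(\cX)$. Set $\alpha:=\max_{x,y\in C}d(x,y)$ and $\beta:=\min_{x\in C,\,z\in\bC}d(x,z)$. Since $C\in\Tglob(d)$, we have $\varrho(d,C)=\alpha/\beta<1$, so $\alpha<\beta$.

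Next, choose $\varepsilon>0$ with $\varepsilon\alpha/\beta<\eta_C$; since $\varepsilon\alpha<\beta$ as well, the following piecewise-linear map is continuous and strictly increasing:
\[
g(t):=\begin{cases}\varepsilon t, & 0\le t\le\alpha,\\[0.5ex] \varepsilon\alpha+\dfrac{(\beta-\varepsilon\alpha)(t-\alpha)}{\beta-\alpha}, & \alpha\le t\le\beta,\\[1ex] t, & t\ge\beta.\end{cases}
\]
It satisfies $g(0)=0$, so $d':=g\circ d\in\cD(\cX)$. Because $g$ is increasing, every intra-$C$ dissimilarity satisfies $d'(x,y)\le g(\alpha)=\varepsilon\alpha$, and every cross dissimilarity satisfies $d'(x,z)\ge g(\beta)=\beta$. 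Hence $\varrho(d',C)\le\varepsilon\alpha/\beta<\eta_C$, and Lemma~\ref{lemma:core_hierarchy} yields $C\in T(d')$. Order invariance then gives $T(d)=T(g\circ d)\ni C$.

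Since $d$ and $C$ were arbitrary, $\Tglob(d)\subseteq T(d)$ for every $d$, that is, $\Tglob\sqsubseteq T$. The only point needing care is that $g$ must be a genuine strictly increasing map on all of $\R_{\ge0}$ with $g(0)=0$. The pairs not involving $C$ take arbitrary values, but they are simply transported by $g$ and never enter $\varrho(\cdot,C)$. I do not expect any step to be a real obstacle, since the substantive work is already contained in Lemma~\ref{lemma:core_hierarchy}.
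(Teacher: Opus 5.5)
Your proof is correct, and it follows the paper's core idea: use order invariance to push the separation ratio of a globally separated cluster below the threshold that forces membership. The execution differs in two ways.

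\emph{Where the threshold comes from and how $g$ is built.} The paper first invokes Theorem~\ref{thm:core_hierarchy} to obtain a full margin sequence $\bdeta$ with $\Tglob^{\bdeta}\sqsubseteq T$, and sets $\eta_{\min}=\min_s\eta_s$. It then builds a single strictly increasing $g$ that depends only on $d$: the $i$-th smallest distinct value of $d$ is sent to $q^i$, with $q>1/\eta_{\min}$. This upgrades the whole hierarchy at once, giving $\Tglob(d)\subseteq\Tglob^{\bdeta}(g\circ d)\subseteq T(g\circ d)=T(d)$. You instead work cluster by cluster. You use the cluster-dependent constant $\eta_C$ from the first part of Lemma~\ref{lemma:core_hierarchy} and a piecewise-linear $g$ tailored to $C$.

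\emph{What each route buys.} The paper's version gives one uniform transformation and a slightly more structural inclusion. Yours never needs size-only margins, so it never uses permutation invariance. Since order invariance already implies scale invariance, your argument proves $\axiord\wedge\axipc\wedge\axipr\triangleright\axicore^1$. That is exactly the form asserted in Proposition~\ref{prop:general}(v). The paper's proof reaches it through Theorem~\ref{thm:core_hierarchy}, which relies on permutation invariance; taking $\min_C\eta_C$ over the finitely many $C$ would also remove this dependence there.

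\emph{One small repair.} Without permutation invariance, the $\eta_C$ supplied by Lemma~\ref{lemma:core_hierarchy} is not guaranteed to be at most $1$. So $\varepsilon\alpha<\beta$ does not follow from $\varepsilon\alpha/\beta<\eta_C$ alone, and your phrase ``since $\varepsilon\alpha<\beta$ as well'' needs a justification. Simply also require $\varepsilon\le 1$, or replace $\eta_C$ by $\min\{\eta_C,1\}$. Either choice keeps the middle piece of $g$ increasing and makes $g$ strictly increasing overall.
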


\begin{proof} Let \(T\) be admissible and order invariant. 
 By Theorem~\ref{thm:core_hierarchy}, there exists a separation margin
sequence $\bdeta=(\eta_s)_{1\le s\le n-2}$ such that $\Tglob^{\bdeta}\sqsubseteq T.$ 
Set $\eta_{\min}:=\min_{1\le s\le n-2}\eta_s>0.$ 

Fix \(d\in\cD(\cX)\). 
We construct a strictly increasing function \(g:\R_{\ge0}\to\R_{\ge0}\), with \(g(0)=0\), such that $\Tglob(d)\subseteq \Tglob^{\bdeta}(g\circ d).$ 

Let $0<\delta_1<\delta_2<\cdots<\delta_m$ be the distinct positive values taken by \(d\). Choose $q>\frac{1}{\eta_{\min}},$ 
and prescribe $g(0):=0$ and $g(\delta_i):=q^i$ for all $i\in[m].$ 
Because the sequences $(\delta_i)$ and $(q^i)$ are strictly increasing, these prescribed values can be extended to a strictly
increasing function $g:\R_{\ge0}\to\R_{\ge0}$ with \(g(0)=0\); for instance, using piecewise-linear interpolation on \([0,\delta_m]\) and extending linearly with positive slope beyond \(\delta_m\).

Now let \(C\in\Tglob(d)\) be a nontrivial proper cluster. 
By definition, $d(x_1,y)<d(x_2,z)$ for every \(x_1,x_2,y\in C\) and \(z\notin C\) whenever
\(d(x_1,y)>0\). 
Hence, if $d(x_1,y)=\delta_i$ and $d(x_2,z)=\delta_j,$ then \(i<j\), and therefore
\[
\frac{g(d(x_1,y))}{g(d(x_2,z))}
\weq q^{i-j}
\wle \frac1q
\ < \ \eta_{\min}
\wle \eta_{|C|-1}.
\]
If \(d(x_1,y)=0\), the same inequality holds trivially because \(g(0)=0\). 

Therefore, $\varrho(g\circ d,C) < \eta_{|C|-1}$, and hence $C\in\Tglob^{\bdeta}(g\circ d).$ 
The root and singleton clusters belong to both hierarchies by definition, so
$\Tglob(d)
\subseteq
\Tglob^{\bdeta}(g\circ d).
$
Using \(\Tglob^{\bdeta}\sqsubseteq T\), we further obtain 
\[
\Tglob(d)
\subseteq
\Tglob^{\bdeta}(g\circ d)
\subseteq
T(g\circ d).
\]
Finally, order invariance of \(T\) gives $T(g\circ d)=T(d),$ and therefore $\Tglob(d)\subseteq T(d).$ 
Because \(d\in\cD(\cX)\) was arbitrary, $\Tglob\sqsubseteq T.$ 
\end{proof}

\section{Empirical Size of Backbone Hierarchy}
\label{app:size_Tglob}
We quantify the size of the backbone hierarchy across four standard scikit-learn datasets~\citep {scikit-learn}: Iris, Wine, Breast Cancer, and Digits. We remove exact duplicate feature vectors, standardize each feature, and use Euclidean dissimilarities.
We count only nontrivial clusters \(C\) satisfying \(1<|C|<n\), and report the ratio of the number returned by $\Tglob$ to the number returned by $\Tsl$.

\begin{table}[!ht]
\centering
\caption{Size of $\Tglob$ relative to the non-binary single-linkage hierarchy
$\Tsl$.}
\label{tab}
\begin{tabular}{lrrrr}
\toprule
Dataset & $n$ & $\Tglob$ & $\Tsl$ & Ratio \\
\midrule
Iris & 149 & 42 & 145 & 0.290 \\
Wine & 178 & 43 & 176 & 0.244 \\
Breast Cancer & 569 & 107 & 567 & 0.189 \\
Digits & 1797 & 426 & 1795 & 0.237 \\
\bottomrule
\end{tabular}
\end{table}

Across these datasets, $\Tglob$ contains approximately \(19\%\)--\(29\%\) of the nontrivial clusters in $\Tsl$ (\(23.0 \%\) in aggregate). Thus, the backbone is nontrivial on these examples.